\def\eqref#1{(\ref{#1})}
\def\1{\bm{1}}
\DeclareMathAlphabet{\mathsfit}{\encodingdefault}{\sfdefault}{m}{sl}
\SetMathAlphabet{\mathsfit}{bold}{\encodingdefault}{\sfdefault}{bx}{n}
\newcommand{\mr}[2]{\multirow{#1}{*}{#2}}
\newcommand{\mc}[3]{\multicolumn{#1}{#2}{#3}}
\newtheorem{theorem}{Theorem}[section]
\newtheorem{lemma}[theorem]{Lemma}
\theoremstyle{definition}
\newtheorem{definition}[theorem]{Definition}
\theoremstyle{remark}
\newtheorem{remark}[theorem]{Remark}
\newcommand{\udfsection}[1]{\noindent\textbf{#1}\, }
\newcommand{\gongshi}[1]{{\small #1}}
\newcommand{\gcn}{\text{GCN}}
\newcommand{\norm}{\text{Norm}}
\newif\ifupdate\updatefalse
\title{Accurate and Scalable Graph Neural Networks via Message Invariance}
\author{
Zhihao Shi$\,\,$\textsuperscript{1} ,
Jie Wang\thanks{Corresponding author: jiewangx@ustc.edu.cn}$\,\,$\textsuperscript{1},
Zhiwei Zhuang$\,\,$\textsuperscript{1}, 
Xize Liang$\,\,$\textsuperscript{1}, 
Bin Li$\,\,$\textsuperscript{1}, 
Feng Wu$\,\,$\textsuperscript{1}, \\
\textsuperscript{1} MoE Key Laboratory of Brain-inspired Intelligent Perception and Cognition,
\\
University of Science and Technology of China
}
\begin{document}
\iclrfinalcopy

\maketitle

\begin{abstract}
    Message passing-based graph neural networks (GNNs) have achieved great success in many real-world applications.
    For a sampled mini-batch of target nodes, the message passing process is divided into two parts: \textbf{m}essage \textbf{p}assing between nodes with\textbf{i}n the \textbf{b}atch ($\text{MP}_{\text{IB}}$) and \textbf{m}essage \textbf{p}assing from nodes \textbf{o}utside the \textbf{b}atch to those within it ($\text{MP}_{\text{OB}}$).
    However, $\text{MP}_{\text{OB}}$ recursively relies on higher-order out-of-batch neighbors, leading to an exponentially growing computational cost with respect to the number of layers.
    Due to the \textit{neighbor explosion}, the whole message passing stores most nodes and edges on the GPU such that many GNNs are infeasible to large-scale graphs.
    To address this challenge, we propose an accurate and fast mini-batch approach for large graph transductive learning, namely \textbf{t}op\textbf{o}logical com\textbf{p}ensation (TOP), which obtains the outputs of the whole message passing solely through $\text{MP}_{\text{IB}}$, without the costly $\text{MP}_{\text{OB}}$.
    The major pillar of TOP is a novel concept of \textit{message invariance},
    which defines \textit{message-invariant transformations} to convert costly $\text{MP}_{\text{OB}}$ into fast $\text{MP}_{\text{IB}}$.
    This ensures that the modified $\text{MP}_{\text{IB}}$ has the same output as the whole message passing.
    Experiments demonstrate that TOP is significantly faster than existing mini-batch methods by order of magnitude on vast graphs (millions of nodes and billions of edges) with limited accuracy degradation.
\end{abstract}

\section{Introduction}\label{sec:intro}

Message passing-based graph neural networks (GNNs) have been successfully applied to many practical applications involving graph-structured data, such as social network prediction \citep{graphsage, gcn, social2}, chip design \cite{chip_design1,chip_design2,chip_design3,chip_design4}, combinatorial optimization \cite{combinatorial_optimization1,combinatorial_optimization2,combinatorial_optimization3,combinatorial_optimization4,combinatorial_optimization5,combinatorial_optimization6,combinatorial_optimization7,combinatorial_optimization8}, drug reaction \citep{gnn_reaction1, gnn_reaction2, durg_design1}, and recommendation systems \citep{gnn_recommandation2, gnn_social}.
The key idea of GNNs is to iteratively update the embeddings of each node based on its local neighborhood.
Thus, as these iterations progress, each node embedding encodes more and more information from further reaches of the graph \citep[Chap. 5]{grl}.

However, training GNNs on a large-scale graph is challenging due to the well-known \textit{neighbor explosion} problem.
Specifically, the embedding of a node at the \gongshi{$l$}-th GNN layer depends on the embeddings of its local neighborhood at the \gongshi{$(l-1)$}-th GNN layer. Thus, around the target mini-batch nodes, these message passing iterations of an \gongshi{$L$}-layer GNN form a tree structure by unfolding their \gongshi{$L$}-hop neighborhoods \citep[Chap. 5]{grl}, whose size exponentially increases with the GNN depth \gongshi{$L$} (see Figure \ref{fig:origin_gnn}).
The exploded source neighborhoods may contain most nodes in the large-scale graph, leading to expensive computational costs.

To alleviate this problem, recent graph sampling techniques approximate the whole message passing with the small size of the source neighborhoods \citep[Chap. 7]{dlg}. 
For example, node-wise \citep{graphsage, vrgcn, labor} and layer-wise \citep{fastgcn, ladies, adapt} sampling recursively sample a small set of local neighbors over message passing layers.
The expectation of the recursive sampling obtains the whole message passing and thus the recursive sampling is accurate and provably convergent \citep{vrgcn}.
Different from the recursive fashion, subgraph sampling \citep{cluster_gcn, graphsaint, gas, shadow_gnn} adopts a cheap and simple one-shot sampling fashion, i.e., sampling the same subgraph induced by a mini-batch for different GNN layers.
It preserves message passing between in-batch nodes ($\text{MP}_{\text{IB}}$) and eliminates message passing from out-of-batch neighbors to in-batch nodes ($\text{MP}_{\text{OB}}$), achieving a linear complexity with respect to the number of GNN layers.

Nonetheless, accuracy and efficiency are two important but conflicting factors for existing graph sampling techniques. Specifically, accurate recursive sampling maintains the whole message passing at the expense of efficiency, while fast one-shot sampling eliminates $\text{MP}_{\text{OB}}$ at the expense of accuracy.
This motivates us to develop an accurate and fast mini-batch method for GNNs to approximate the outputs of the whole message passing solely through $\text{MP}_{\text{IB}}$ with marginal errors.

\begin{figure*}[t]
    \vspace{-30pt}
    \centering
    \subfigure[Original GNNs]{
        \includegraphics[width = 0.55\textwidth]{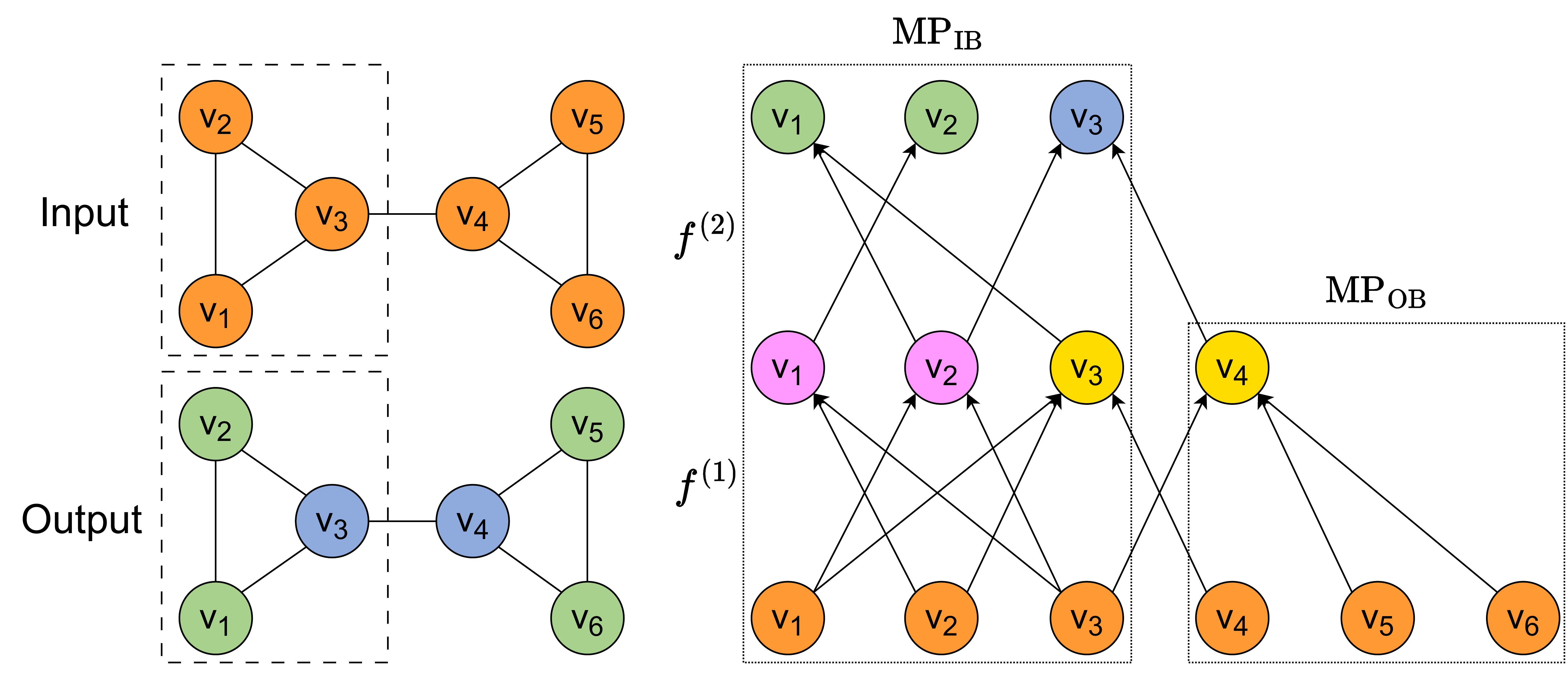} \label{fig:origin_gnn}
    }
    \subfigure[Subgraph Sampling]{
        \includegraphics[width = 0.40\textwidth]{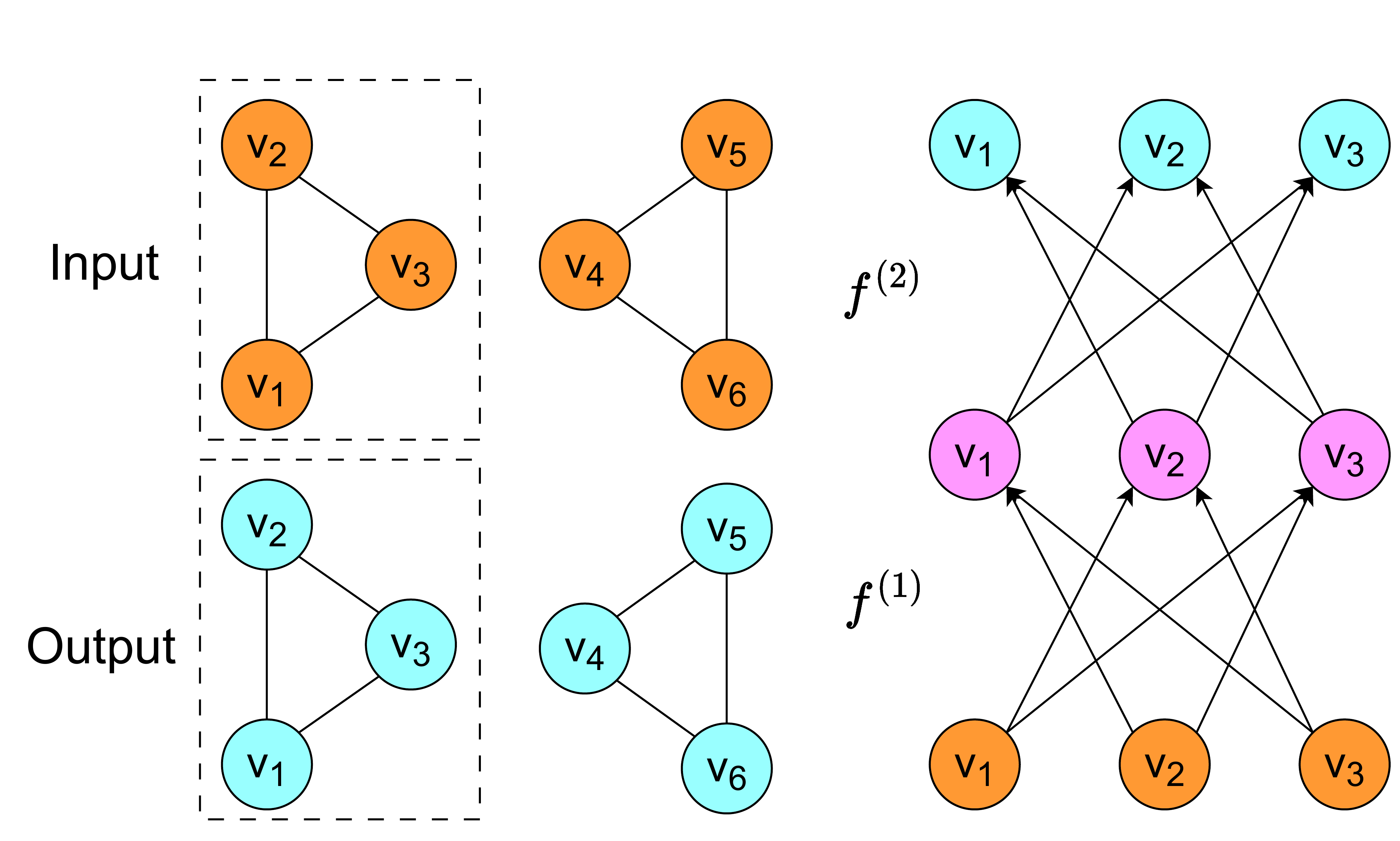}
    }
    \subfigure[TOP]{
        \includegraphics[width = 0.55\textwidth]{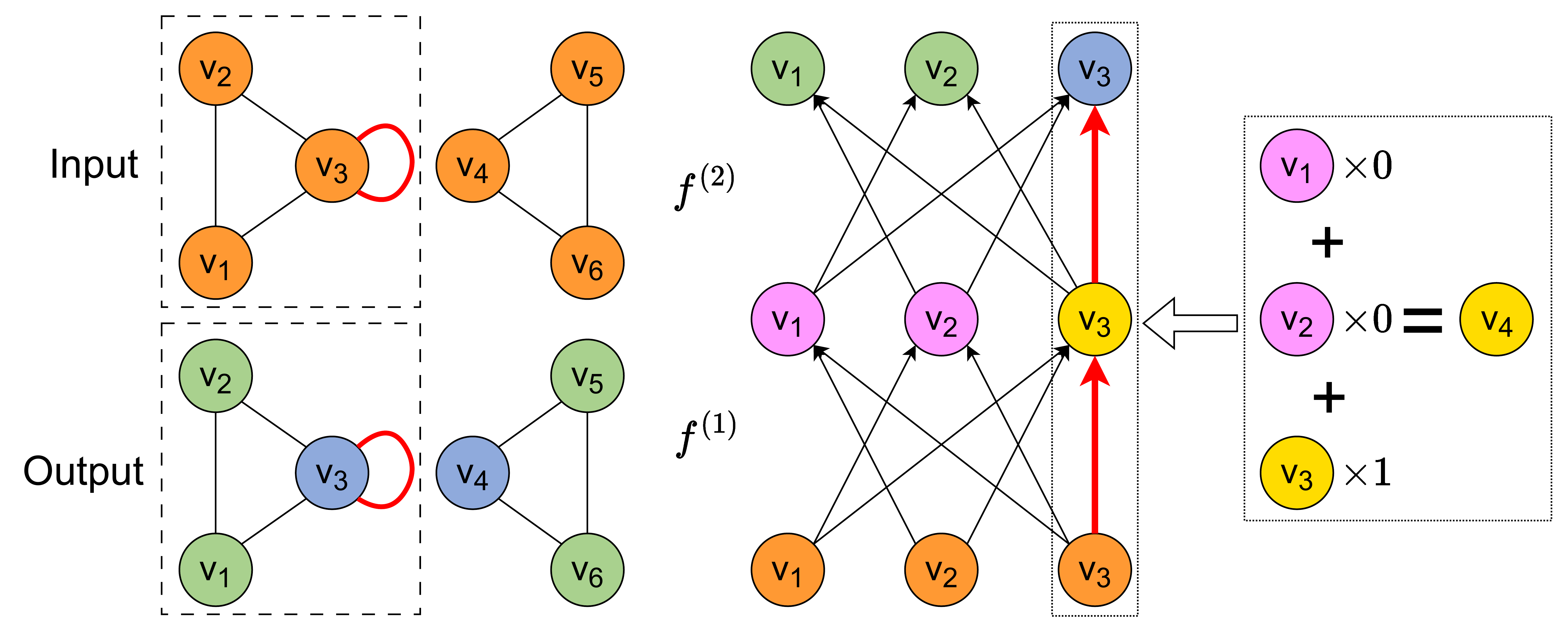}
    }
    \subfigure{
        \includegraphics[width = 0.40\textwidth]{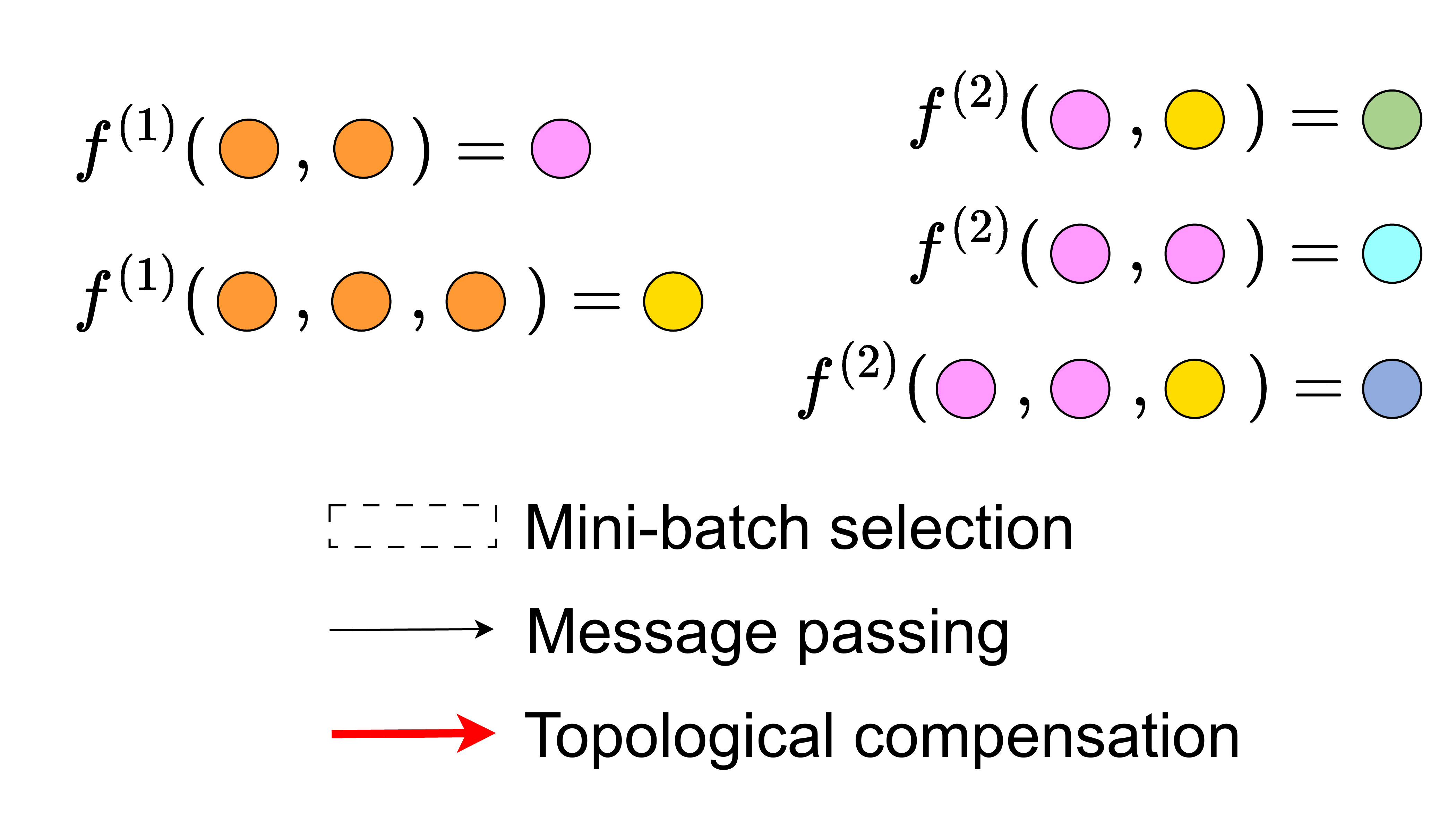}
    }
    \vspace{-6pt}
    \caption{\textbf{Mini-batch processing of original GNNs, subgraph sampling, and TOP.} Given a mini-batch, the computational costs of original GNNs exponentially increase with GNN depth (a). To address this challenge, many subgraph sampling methods preserve message passing between the in-batch nodes ($\text{MP}_{\text{IB}}$) and eliminate message passing from out-of-batch neighbors to the in-batch nodes ($\text{MP}_{\text{OB}}$) to reduce the computational costs (b). However, the final embeddings of subgraph sampling are usually different from the result of the original GNNs.
    By noticing the message invariance \gongshi{$\mathbf{h}_{4} = 0 \cdot \mathbf{h}_{1} + 0 \cdot \mathbf{h}_{2} + 1 \cdot \mathbf{h}_{3}$},
    TOP converts $\text{MP}_{\text{OB}}$ \gongshi{$v_4\rightarrow v_3$} into $\text{MP}_{\text{IB}}$ \gongshi{$v_3\rightarrow v_3$} without approximation errors in the example (c).
    }
    \label{fig:top}
\end{figure*}

In this paper, we first propose a novel concept of \textit{message invariance}, which defines message-invariant transformations to convert $\text{MP}_{\text{OB}}$ into $\text{MP}_{\text{IB}}$, ensuring that the modified $\text{MP}_{\text{IB}}$ has the same output as the whole message passing.
Figure \ref{fig:top} shows a motivating example for message invariance, where converting $\text{MP}_{\text{OB}}$ \gongshi{$v_4\rightarrow v_3$} to $\text{MP}_{\text{IB}}$ \gongshi{$v_3\rightarrow v_3$} (the red edge) does not affect the output of GNNs.
Although the resulting subgraphs are different from the original graph, the in-batch embeddings and corresponding computation graphs are always the same.
We conduct extensive experiments to show the approximation of message invariance is effective in various real-world datasets (see Section \ref{subsec:sampling_bias_of_different_methods})

Inspired by the message-invariant transformations, we propose a fast subgraph sampling method, namely \textbf{t}op\textbf{o}logical com\textbf{p}ensation (TOP), which is applicable to various real-world graphs.
Specifically, TOP estimates the message invariance using a linear transformation, which learns the linear independence between embeddings of the in-batch nodes and their out-of-batch neighbors.
In Figure \ref{fig:top}, the out-of-batch embedding of \gongshi{$v_4$} is a linear combination of the in-batch embeddings of \gongshi{$(v_1, v_2, v_3)$} with coefficients \gongshi{$(0,0,1)$}.
We estimate the coefficients using a simple and efficient linear regression on sampled basic embeddings (e.g. the embeddings in GNNs with random initialization).
We further show that TOP achieves the convergence rate of \gongshi{$\mathcal{O}(\varepsilon^{-4})$} to reach an \gongshi{$\varepsilon$}-approximate stationary point (see Theorem \ref{thm:convergence_conv}), which is significantly faster than \gongshi{$\mathcal{O}(\varepsilon^{-6})$} of existing subgraph sampling methods \citep{lmc}.
We conduct extensive experiments on graphs with various sizes to demonstrate that TOP is significantly faster than existing mini-batch methods with limited accuracy degradation (see Figures \ref{fig:runtime} and \ref{fig:labor}).

\section{Related Work}\label{sec:related_work}

In this section, we discuss some works related to our proposed method.

\udfsection{Node-wise sampling.} Node-wise sampling \citep{graphsage, vrgcn, graphfm} aggregates messages from a subset of uniformly sampled neighborhoods at each GNN layer, which decreases the bases in the exponentially increasing dependencies.
The idea is originally proposed in GraphSAGE \citep{graphsage}. VR-GCN \citep{vrgcn} further alleviates the bias and variance by historical embeddings, and then shows that its convergence rate to reach an \gongshi{$\varepsilon$}-approximate stationary point is \gongshi{$N=\mathcal{O}(\varepsilon^{-4})$}, where \gongshi{$N$} denotes the number of iterations in Theorem 2 in \citep{vrgcn}.
GraphFM-IB further alleviates the staleness of the historical embeddings based on the idea of feature momentum.
Although the node-wise sampling methods achieve the convergence rate of \gongshi{$\mathcal{O}(\varepsilon^{-4})$}, their computational complexity at each step is still exponentially increasing due to the neighborhood explosion issue.

\udfsection{Layer-wise sampling.} To avoid the exponentially growing computation of node-wise sampling, layer-wise sampling \citep{fastgcn, ladies, adapt} samples a fixed number of nodes for each GNN layer and then uses importance sampling (IS) to reduce variance.
However, the optimal distribution of IS depends on the up-to-date embeddings, which are expensive.
To tackle this problem, FastGCN \citep{fastgcn} proposes to approximate the optimal distribution of IS by the normalized adjacency matrix.
Adapt \citep{adapt} proposes a learnable sampled distribution to further alleviate the variance.
Nevertheless, as the above-mentioned methods sample nodes independently in each GNN layer, the sampled nodes from two consecutive layers may be connected \citep{ladies}.
Thus, LADIES \citep{ladies} consider the dependency of sampled nodes between layers by one step forward.
By combining the advantages of node-wise and layer-wise sampling approaches using Poisson sampling, LABOR \citep{labor} significantly accelerates convergence under the same node sampling budget constraints.


\udfsection{Subgraph sampling.} Subgraph sampling methods sample a mini-batch and then construct the subgraph based on the mini-batch \citep[Chap. 7]{dlg}.
Thus, we can directly run GNNs on the subgraphs.
One of the major challenges is to efficiently encode neighborhood information of the subgraph.
To tackle this problem, one line of subgraph sampling is to design subgraph samplers to alleviate the inter-connectivity between subgraphs.
For example, CLUSTER-GCN \citep{cluster_gcn} propose subgraph samplers based on graph clustering methods (e.g., METIS \citep{metis1} and Graclus \citep{graclus}) and GRAPHSAINT propose edge, node, or random-walk based samplers.
SHADOW \citep{shadow_gnn} proposes to extract the \gongshi{$L$}-hop neighbors of a mini-batch and then select an important subset from the \gongshi{$L$}-hop neighbors.
IBMB \citep{ibmb} proposes a novel subgraph sampler where the subgraphs are induced by the mini-batches with high influence scores, such as personalized PageRank scores.
Another line of subgraph sampling is to design efficient compensation for the messages from the neighborhood based on existing subgraph samplers.
For example, GAS \citep{gas} proposes historical embeddings to compensate for messages in forward passes and LMC \citep{lmc} further proposes historical gradients to compensate for messages in backward passes.
GraphFM-OB \citep{graphfm} alleviates the staleness of the historical embeddings based on the idea of feature momentum.
Besides the traditional optimization algorithm, SubMix \citep{submix} proposes a novel learning-to-optimize method for subgraph sampling, which parameterizes subgraph sampling as a convex combination of several heuristics and then learns to accelerate the training of subgraph sampling.

\section{Preliminaries}

We first introduce notations in Section \ref{sec:notations}. Then, we introduce graph neural networks and the neighbor explosion issue in Section \ref{sec:gnn}.

\subsection{Notations}\label{sec:notations}
A graph  \gongshi{$\mathcal{G}=(\mathcal{V}, \mathcal{E})$} is defined by a set of nodes \gongshi{$\mathcal{V}=\{1,2,\dots,n\}$} and a set of edges  \gongshi{$\mathcal{E}$} among these nodes.
Let  \gongshi{$(i,j)\in\mathcal{E}$} denote an edge going from node  \gongshi{$i\in\mathcal{V}$} to node  \gongshi{$j\in\mathcal{V}$}.
Let \gongshi{$(\mathcal{B}_1 \rightarrow \mathcal{B}_2)$} denote the set of edges \gongshi{$\{(i,j)|i \in \mathcal{B}_1, j \in \mathcal{B}_2, (i,j) \in \mathcal{E} \}$} from \gongshi{$\mathcal{B}_1$} to \gongshi{$\mathcal{B}_2$}.
Let \gongshi{$\mathcal{N}_i=\{j\in\mathcal{V}| (i,j)\in\mathcal{E}\}$} denote the neighborhood of node \gongshi{$i$}.
Let \gongshi{$\mathcal{N}_{\mathcal{B}} = (\cup_{i \in \mathcal{B}} \mathcal{N}_i) \cup \mathcal{B}$} denote the neighborhoods of a mini-batch \gongshi{$\mathcal{B}$} with itself.
Let \gongshi{$\mathcal{N}_{\mathcal{B}}^c = \mathcal{N}_{\mathcal{B}} - \mathcal{B}$} denote the out-of-batch neighbors of the mini-batch \gongshi{$\mathcal{B}$}.
We recursively define the set of \gongshi{$k$}-hop neighborhoods as \gongshi{$\mathcal{N}^k_\mathcal{B} = \mathcal{N}_{\mathcal{N}^{k-1}_{\mathcal{B}}}$} with \gongshi{$\mathcal{N}^1_{\mathcal{B}} = \mathcal{N}_{\mathcal{B}}$}.
The adjacency matrix is  \gongshi{$\mathbf{A} \in \mathbb{R}^{n \times n}$} with  \gongshi{$\mathbf{A}_{ij}=1$} if \gongshi{$(j,i)$} and \gongshi{$\mathbf{A}_{ij}=0$} otherwise.
Given sets \gongshi{$\mathcal{S}_1=(i_p)_{p=1}^{|\mathcal{S}_1|}, \mathcal{S}_2=(j_q)_{q=1}^{|\mathcal{S}_2|}$}, the submatrix \gongshi{$\mathbf{A}_{\mathcal{S}_1, \mathcal{S}_2}$} satisfies \gongshi{$[\mathbf{A}_{\mathcal{S}_1, \mathcal{S}_2}]_{p,q}=\mathbf{A}_{i_{p}, j_{q}}$}.
For a positive integer \gongshi{$L$}, \gongshi{$\llbracket L \rrbracket$} denotes \gongshi{$\{1,\ldots,L\}$}.

Let the boldface character \gongshi{$\mathbf{x}_{i} \in \mathbb{R}^{d_x}$} denote the feature of node \gongshi{$i$} with dimension \gongshi{$d_x$}. Let \gongshi{$\mathbf{h}_i\in\mathbb{R}^d$} be the \gongshi{$d$}-dimensional embedding of the node \gongshi{$i$}. Let \gongshi{$\mathbf{X} = (\mathbf{x}_1,\mathbf{x}_2,\dots,\mathbf{x}_n)^{\top} \in \mathbb{R}^{ n \times d_x }$} and \gongshi{$\mathbf{H}  = (\mathbf{h}_1,\mathbf{h}_2,\dots,\mathbf{h}_n)^{\top} \in \mathbb{R}^{ n \times d }$}.
We also denote the node features and embeddings of a mini-batch \gongshi{$\mathcal{B}=(i_k)_{k=1}^{|\mathcal{B}|}$} by \gongshi{$\mathbf{X}_{\mathcal{B}}  = (\mathbf{x}_{i_1}, \mathbf{x}_{i_2}, \dots, \mathbf{x}_{i_{|\mathcal{B}|}})^{\top} \in \mathbb{R}^{|\mathcal{B}| \times d_x}$} and \gongshi{$\mathbf{H}_{\mathcal{B}} \in \mathbb{R}^{|\mathcal{B}| \times d}  $} respectively.





\vspace{-2mm}

\subsection{Graph Convolutional Networks} \label{sec:gnn}

For simplicity of the derivation, we present our algorithm with graph convolutional networks (GCNs) \citep{gcn}. However, our algorithm is also applicable to arbitrary message passing-based GNNs (see Appendix \ref{sec:TOP_for_GNNs}).

A graph convolution layer is defined as
\begin{align}
     \mathbf{H}^{(l+1)} = f^{(l+1)}(\mathbf{H}^{(l)}, \widetilde{\mathbf{A}}) =\sigma(\mathbf{Z}^{(l+1)}  \mathbf{W}^{(l)}) =  \sigma(\widetilde{\mathbf{A}}\mathbf{H}^{(l)} \mathbf{W}^{(l)})
    ,\,\,(l+1)\in \llbracket L \rrbracket, \label{eqn:transformation_conv}
\end{align}
where \gongshi{$\widetilde{\mathbf{A}} = (\mathbf{D}+\mathbf{I})^{-1/2}(\mathbf{A}+\mathbf{I})(\mathbf{D}+\mathbf{I})^{-1/2}$} is the normalized adjacency matrix and \gongshi{$\mathbf{D}$} is the in-degree matrix (\gongshi{$\mathbf{D}_{uu}=\sum_{v}\mathbf{A}_{uv}$}).
The initial node feature is \gongshi{$\mathbf{H}^{(0)}=\mathbf{X}$}, \gongshi{$\sigma $} is an activation function, and \gongshi{$\mathbf{W}^{(l)}$} is a trainable weight matrix.
For simplicity, we denote the GNN parameters \gongshi{$\{\mathbf{W}^{(l)}\}_{l=0}^{L-1}$} by \gongshi{$\mathcal{W}$}.
Thus, GCNs take node features and the normalized adjacency matrix \gongshi{$(\mathbf{X}, \widetilde{\mathbf{A}})$} as input
\begin{align*}
    \mathbf{H}^{(L)} = \gcn(\mathbf{X}, \widetilde{\mathbf{A}}),
\end{align*}
where \gongshi{$\gcn = f^{(L)} \circ f^{(L-1)} \dots \circ f^{(1)}$}.

The neighbor explosion issue is mainly due to feature propagation \gongshi{$\mathbf{Z}^{(l+1)} = \widetilde{\mathbf{A}}\mathbf{H}^{(l)}$}. Specifically,  the mini-batch embeddings at the \gongshi{$(l+1)$}-th layer 
\begin{align}\label{eqn:mp_all}
    \mathbf{H}^{(l+1)}_{\mathcal{B}}&= \sigma\left(\mathbf{Z}^{(l+1)}_{\mathcal{B}}  \mathbf{W}^{(l)}\right) =\sigma\left(\widetilde{\mathbf{A}}_{\mathcal{B},\mathcal{N}_{\mathcal{B}}}\mathbf{H}^{(l)}_{\mathcal{N}_{\mathcal{B}}} \mathbf{W}^{(l)}\right) 
\end{align}
recursively depend on \gongshi{$\mathbf{H}^{(l)}_{\mathcal{N}_{\mathcal{B}}}$} at the \gongshi{$l$}-th layer.
Thus, the dependencies of nodes (i.e., \gongshi{$\mathbf{H}^{(L)}_{\mathcal{B}}$} depends on \gongshi{$\mathbf{H}^{(0)}_{\mathcal{N}^L_{\mathcal{B}}}$}\footnote{\gongshi{$\mathcal{N}^L_{\mathcal{B}}=\| [\widetilde{\mathbf{A}}^L]_{\mathcal{B}} \|_0$}.}) are exponentially increasing with respect to the number of layers \gongshi{$L$} due to \gongshi{$\mathcal{O}(|\mathcal{N}^L_{\mathcal{B}}|)= \mathcal{O}(|\mathcal{B}| deg_{\max}^L)$} with the maximum degree \gongshi{$deg_{\max}$}.

\vspace{-2mm}

\section{Message Invariance}

In this section, we elaborate on message invariance in detail. 
We first present the definition of message invariance in Section \ref{subsec:ms}.
We then provide a case study for message invariance in Section \ref{sec:case_study}.



\vspace{-2mm}

\subsection{Message Invariance} \label{subsec:ms}

We first separate the mini-batch feature propagation in Equation \eqref{eqn:mp_all} into two parts, i.e.,
\begin{align}
    \mathbf{Z}^{(l+1)}_{\mathcal{B}}
    = \underbrace{\widetilde{\mathbf{A}}_{\mathcal{B},\mathcal{B}} \mathbf{H}^{(l)}_{\mathcal{B}}}_{\textrm{\footnotesize $\text{MP}_{\text{IB}}$}} + \underbrace{\widetilde{\mathbf{A}}_{\mathcal{B},\mathcal{N}_{\mathcal{B}}^c}\mathbf{H}^{(l)}_{\mathcal{N}_{\mathcal{B}}^c}}_{\textrm{\footnotesize $\text{MP}_{\text{OB}}$}},  \label{eqn:mini_batch}
\end{align}
where $\text{MP}_{\text{IB}}$ and $\text{MP}_{\text{OB}}$ denote message passing between the in-batch nodes and message passing from their out-of-batch neighbors to the in-batch nodes respectively.

To avoid the recursive dependencies induced by $\text{MP}_{\text{OB}}$, we first introduce a novel concept of (global) message invariance, which bridges the gap between costly $\text{MP}_{\text{OB}}$ and fast $\text{MP}_{\text{IB}}$.
\begin{definition}[Message invariance]
    We say that a transformation \gongshi{$g: \mathbb{R}^{|B| \times d} \rightarrow  \mathbb{R}^{|\mathcal{N}_{\mathcal{B}}^c| \times d}$} is message-invariant if it satisfies
    \begin{align}\label{eqn:nonlinear_extrapolation}
        \mathbf{H}^{(l)}_{\mathcal{N}_{\mathcal{B}}^c} = g( \mathbf{H}^{(l)}_{\mathcal{B}} ).
    \end{align}
    for any GNN parameters \gongshi{$\mathcal{W}$}.
\end{definition}
Given the message invariance, the composition of the original {$\text{MP}_{\text{OB}}$} operator \gongshi{$\widetilde{\mathbf{A}}_{\mathcal{B},\mathcal{N}_{\mathcal{B}}^c}: \mathbb{R}^{|\mathcal{N}_{\mathcal{B}}^c|\times d} \rightarrow \mathbb{R}^{|\mathcal{B}|\times d} $} and the transformation \gongshi{$g: \mathbb{R}^{|\mathcal{B}|\times d} \rightarrow \mathbb{R}^{|\mathcal{N}_{\mathcal{B}}^c|\times d}$} leads to a new $\text{MP}_{\text{IB}}$ operator \gongshi{$(\widetilde{\mathbf{A}}_{\mathcal{B},\mathcal{N}_{\mathcal{B}}^c} g): \mathbb{R}^{|\mathcal{B}|\times d} \rightarrow \mathbb{R}^{|\mathcal{B}|\times d} $}. Thus, the mini-batch feature propagation \eqref{eqn:mini_batch} becomes
\begin{align} \label{eqn:top}
    \mathbf{Z}^{(l+1)}_{\mathcal{B}} &= \underbrace{\widetilde{\mathbf{A}}_{\mathcal{B},\mathcal{B}} 
 \mathbf{H}^{(l)}_{\mathcal{B}}}_{\textrm{\footnotesize $\text{MP}_{\text{IB}}$}} + \underbrace{\widetilde{\mathbf{A}}_{\mathcal{B},\mathcal{N}_{\mathcal{B}}^c} g(\mathbf{H}^{(l)}_{\mathcal{B}})}_{\textrm{\footnotesize $\text{MP}_{\text{IB}}$}},
\end{align}
which is independent of the neighborhood embeddings \gongshi{$\mathbf{H}^{(l)}_{\mathcal{N}_{{\mathcal{B}}}^c}$}.
Therefore, the message-invariant transformation \gongshi{$g$} avoids the recursive dependencies and expensive costs of out-of-batch neighbors.

\subsection{A Case Study for Message Invariance}
\label{sec:case_study}

Due to the arbitrariness of graph structures and the nonlinearity of GNNs, the formula of the message-invariant transformation \gongshi{$g$} is usually unknown. Here we provide a case study for a specific form of \gongshi{$g$} by simplifying the graph structures or the GNN architectures. 
The case study will motivate us to estimate the message-invariant transformation \gongshi{$g$} in Section \ref{subsec:formulation_of_TOP}.







\subsubsection{Message Invariance on Graph with Symmetry}\label{sec:mi_symmetry}



The first example is shown in Figure \ref{fig:top}, where the node features are finite and the GNN architectures are arbitrary. Due to the permutation equivariance of GNNs, the nodes in the graph are categorized into two sets \gongshi{$S_1=\{v_1,v_2,v_5,v_6\}$} and \gongshi{$S_2=\{v_3,v_4\}$}, where the nodes in the same set are isomorphic to each other. The embeddings of isomorphic nodes are always the same, regardless of the GNN architectures. Therefore, the message-invariant transformation is
\begin{align*}
    \mathbf{h}_4^{(l)} = g(\mathbf{h}_1^{(l)}, \mathbf{h}_2^{(l)}, \mathbf{h}_3^{(l)}) = 0\cdot \mathbf{h}_1^{(l)} + 0\cdot\mathbf{h}_2^{(l)} + 1\cdot\mathbf{h}_3^{(l)}.
\end{align*}

Notably, the selection of mini-batches does not require considering the symmetry of the graph in Figure \ref{fig:top}. If the mini-batch \gongshi{$\mathcal{B}$} consists of two nodes \gongshi{$v_2$} and \gongshi{$v_3$} from \gongshi{$S_1$} and \gongshi{$S_2$} respectively, then finding \gongshi{$g$} is still easy by \gongshi{$\mathbf{h}_1^{(l)}=1\cdot\mathbf{h}_2^{(l)}+0\cdot\mathbf{h}_3^{(l)}$} and \gongshi{$\mathbf{h}_4^{(l)}=0\cdot\mathbf{h}_2^{(l)}+1\cdot\mathbf{h}_3^{(l)}$}. In the example, the condition for the existence of the message-invariant transformation is that the mini-batch \gongshi{$\mathcal{B}$} contains at least one node from each of \gongshi{$S_1$} and \gongshi{$S_2$}.

The example discusses a small graph with six nodes, while many real-world graphs contain millions of nodes. From a probabilistic perspective, the sets \gongshi{$S_1$} and \gongshi{$S_2$} represent two peaks of the data distribution. Then, the condition becomes that the mini-batch \gongshi{$\mathcal{B}$} contains the most frequent node inputs (the node features and their neighborhood structures). These frequent node inputs are also sampled with a high probability under a large enough batch size. Thus, the message-invariant transformation is easy to find in large-scale graphs. We provide the detailed formulation and theory in Appendix \ref{subsubsec:selection_and_isomorphic}.

\subsubsection{Message Invariance for Linear GNNs}

We use linear GNNs  \citep{acsc, linear_gnn} as the second example, which simplifies the GNN architectures without restricting the graph structures. 
Linear GNNs use an identity mapping \gongshi{$\sigma$} as the activation function.
For linear GNNs \gongshi{$\mathbf{H}^{(l)}=\widetilde{\mathbf{A}}^l \mathbf{X}\mathbf{W}^{(0)} \dots \mathbf{W}^{(l-1)}$}, the linear dependence between embeddings \gongshi{$\mathbf{H}^{(l)}$} is equal to the linear dependence between the corresponding parameter-free features $\widetilde{\mathbf{A}}^l \mathbf{X}$. Specifically, if the $l$-hop features \gongshi{$\mathbf{X}^{(l)}_{\mathcal{B}} = (\widetilde{\mathbf{A}}^l \mathbf{X})_{\mathcal{B}}$} is a full-column-rank matrix, then there exists a coefficient matrix \gongshi{$\mathbf{R}$} such that \gongshi{$ \mathbf{X}^{(l)}_{\mathcal{N}_{\mathcal{B}}^c} = \mathbf{R} \mathbf{X}^{(l)}_{\mathcal{B}}$}. Then, the linear dependence between embeddings is
\begin{align*}
    \mathbf{H}^{(l)}_{\mathcal{N}_{\mathcal{B}}^c} 
 = \mathbf{X}^{(l)}_{\mathcal{N}_{\mathcal{B}}^c} \mathbf{W}^{(0)} \dots \mathbf{W}^{(l-1)}= \mathbf{R} \mathbf{X}^{(l)}_{\mathcal{B}} \mathbf{W}^{(0)} \dots \mathbf{W}^{(l-1)}=\mathbf{R} \mathbf{H}^{(l)}_{\mathcal{B}}.
\end{align*}
Thus, the message-invariant transformation $g$ in Equation \eqref{eqn:nonlinear_extrapolation} is a linear transformation for the coefficient matrix \gongshi{$\mathbf{R}$}.


For non-linear GNNs, the relation between embeddings of the in-batch nodes and their out-of-batch neighbors may be non-linear. Nonetheless, on the real-world datasets, the linear message-invariant transformation has achieved marginal approximation errors in practice as shown in Section \ref{subsec:sampling_bias_of_different_methods}.

\section{Topological Compensation}\label{sec:TOP}

In this section, we present the details of the proposed topological compensation framework (TOP). First, we introduce the formulation of TOP inspired by the case study of message invariance in Section \ref{subsec:formulation_of_TOP}. Then, based on the linear estimation of TOP, we conduct experiments to demonstrate that the message invariance significantly reduces the discrepancy between $\text{MP}_{\text{IB}}$ and the whole message passing in Section \ref{subsec:sampling_bias_of_different_methods}. Finally, we analyze the convergence of TOP in Section \ref{subsec:convergence}.




\subsection{Formulation of Topological Compensation}\label{subsec:formulation_of_TOP}

\udfsection{Formulation.} Inspired by the linear message-invariant transformation in the case study in Section \ref{sec:case_study}, 
we propose to model message invariance \gongshi{$\mathbf{H}^{(l)}_{\mathcal{N}_{\mathcal{B}}^c}$} by \gongshi{$\mathbf{H}^{(l)}_{\mathcal{N}_{\mathcal{B}}^c} \approx \mathbf{R} \mathbf{H}^{(l)}_{\mathcal{B}}$},
where the coefficient matrix \gongshi{$\mathbf{R} \in \mathbb{R}^{ |\mathcal{N}_{\mathcal{B}}^c| \times |\mathcal{B}|}$} 
are the weights of
linear combinations of the in-batch embeddings of \gongshi{$\mathbf{H}^{(l)}_{\mathcal{B}}$}.
Combining the approximation and the mini-batch feature propagation \eqref{eqn:mini_batch} leads to
\begin{align} 
    \mathbf{Z}^{(l+1)}_{\mathcal{B}}  \approx \widetilde{\mathbf{A}}_{\mathcal{B},\mathcal{B}}\mathbf{H}^{(l)}_{\mathcal{B}} + \widetilde{\mathbf{A}}_{\mathcal{B},\mathcal{N}_{\mathcal{B}}^c}\mathbf{R} \mathbf{H}^{(l)}_{\mathcal{B}}
    =  \underbrace{(\widetilde{\mathbf{A}}_{\mathcal{B},\mathcal{B}}+ \partial \mathbf{A}_{\mathcal{B},\mathcal{B}}) \mathbf{H}^{(l)}_{\mathcal{B}}}_{\textrm{\footnotesize $\text{MP}_{\text{IB}}$}},\label{eqn:mini_batch_mn}
\end{align}
where we call \gongshi{$\partial \mathbf{A}_{\mathcal{B},\mathcal{B}} \triangleq \widetilde{\mathbf{A}}_{\mathcal{B},\mathcal{N}_{\mathcal{B}}^c} \mathbf{R} $} \textit{the topological compensation} (TOP). The topological compensation implements the message invariance by adding weighted edges to the induced subgraph \gongshi{$\widetilde{\mathbf{A}}_{\mathcal{B},\mathcal{B}}$}.
Then, TOP directly runs a GCN on the modified subgraph as follows
\begin{align*}
    \mathbf{H}^{(L)}_{\mathcal{B}} =  \gcn(\mathbf{X}_{\mathcal{B}}, \widetilde{\mathbf{A}}_{\mathcal{B},\mathcal{B}}+\partial \mathbf{A}_{\mathcal{B},\mathcal{B}}).
\end{align*}
The formulation of TOP makes it easy to incorporate the existing subgraph sampling methods.



\udfsection{Estimation of topological compensation.} To {reduce the discrepancy between the modified $\text{MP}_{\text{IB}}$ in Equation \eqref{eqn:mini_batch_mn} and the whole message passing \eqref{eqn:mini_batch}}, we estimate $\mathbf{R}$ by
{\begin{align*}
    \min_{ \mathbf{R} } \| \mathbf{R}  \overline{\mathbf{H}}_{\mathcal{B}} - \overline{\mathbf{H}}_{\mathcal{N}_{\mathcal{B}}^{c}} \|_F,
\end{align*}}
where \gongshi{$\overline{\mathbf{H}}$} denotes the basic embeddings {and $\|\cdot\|_F$ is the Frobenius norm}.
The basic embeddings reflect the similarity between nodes.

\udfsection{Selection of basic embeddings.} Before the training, we select the basic embeddings of a GNN at random initialization by \gongshi{$\overline{\mathbf{H}}(\mathcal{W}^{(rand)}) = (\mathbf{H}^{(0,rand)},\mathbf{H}^{(1,rand)}, \dots,\mathbf{H}^{(T,rand)}) \in \mathbb{R}^{n \times (T+1)d}$}, where \gongshi{$\mathcal{W}^{(rand)}$} are the randomly initialized parameters and \gongshi{$\mathbf{H}^{(j,rand)}$} are the corresponding embeddings at the \gongshi{$j$}-th layer. The basic embeddings are the concatenation of all embeddings at different layers.

An appealing feature of $\overline{\mathbf{H}}(\mathcal{W}^{(rand)})$ is that they can identify the 1-WL indistinguishable node pairs by Theorem \ref{theorem:GNN_Injective} in Appendix \ref{subsubsec:selection_and_isomorphic}. The property ensures that the learned $g$ is message-invariant on graphs with symmetry or large-scale graphs like the first motivating example in Section \ref{sec:mi_symmetry}.

The linear message-invariant transformation with the basic embeddings \gongshi{$\overline{\mathbf{H}}(\mathcal{W}^{(rand)})$} is very accurate on real-world datasets as shown in Section \ref{subsec:sampling_bias_of_different_methods}. Thus, we estimate TOP in the pre-processing phase and then reuse it during the training phase for efficiency in our experiments. When TOP based on \gongshi{$\overline{\mathbf{H}}(\mathcal{W}^{(rand)})$} suffers from high errors, a solution is to update $g$ using the up-to-date embeddings \gongshi{$\overline{\mathbf{H}}(\mathcal{W}^{(t)})$} at the $t$-th training step.

{
\subsection{Measuring message invariance in real-world datasets.}\label{subsec:sampling_bias_of_different_methods}
}

\begin{figure*}[t!]
    \centering  
    \subfigure[Ogbn-arxiv \& GCN]{
        \label{subfig:inference_gcn_arxiv}
        \includegraphics[width=0.32\textwidth]{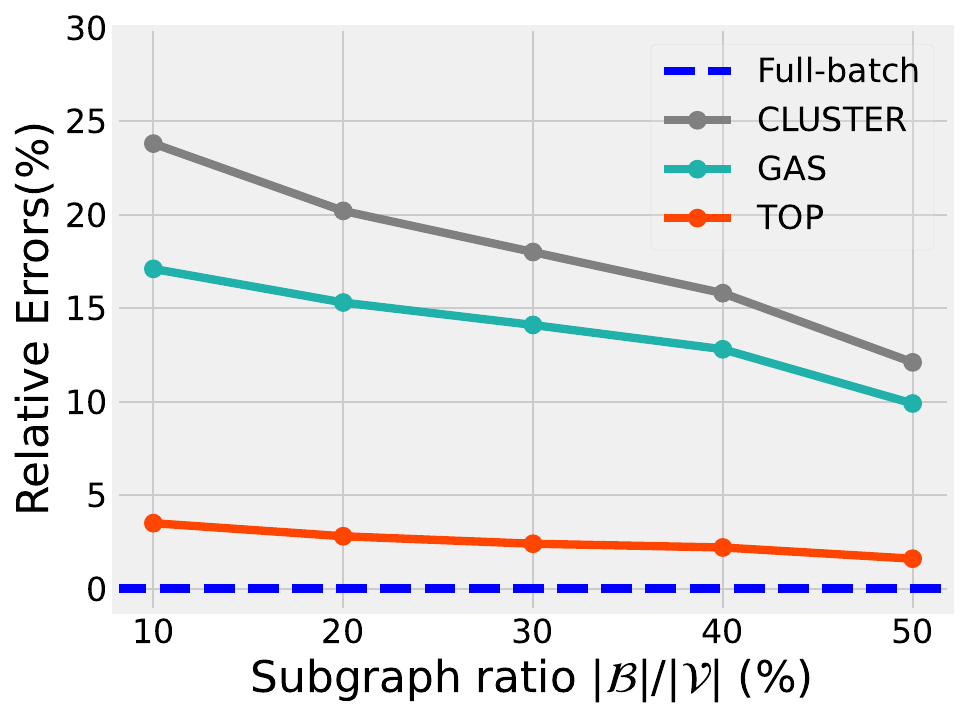}}
    \subfigure[Ogbn-arxiv \& GAT]{
        \label{subfig:inference_gat_arxiv}
        \includegraphics[width=0.32\textwidth]{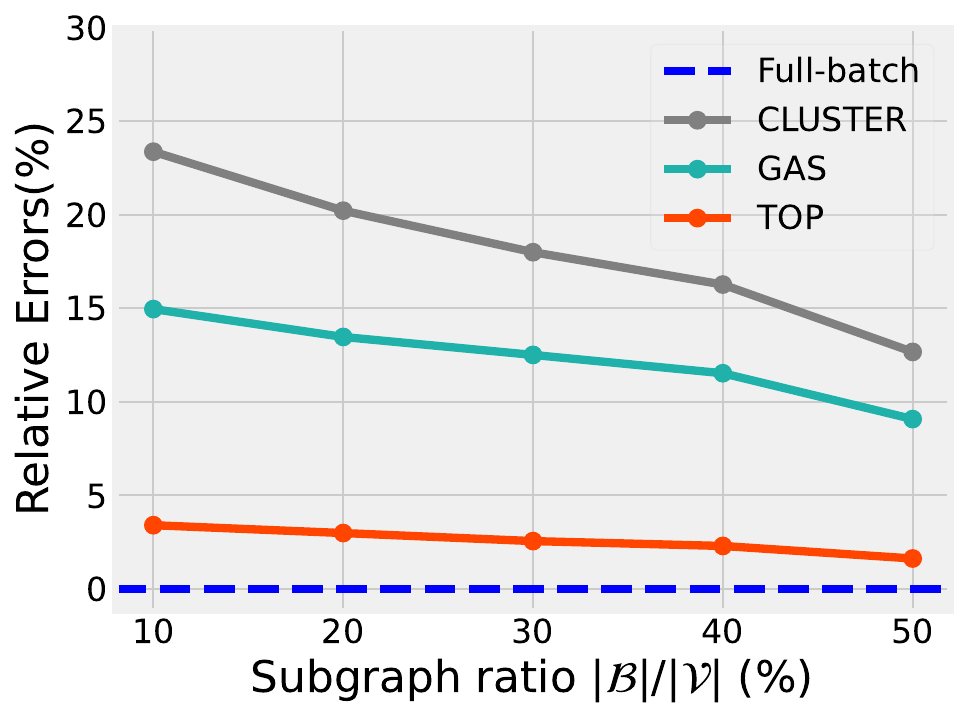}}
    \subfigure[Ogbn-products \& SAGE]{
        \label{subfig:inference_sage_products}
        \includegraphics[width=0.32\textwidth]{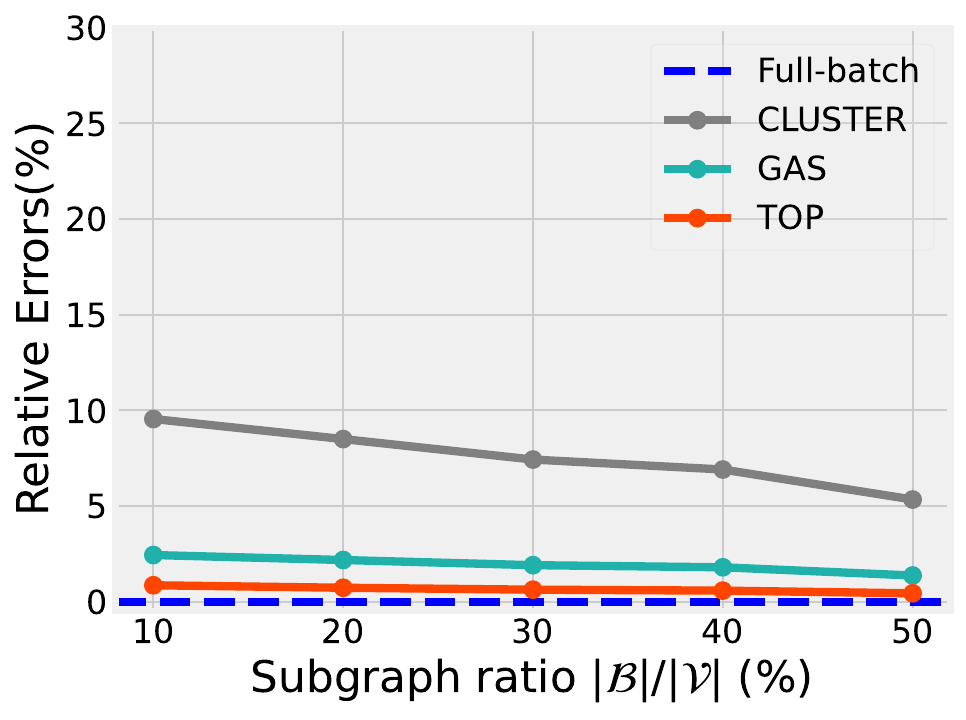}}
    \subfigure[Ogbn-arxiv \& SAGE]{
        \label{subfig:inference_sage_arxiv}
        \includegraphics[width=0.32\textwidth]{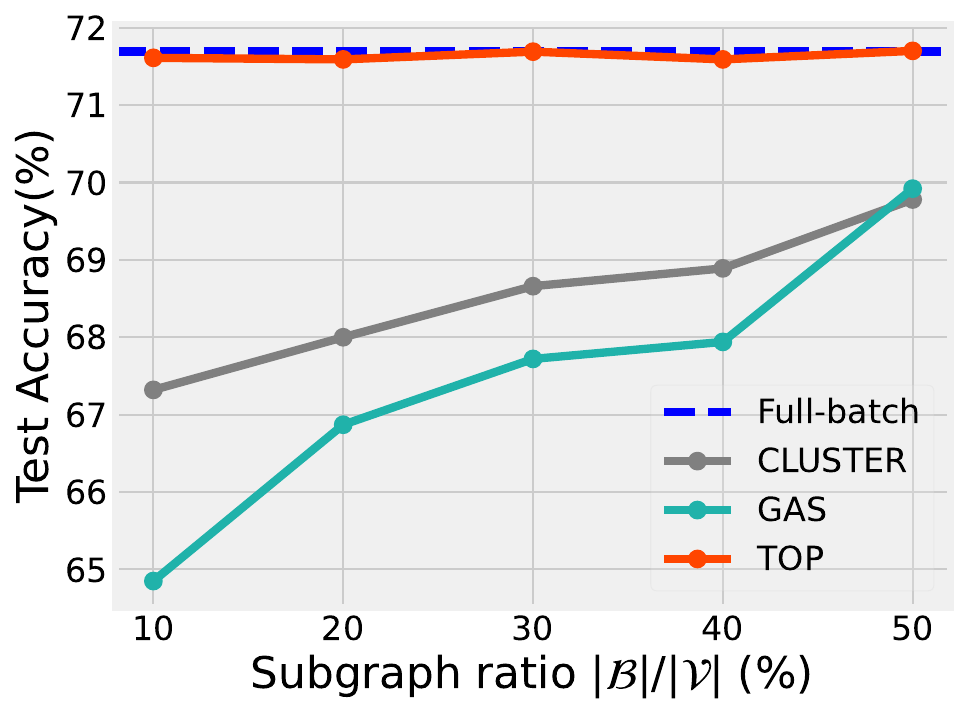}}
    \subfigure[Reddit \& GCNII]{
        \label{subfig:inference_gcnii_reddit}
        \includegraphics[width=0.32\textwidth]{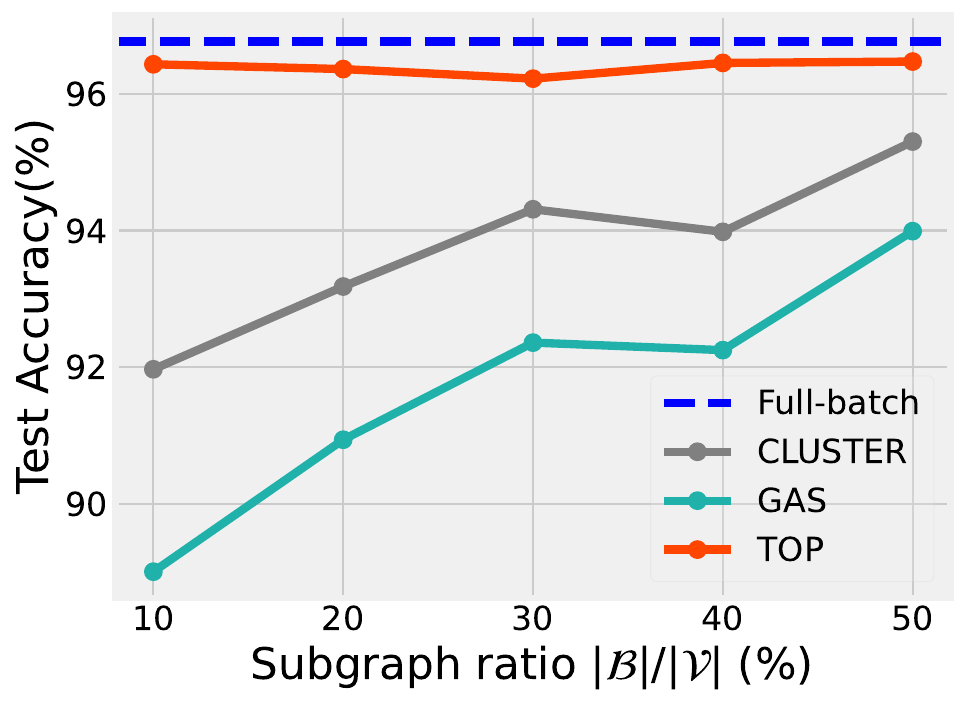}}
    \subfigure[Yelp \& GCNII]{
        \label{subfig:inference_gcnii_yelp}
        \includegraphics[width=0.32\textwidth]{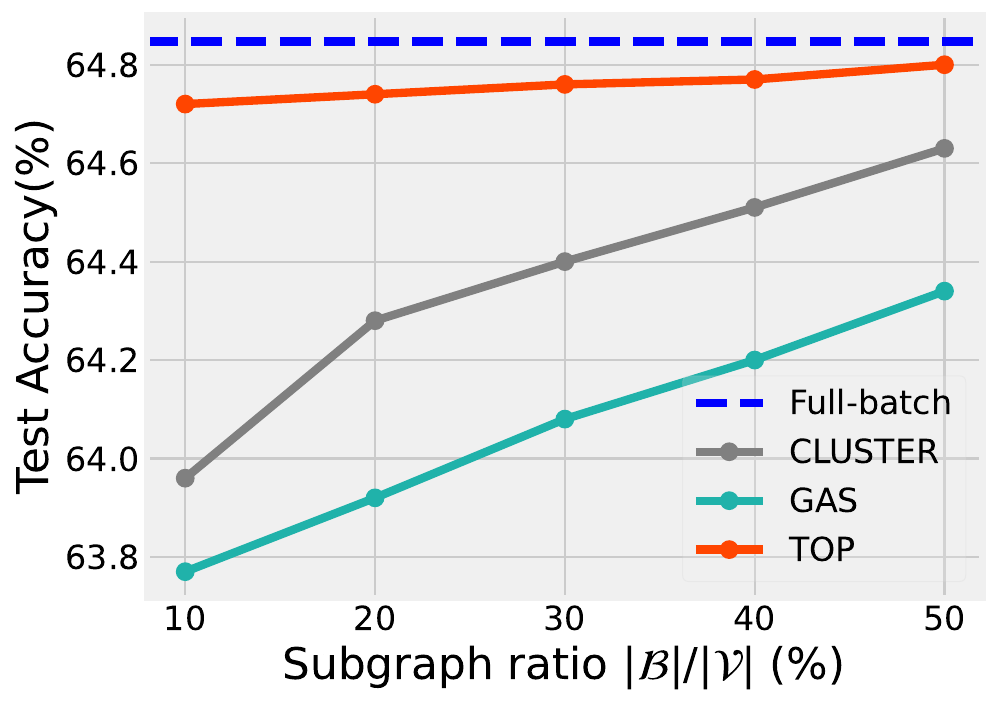}}
    \caption{
\textbf{Measuring the message invariance in real-world datasets.} The output of TOP is very close to the whole message passing (denoted by Full-batch). 
Please refer to Table \ref{tab:ams} in Appendix \ref{sec:ams_exp} for more results.
    }\label{fig:inference_gap}

\vspace{-4mm}
    
\end{figure*}

In this section, we conduct experiments to demonstrate that the message invariance significantly reduces the discrepancy between {$\text{MP}_{\text{IB}}$} and the whole message passing in many real-world datasets. To ensure the robustness and generalizability of TOP in practice, we provide more results in Tables \ref{tab:ams}, \ref{table_G2}, and \ref{Table_G3}, including more experiments on heterophilous graphs and experiments under various subgraph samplers. The whole experiments are conducted on five GNN models (GCN, GAT, SAGE, GCNII, and PNA) and eight datasets (Ogbn-arxiv, Reddit, Yelp, Ogbn-products, amazon-ratings, minesweeper, questions, and questions).

\udfsection{Measuring message invariance in real-world datasets.} We first train GNNs by the full-batch gradient descent for each dataset. 
Then, we measure the discrepancy between {$\text{MP}_{\text{IB}}$} and the whole message passing (denoted by Full-batch) by relative approximation errors and accuracy degradation. The relative approximation errors and accuracy degradation are defined by
\begin{align*}
    \frac{\sqrt{(\sum_{i=1}^{b} \| \mathbf{H}^{(L,*)}_{\mathcal{B}_i} - \mathbf{H}^{(L)}_{\mathcal{B}_i} \|_F^2)}}{\| \mathbf{H}^{(L,*)} \|_F} \quad \text{ and } \quad\frac{1}{b}\sum_{i=1}^{b} \text{acc}(\mathbf{H}^{(L,*)}_{\mathcal{B}_i}, \mathbf{Y}_{\mathcal{B}_i}) - \text{acc}(\mathbf{H}^{(L)}_{\mathcal{B}_i}, \mathbf{Y}_{\mathcal{B}_i}),
\end{align*}
where we run the whole message passing (i.e., Full-batch) to obtain \gongshi{$\mathbf{H}^{(L,*)}_{\mathcal{B}_i}$} and \gongshi{$\mathbf{Y}$} is the matrix consisting of the node labels. We partition the graph into \gongshi{$200$} clusters and then sample \gongshi{$b$} in \gongshi{$\{20, 40, 60, 80, 100\}$} clusters to construct subgraphs. If we decrease the batch size \gongshi{$b$}, then the ratio of messages in {$\text{MP}_{\text{OB}}$} increases and thus {$\text{MP}_{\text{OB}}$} becomes important.

Our baselines include two subgraph sampling methods using {$\text{MP}_{\text{IB}}$} (i.e., CLUSTER \citep{cluster_gcn} and GAS \citep{gas}). We introduce these baselines in Appendix \ref{sec:gcm}. 
We report the test accuracy vs. subgraph ratio in Figure \ref{fig:inference_gap}.
The relative approximation errors of TOP are less than 5\% and the test accuracy of TOP is very close to Full-batch under different batch sizes.

\vspace{-2mm}

\subsection{Convergence of TOP}\label{subsec:convergence}

Based on message invariance \eqref{eqn:nonlinear_extrapolation}, we develop the convergence analysis of TOP in this section.  The assumption of Theorem \ref{thm:convergence_conv} is widely used in convergence analysis \citep{lmc, vrgcn, graphfm}.
All proofs are provided in Appendix \ref{sec:proof_convergence}.


\begin{theorem}\label{thm:convergence_conv}
    Let \gongshi{$\mathcal{L}(\mathcal{W}) = \sum_{i \in \mathcal{V}} \ell (\mathbf{h}^{(L)}_{i}, y_i) / |\mathcal{B}|$} and \gongshi{$\mathbf{d}_{\mathcal{W}} = \nabla_{\mathcal{W}} \sum_{i \in \mathcal{B}} \ell (\mathbf{h}^{(L,TOP)}_{i}, y_i) / |\mathcal{B}|$} be the loss of the full-batch method and the gradient of TOP respectively, where \gongshi{$\ell$} is the loss function and \gongshi{$y_i$} is the label of node \gongshi{$i$}.
    Assume that (1) the optimal value \gongshi{$\mathcal{L}^{*} = \mathrm{inf}\ \mathcal{L}(\mathcal{W})$} is finite (2) at the \gongshi{$k$}-th iteration, a batch of nodes \gongshi{$\mathcal{V}^{k}_{\mathcal{B}}$} is uniformly sampled from \gongshi{$\mathcal{V}$} (3) function \gongshi{$\nabla_{\mathcal{W}} \mathcal{L}$} is \gongshi{$\gamma$}-Lipschitz with \gongshi{$\gamma > 1$} (4) norms \gongshi{$\|\nabla_{\mathcal{W}} \mathcal{L}\|_2$} and \gongshi{$\|\mathbf{d}_{\mathcal{W}}\|_2$} are bounded by \gongshi{$G > 1$}. With the learning rate \gongshi{$\eta=\mathcal{O}(\varepsilon^{2})$} and the training step \gongshi{$N=\mathcal{O}(\varepsilon^{-4})$}, TOP then finds an \gongshi{$\varepsilon$}-stationary solution such that \gongshi{$\mathbb{E}[\|\nabla_{\mathcal{W}} \mathcal{L}(\mathcal{W}^{(R)}) \|_2] \leq \varepsilon$} after running for \gongshi{$N$} iterations, where \gongshi{$R$} is uniformly selected from \gongshi{$\llbracket N \rrbracket$}.
\end{theorem}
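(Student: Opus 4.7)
The plan is to reduce TOP to standard stochastic gradient descent on the full-batch loss $\mathcal{L}$ by exploiting message invariance, and then invoke the classical $\mathcal{O}(\varepsilon^{-4})$ rate for non-convex smooth SGD.

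First I would use message invariance to show that, on the sampled mini-batch $\mathcal{B}$, TOP reproduces the full message passing exactly. Substituting $\mathbf{H}^{(l)}_{\mathcal{N}_{\mathcal{B}}^c} = g(\mathbf{H}^{(l)}_{\mathcal{B}})$ into Equation \eqref{eqn:mini_batch} yields Equation \eqref{eqn:top}, so by induction on the layer index $l$, $\mathbf{h}^{(L,TOP)}_{i} = \mathbf{h}^{(L)}_{i}$ for every $i \in \mathcal{B}$. Consequently $\mathbf{d}_{\mathcal{W}}$ is (up to a constant that does not affect the rate) a mini-batch stochastic gradient of $\mathcal{L}$, and by the uniform batch sampling in assumption (2), $\mathbb{E}[\mathbf{d}_{\mathcal{W}}^{k} \mid \mathcal{W}^{k}] = \nabla_{\mathcal{W}}\mathcal{L}(\mathcal{W}^{k})$.

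Next I would apply the descent lemma implied by the $\gamma$-Lipschitz assumption (3): after the update $\mathcal{W}^{k+1} = \mathcal{W}^{k} - \eta\, \mathbf{d}_{\mathcal{W}}^{k}$,
\[
\mathcal{L}(\mathcal{W}^{k+1}) \leq \mathcal{L}(\mathcal{W}^{k}) - \eta \langle \nabla_{\mathcal{W}} \mathcal{L}(\mathcal{W}^{k}), \mathbf{d}_{\mathcal{W}}^{k} \rangle + \tfrac{\gamma \eta^{2}}{2}\|\mathbf{d}_{\mathcal{W}}^{k}\|_{2}^{2}.
\]
Taking conditional expectation, using the unbiasedness established above, and bounding $\|\mathbf{d}_{\mathcal{W}}^{k}\|_{2}^{2} \leq G^{2}$ by assumption (4), yields the per-step inequality $\mathbb{E}[\mathcal{L}(\mathcal{W}^{k+1})] \leq \mathbb{E}[\mathcal{L}(\mathcal{W}^{k})] - \eta\, \mathbb{E}[\|\nabla_{\mathcal{W}} \mathcal{L}(\mathcal{W}^{k})\|_{2}^{2}] + \tfrac{\gamma \eta^{2} G^{2}}{2}$.

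Finally I would telescope from $k=0$ to $N-1$, use $\mathcal{L}(\mathcal{W}^{0}) - \mathcal{L}^{*} < \infty$ from assumption (1), and divide by $\eta N$, obtaining
\[
\frac{1}{N}\sum_{k=0}^{N-1} \mathbb{E}[\|\nabla_{\mathcal{W}} \mathcal{L}(\mathcal{W}^{k})\|_{2}^{2}] \leq \frac{\mathcal{L}(\mathcal{W}^{0}) - \mathcal{L}^{*}}{\eta N} + \frac{\gamma \eta G^{2}}{2}.
\]
Plugging in $\eta = \Theta(\varepsilon^{2})$ and $N = \Theta(\varepsilon^{-4})$ renders both terms $\mathcal{O}(\varepsilon^{2})$; since $R$ is uniform on $\llbracket N \rrbracket$, this gives $\mathbb{E}[\|\nabla_{\mathcal{W}} \mathcal{L}(\mathcal{W}^{R})\|_{2}^{2}] = \mathcal{O}(\varepsilon^{2})$, and Jensen's inequality $\mathbb{E}[X] \leq \sqrt{\mathbb{E}[X^{2}]}$ delivers $\mathbb{E}[\|\nabla_{\mathcal{W}} \mathcal{L}(\mathcal{W}^{R})\|_{2}] \leq \varepsilon$, as claimed. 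The main obstacle will be step one: carefully certifying that the layerwise induction collapses TOP's forward (and the induced backward) computation onto the full-batch one restricted to $\mathcal{B}$, so that $\mathbf{d}_{\mathcal{W}}$ is truly an unbiased SGD estimator with no residual bias. Once this reduction is in place, the remaining analysis is textbook non-convex SGD, matching the $\mathcal{O}(\varepsilon^{-4})$ rate of VR-GCN and improving on the $\mathcal{O}(\varepsilon^{-6})$ rate of LMC precisely because TOP incurs no historical-embedding staleness term.
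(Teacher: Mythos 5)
Your proposal matches the paper's proof in both structure and substance: the paper first establishes unbiasedness of $\mathbf{d}_{\mathcal{W}}$ exactly as you describe (Theorem \ref{thm:unbiased}, via message invariance forcing TOP's in-batch embeddings to coincide with the full-batch ones for all parameters), then runs the standard descent-lemma/telescoping argument with the variance term bounded by $4G^2$ via assumption (4), and chooses $\eta=\mathcal{O}(N^{-1/2})$ to obtain the $\mathcal{O}(N^{-1/2})$ bound on $\mathbb{E}[\|\nabla\mathcal{L}\|_2^2]$, which yields $N=\mathcal{O}(\varepsilon^{-4})$. The only cosmetic difference is that you fold the gradient-norm bound directly into the quadratic term and invoke Jensen explicitly at the end, whereas the paper isolates $\Delta^{(k)}=\mathbf{d}^{(k)}_{\mathcal{W}}-\nabla\mathcal{L}(\mathcal{W}^{(k)})$ and bounds $\mathbb{E}[\|\Delta^{(k)}\|_2^2]$ separately; both give the same rate.
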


The convergence rate \gongshi{$N=\mathcal{O}(\varepsilon^{-4})$} is the same as the standard SGD \citep{sgd, spider}.
Notably, the convergence rate of TOP is faster than that of LMC \citep{lmc} (i.e., \gongshi{$N=\mathcal{O}(\varepsilon^{-6})$}), as TOP avoids the staleness issue of the historical embeddings and gradients of LMC.


\begin{table*}[t!]
    \vspace{-8mm}
  \begin{center}
    \caption{\textbf{Statistics of the datasets in our experiments}. ``\#" denotes the number and ``Avg. degree" denotes the average degree. The task is node classification, which is a standard task to evaluate the scalability on the large-scale graph \citep{cluster_gcn, graphsaint, gas}.
    }\label{tab:datasets}
  \resizebox{1.0\linewidth}{!}{%
    \begin{tabular}{ccccccc}
    \toprule
    \textbf{Dataset} & \textbf{\#Classes} &\textbf{Total \#Nodes} & \textbf{Total \#Edges} & \textbf{Avg. degree} & \textbf{Train/Val/Test}  \\
      \midrule
      Reddit  & 41 & 232,965 & 11,606,919 & 49.8 & 0.660/0.100/0.240   \\
      Yelp  & 50 & 716,847 & 6,997,410 &  9.8  & 0.750/0.150/0.100   \\
      Ogbn-arxiv  & 40  & 169,343 & 1,157,799 & 6.9 & 0.537/0.176/0.287 \\
      Ogbn-products  & 47  & 2,449,029 & 61,859,076 & 25.3 & 0.100/0.020/0.880  \\
      Ogbn-papers100M  & 172  & 111,059,956 & 1,615,685,872 & 14.6 & 0.780/0.080/0.140 \\ 
      \bottomrule
    \end{tabular}
    }
  \end{center}
    \vspace{-4mm}
\end{table*} 

\vspace{-1mm}

\section{Experiments}
\label{sec:exp}

\vspace{-2mm}

We first compare the convergence and efficiency of TOP with the state-of-the-art subgraph sampling methods---which are the most related baselines---in Section \ref{sec:convergence_curve}.
Then, we compare the convergence and efficiency of TOP with the state-of-the-art  node/layer-wise sampling methods in Section \ref{sec:comp_nslabor}.
More experiments are provided in Appendix \ref{sec:more_exp}.

\vspace{-2mm}




\subsection{Comparison with Subgraph Sampling}
\label{sec:convergence_curve}


\udfsection{Datasets.} We evaluate TOP on five datasets with various sizes (i.e., Reddit \citep{graphsage}, Yelp \citep{graphsaint}, Ogbn-arxiv, Ogbn-products, and Ogbn-papers \citep{ogb}).
These datasets contain at least 100 thousand nodes and one million edges.
Notably, Ogbn-papers is very large, containing 100 million nodes and 1.6 billion edges.
They have been widely used in previous works \citep{gas, graphsaint, graphsage, cluster_gcn, vrgcn, fastgcn}.
Table \ref{tab:datasets} summarizes the statistics of the datasets. We also conduct experiments on heterophilous graphs in Appendix \ref{sec:exp_heterophilous}.



\udfsection{Subgraph samplers.} On the small and medium datasets (i.e., Ogbn-arxiv, Reddit, and Yelp), we follow CLUSTER \citep{cluster_gcn} and GAS \citep{gas} to sample subgraphs based on METIS (see Appendix \ref{sec:metis}). Specifically, we first use METIS to partition the original graph into many clusters and then sample a cluster of nodes to generate a subgraph.
On the large datasets  (i.e., Ogbn-products and Ogbn-papers), as the METIS algorithm is too time-consuming \citep{graphsaint}, we uniformly sample nodes to construct subgraphs.
More experiments under various subgraph samplers are provided in Appendix \ref{sec:samplers}.

\udfsection{Baselines and implementation details.} Our baselines include subgraph sampling (CLUSTER \citep{cluster_gcn}, SAINT \citep{graphsaint}, and GAS \citep{gas}). 
We also compare TOP with IBMB \citep{ibmb} in Appendix \ref{sec:top_av}, a recent subgraph sampling method focused on the design of subgraph samplers, which is orthogonal to TOP (see Section \ref{sec:related_work}).
We implement TOP, CLUSTER, SAINT, and GAS based on the codes and toolkits of GAS \citep{gas} to ensure a fair comparison. 
We introduce these baselines in Appendix \ref{sec:gcm}.
We evaluate CLUSTER, GAS, SAINT, and TOP based on the same GNN backbone, including the widely used GCN \citep{gcn} and GCNII \citep{gcnii}.
We implement GCN and GCNII following \citep{gas} and \citep{graphsage}.
Due to space limitation, we present the results with more GNN backbones (e.g. SAGE \citep{graphsage}, and GAT \citep{gat}) in Appendix \ref{sec:top_av}.
 We run all experiments in this section on a single GeForce RTX 2080 Ti (11 GB), and Intel Xeon CPU E5-2640 v4.
For other implementation details, please refer to Appendix \ref{sec:implementation}.


Figure \ref{fig:runtime} shows the convergence curves (test accuracy vs. runtime (s)) of TOP, CLUSTER, GAS, SAINT, and Full-batch (i.e. full-batch gradient descent with the whole message passing).
We provide the convergence curves (test accuracy vs. epochs) in Appendix \ref{sec:more_exp}.
We use a sliding window to smooth the curves in Figure \ref{fig:runtime} as the test accuracy is unstable.
We ran each experiment five times. The solid curves correspond to the mean, and the shaded regions correspond to values within plus or minus one standard deviation of the mean.
The convergence curves consider the runtime of pre-processing.

\begin{figure*}[t!]
    \vspace{-8mm}
    \centering  
    \subfigure[GCN on small datasets]{
        \label{subfig:small_datasets}
        \includegraphics[width=0.31\textwidth]{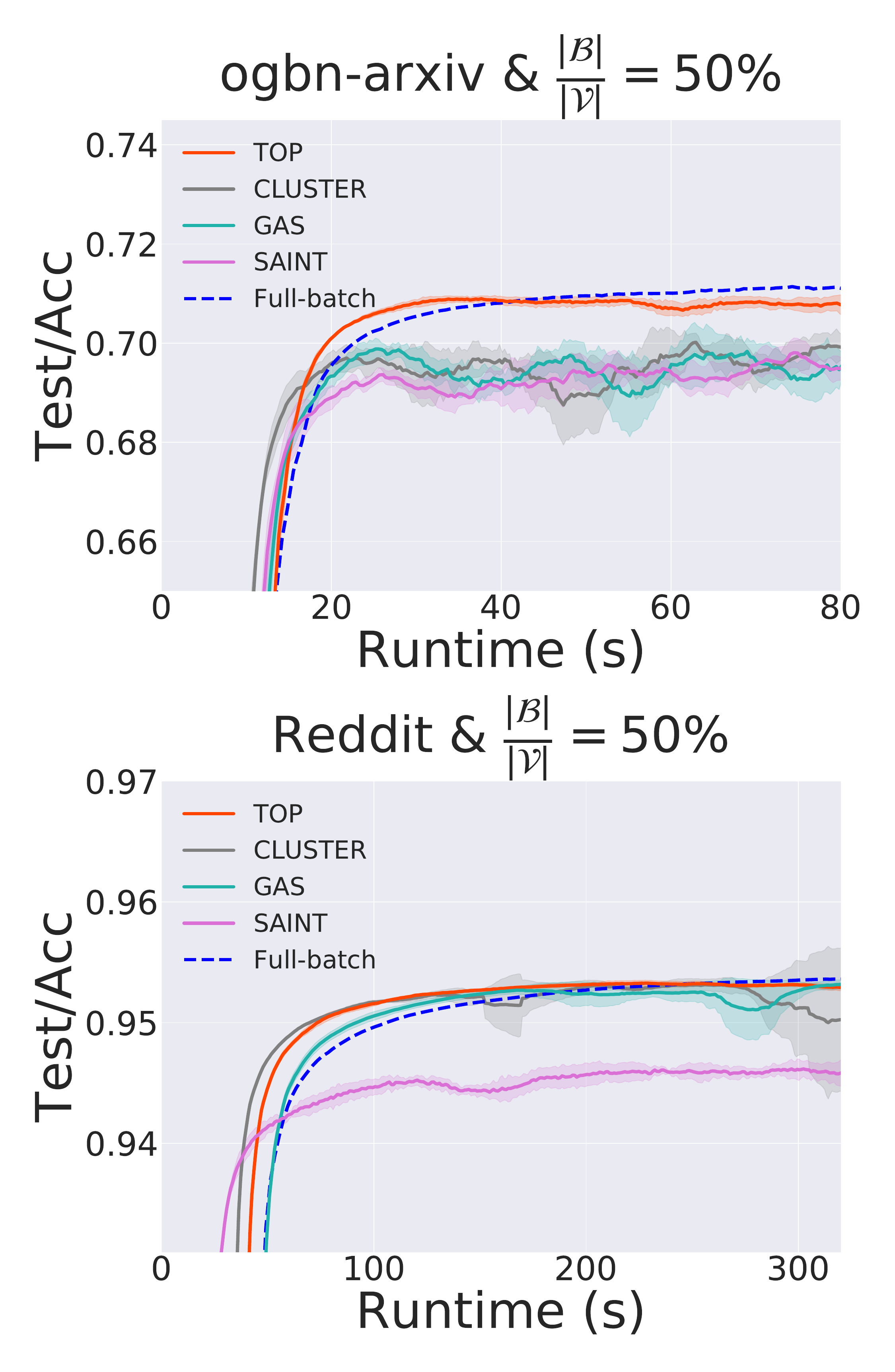}}
    \subfigure[GCNII on medium datasets]{
        \label{subfig:medium_datasets}
        \includegraphics[width=0.31\textwidth]{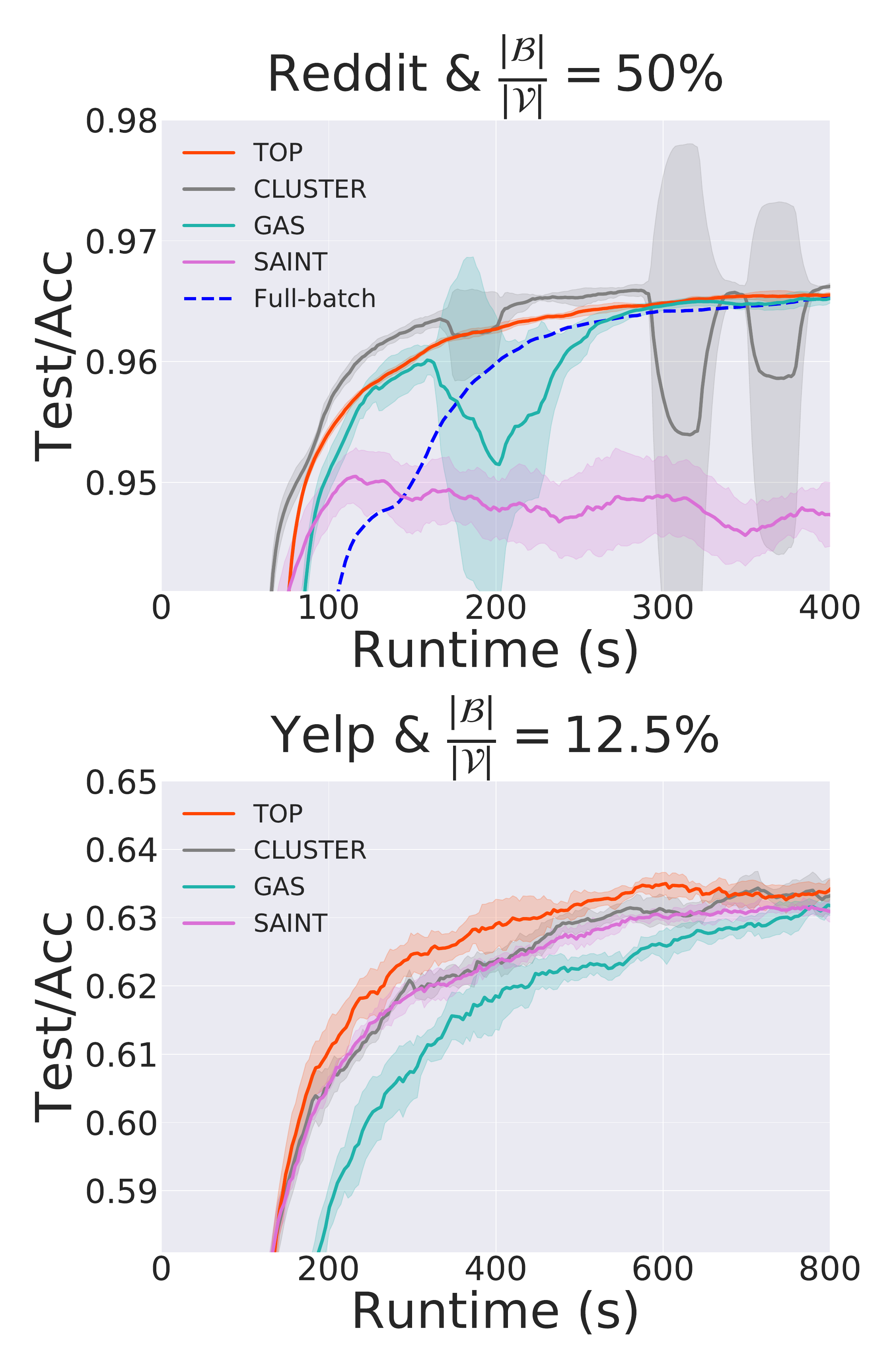}}
    \subfigure[GCNII on large datasets]{
        \label{subfig:large_datasets}
        \includegraphics[width=0.31\textwidth]{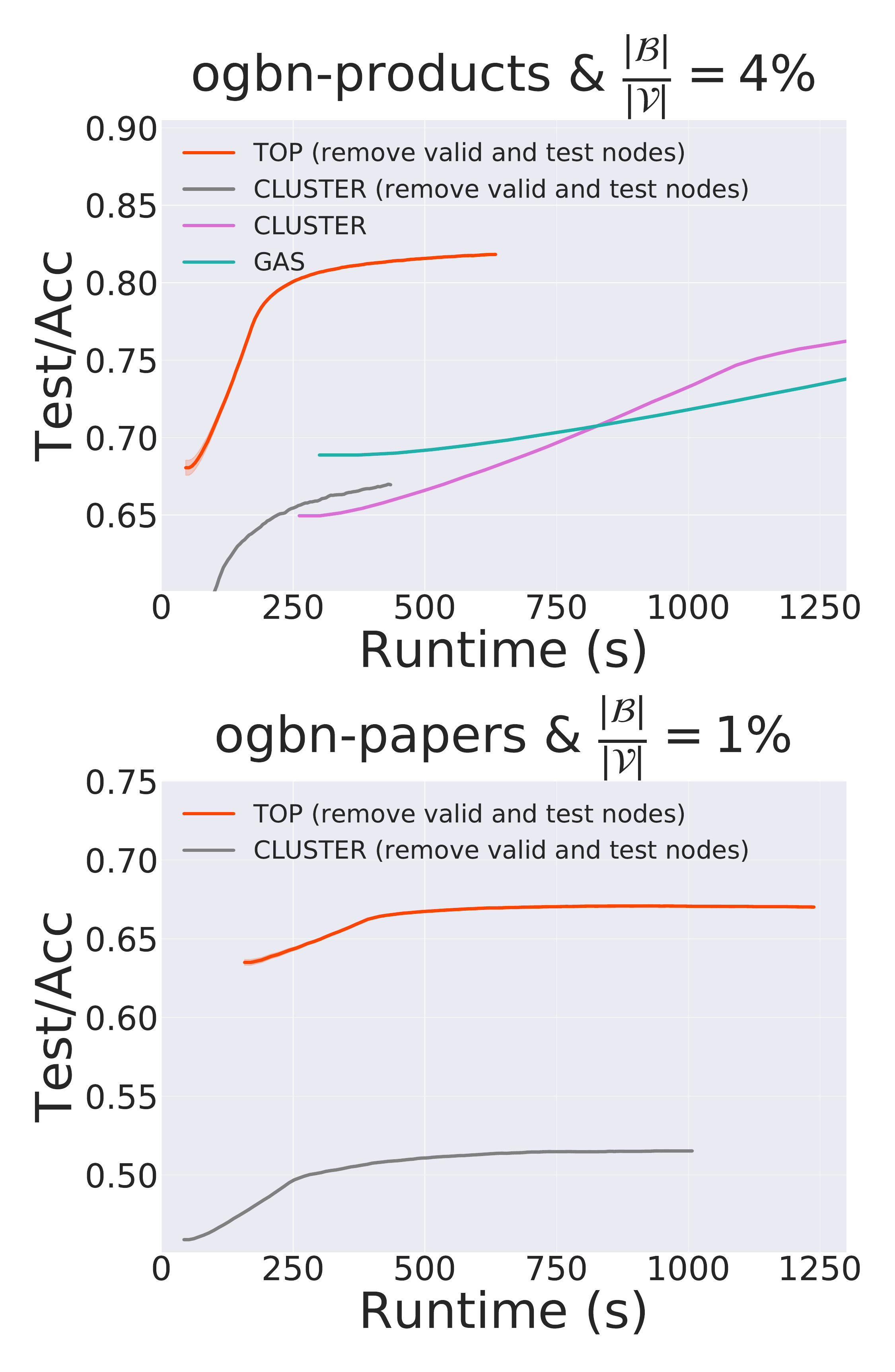}}
\vspace{-5pt}
    \caption{
        \textbf{Convergence curves (test accuracy vs. runtime (s)) of subgraph sampling}. We use the default \gongshi{$|\mathcal{B}|$ and $|\mathcal{V}|$}---which denote the sizes of subgraphs and the whole graph respectively---provided in GAS \cite{gas}.
    }\label{fig:runtime}

\vspace{-3mm}
\end{figure*}

\begin{figure*}[t]
    \centering
    \subfigure[Relative runtime per epoch]{
        \includegraphics[width=0.48\textwidth]{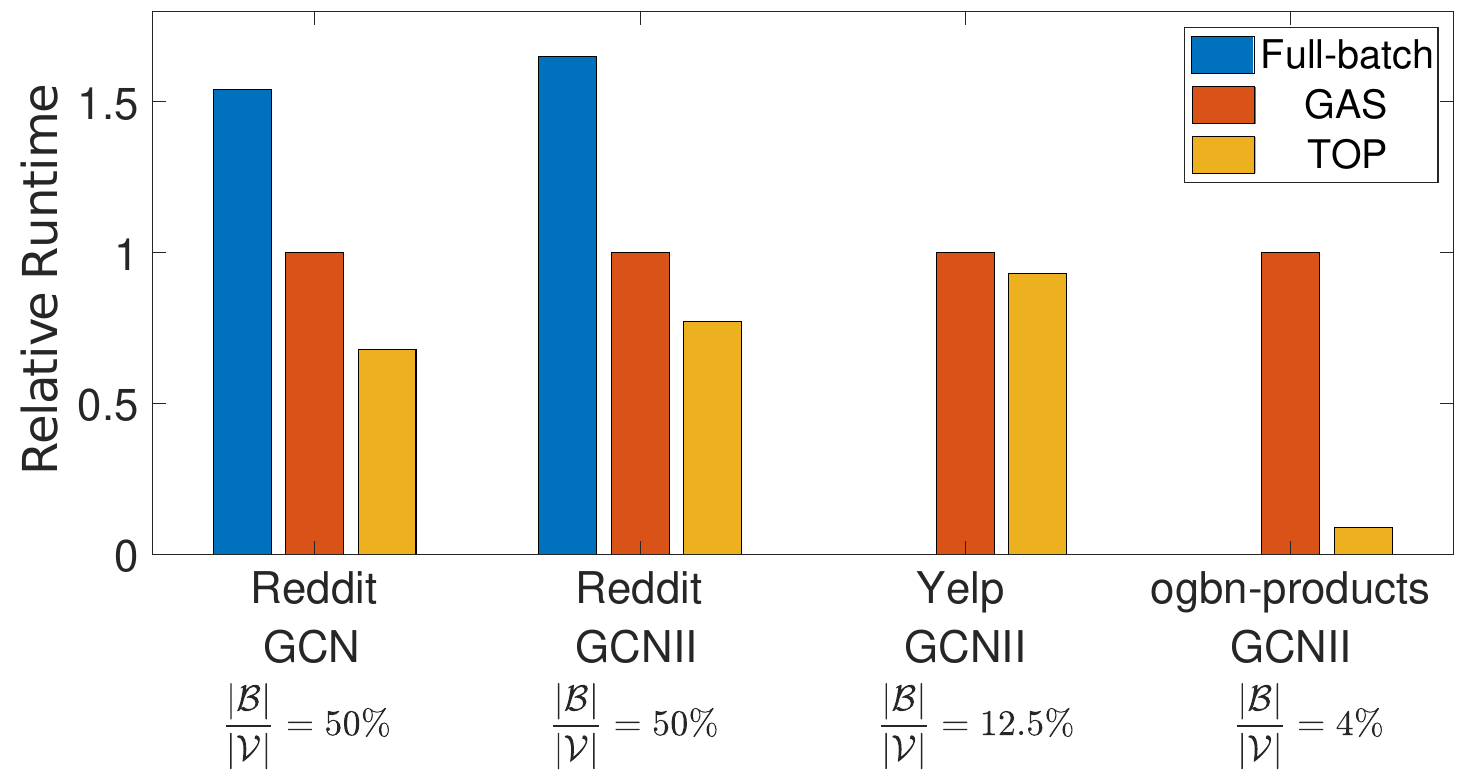}
    }
    \subfigure[Relative memory consumption]{
		\includegraphics[width=0.48\textwidth]{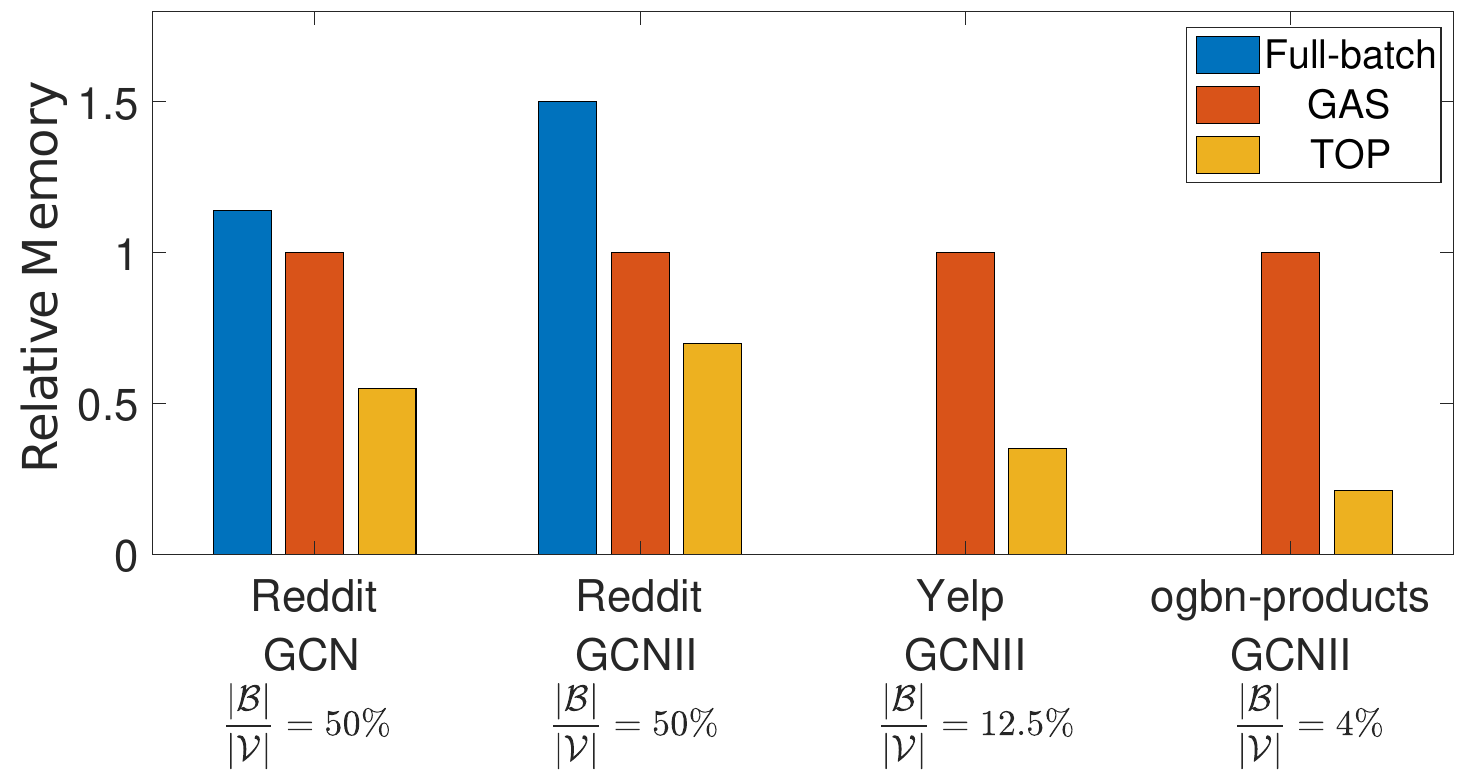}
    }
    \vspace{-2mm}
    \caption{
    \textbf{Relative runtime per epoch and relative memory consumption}. Please refer to Table \ref{tab:memory_consumption} in Appendix \ref{sec:relative_mc} for more results.
    }
    \label{fig:0}
    \vspace{-4mm}
\end{figure*}

\udfsection{Results on small datasets.} On the small datasets (i.e., Ogbn-arxiv and Reddit), the subgraph ratio \gongshi{$|\mathcal{B}|/|\mathcal{V}|$} is up to 50\%, where \gongshi{$|\mathcal{B}|$} and \gongshi{$|\mathcal{V}|$} denote the sizes of subgraphs and the whole graph respectively. The large ratio shows that the subgraph contains much information about the whole graph.
According to Figure \ref{subfig:small_datasets}, TOP is significantly faster than Full-batch, CLUSTER, GAS, and SAINT without sacrificing accuracy.
Further, TOP stably resembles the full-batch performance on the Ogbn-arxiv and Reddit datasets, while CLUSTER, GAS, and SAINT are unstable. The standard deviation of CLUSTER, GAS, and SAINT is large such that the mean test accuracy is lower than the full-batch performance, as they are difficult to encode all available neighborhood information of the subgraph.
Specifically, CLUSTER and SAINT do not take {$\text{MP}_{\text{OB}}$} into consideration and GAS uses stale historical embeddings to approximate {$\text{MP}_{\text{OB}}$}.

\udfsection{Results on medium datasets.} On the medium datasets (i.e., Reddit, and Yelp), the subgraph ratio \gongshi{$|\mathcal{B}|/|\mathcal{V}|$} decreases from 50\% to 12.5\% due to GPU memory limitations. Thus, Full-batch runs out of GPU memory on the Yelp dataset.
Compared with GCN, the nonlinearity of GCNII becomes strong due to the large model capacity of GCNII.
Under the strong nonlinearity, TOP is still significantly faster than CLUSTER, GAS, and SAINT on the Yelp dataset with a low subgraph ratio \gongshi{$|\mathcal{B}|/|\mathcal{V}|$} according to Figure \ref{subfig:medium_datasets}.
Moreover, TOP is significantly faster than GAS and Full-batch on the Reddit dataset.
Although the mean convergence curses of TOP and CLUSTER are similar on the Reddit dataset, the low standard deviation demonstrates that TOP is more stable than CLUSTER.

\udfsection{Results on large datasets.} On the large datasets (i.e., Ogbn-products, and Ogbn-papers), the subgraph ratio \gongshi{$|\mathcal{B}|/|\mathcal{V}|$} is very low due to GPU memory limitations.
By noticing that the large number of valid and test nodes in the large datasets is useless for TOP, we remove the valid and test nodes from sampled subgraphs.
For an ablation study, we report CLUSTER without valid and test nodes in the sampled subgraphs.
We do not remove the valid and test nodes for GAS, as GAS requires updating the historical embeddings on the valid and test nodes to alleviate the staleness issue.
Due to a large number of historical embeddings, GAS runs out of CPU memory on the Ogbn-papers dataset.
According to Figure \ref{subfig:large_datasets}, TOP is significantly faster than CLUSTER and GAS by several orders of magnitude.
Moreover, the valid and test nodes in subgraphs are important for CLUSTER, as these valid and test nodes are likely to be the neighbors of the training nodes.
The valid and test nodes in subgraphs increase the ratio of messages in {$\text{MP}_{\text{IB}}$} for CLUSTER.
TOP does not depend on the valid and test nodes due to its effective topological compensation.

\udfsection{Memory and runtime.} We report the GPU memory consumption and the runtime per epoch in Figure \ref{fig:0}. TOP is significantly faster and more memory-efficient than GAS on all datasets, as TOP does not require pulling and pushing historical embeddings frequently.
Especially, the speedup of TOP against GAS is up to 11x on the Ogbn-product dataset, which is one order of magnitude.
We analyze the computational complexity of TOP in Appendix \ref{sec:complexity} and give the detailed costs of pre-processing and training in Appendix \ref{sec:pretime}.

\vspace{-2mm}

\begin{figure*}[t]
    \vspace{-8mm}
    \centering
    \subfigure[Memory.]{
		\includegraphics[width=0.21\textwidth]{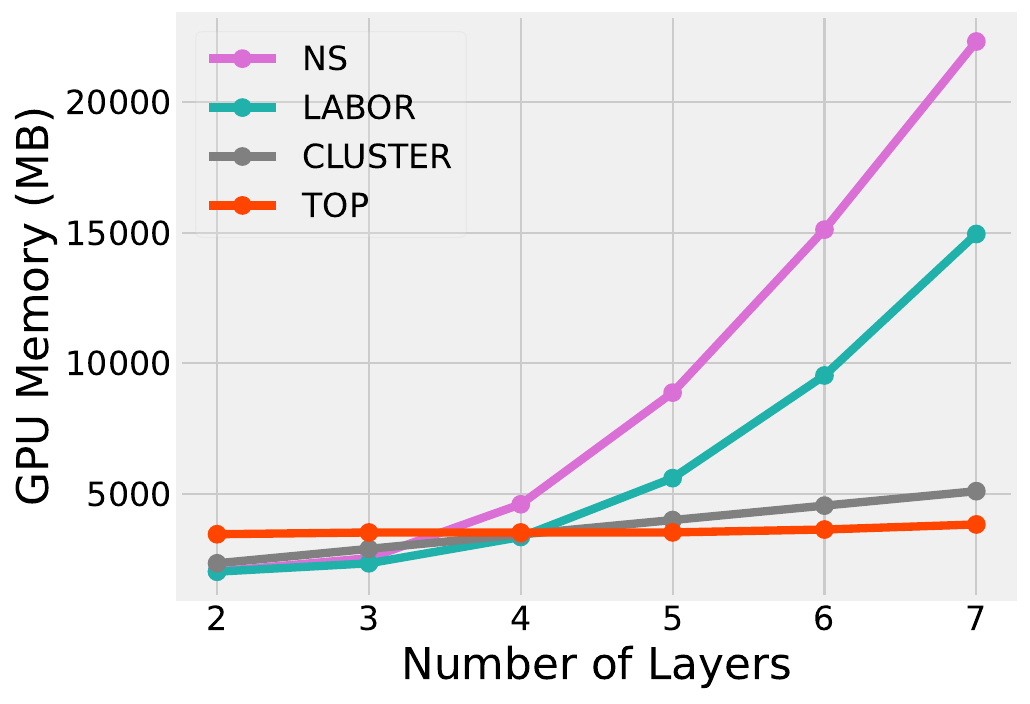}  \label{fig:labor_memory}
    }
    \subfigure[Runtime on Products.]{
        \includegraphics[width=0.24\textwidth]{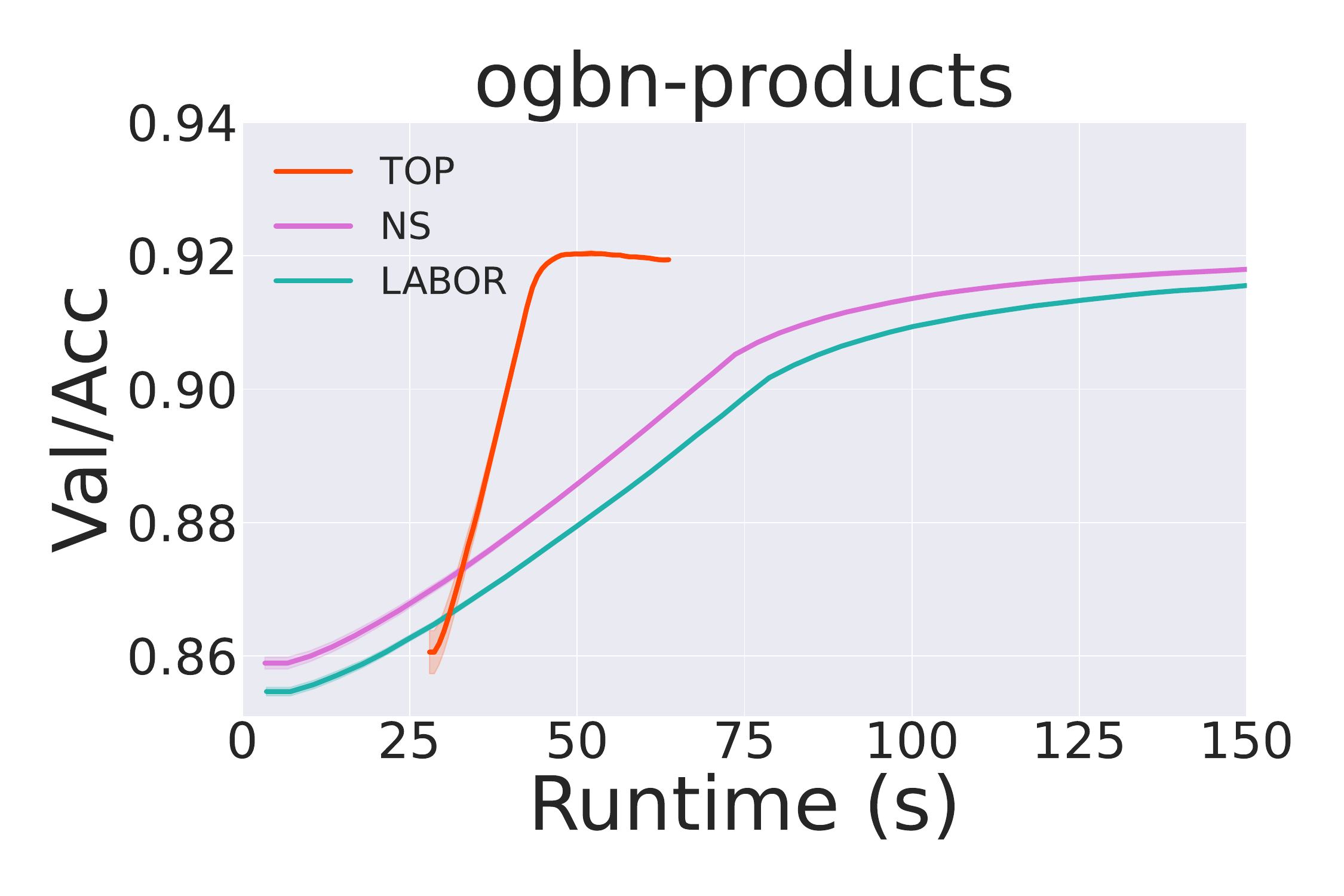} \label{fig:labor_convergence}
    }
    \subfigure[Runtime on Reddit.]{
        \includegraphics[width=0.24\textwidth]{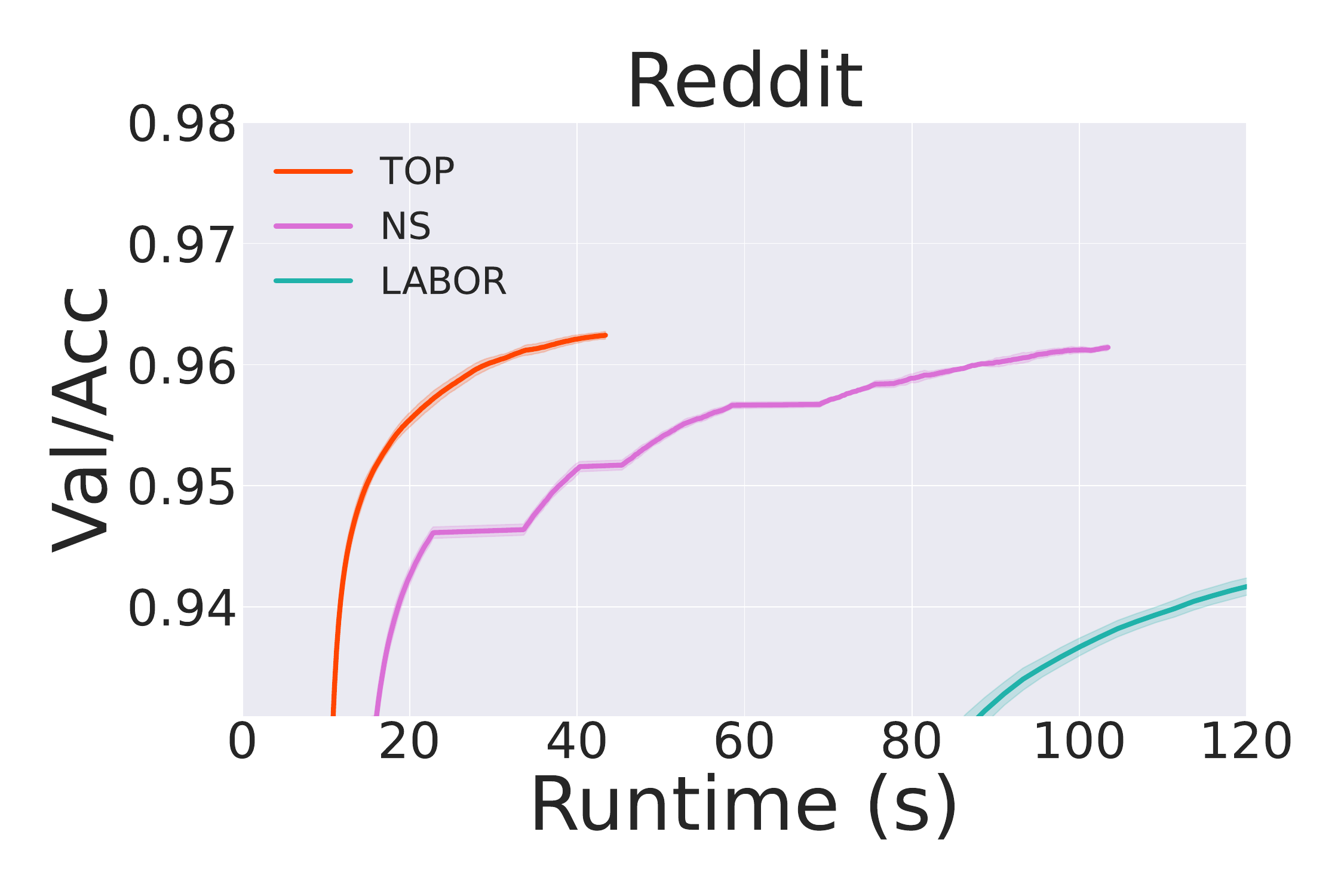} \label{fig:labor_convergence_reddit}
    }
    \subfigure[Runtime on Arxiv.]{
        \includegraphics[width=0.24\textwidth]{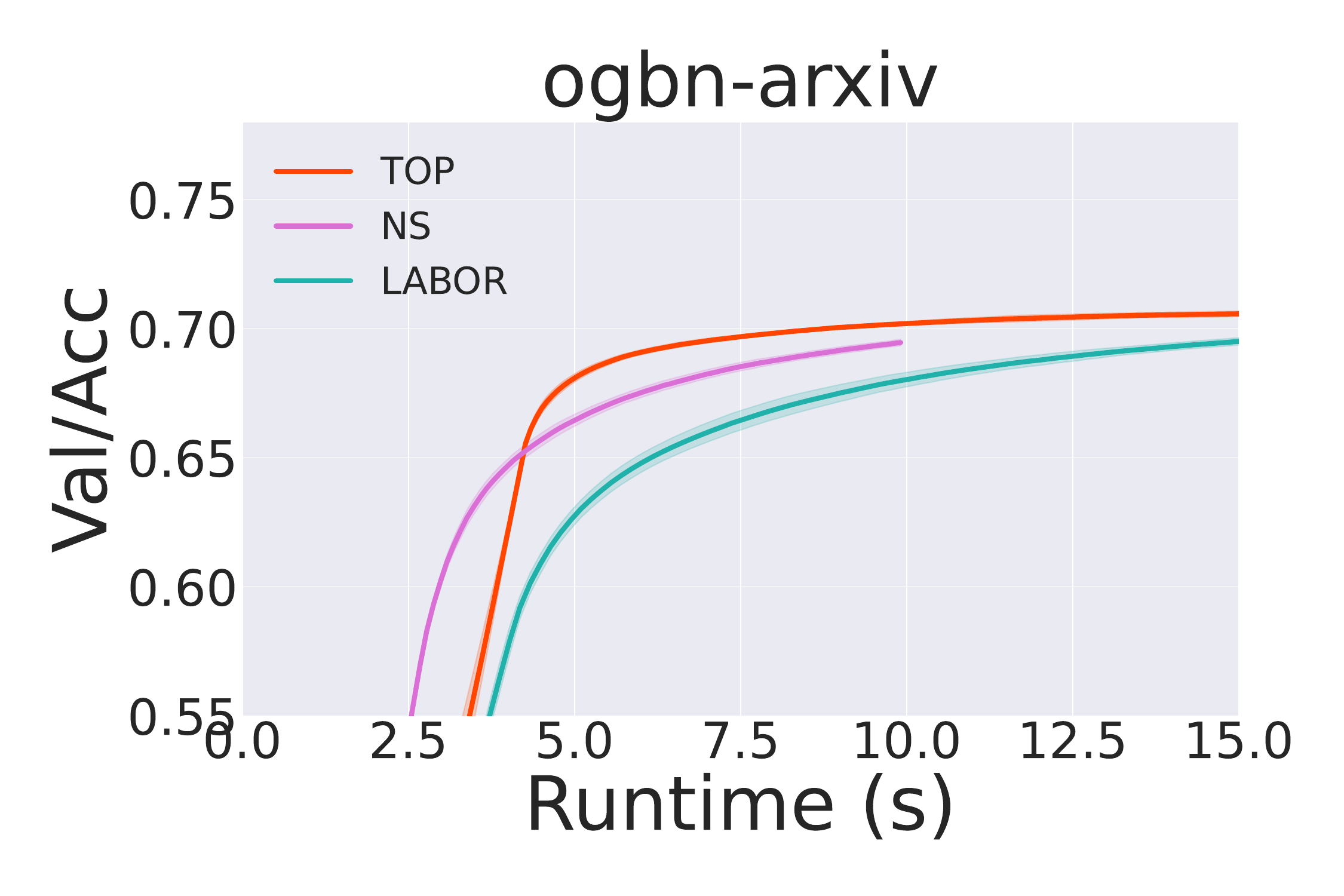} \label{fig:labor_convergence_arxiv}
    }
    \vspace{-3mm}
    \caption{
    \textbf{Memory consumption and convergence curves of TOP and node/layer-wise sampling}.
    }
    \label{fig:labor}

    \vspace{-6mm}
\end{figure*}

\subsection{Comparison with Node/Layer-wise Sampling} \label{sec:comp_nslabor}

\udfsection{Baselines and implementation details} We compare TOP with node-wise and layer-wise sampling methods including neighbor sampling (NS) \citep{graphsage} and LABOR \citep{labor} in Figure \ref{fig:labor}, where LABOR combines the advantages of node-wise and layer-wise sampling to accelerate convergence.
Unlike subgraph sampling, node/layer-wise sampling mainly focuses on the certain SAGE model \citep{graphsage}.
We run NS and LABOR by the official implementation of LABOR \citep{labor}.
The reported runtime includes the runtime of pre-processing.
We run experiments in this section on a single A800 card.

\udfsection{Hyperparameters.} 
For TOP, we uniformly sample nodes to construct subgraphs. To ensure a fair comparison, TOP follows the GNN architectures, data splits, training pipeline, learning rate, and hyperparameters of LABOR \citep{labor}.
We adjust the batch size of TOP such that the memory consumption of TOP is similar to LABOR.

\udfsection{Memory.} We first evaluate the GPU memory consumption in terms of the number of GNN layers in Figure \ref{fig:labor_memory}. We increase the number of GNN layers from two to seven.
The GPU memory of NS and LABOR increases exponentially with the number of GNN layers, and thus they are difficult to apply to deep GNNs (e.g. GCNII with six layers in Figure \ref{fig:runtime}).
The GPU memory of both CLUSTER \citep{cluster_gcn} and TOP increases linearly with the number of GNN layers, corresponding with the computational complexity in Table \ref{tab:complexity}.
The GPU memory of CLUSTER is slightly larger than TOP, as CLUSTER uses the layer-wise inference (like GAS, see Appendix \ref{sec:gcm2}) in the evaluation phase while TOP only uses the mini-batch information (see Equation \eqref{eqn:mini_batch_mn}).

\udfsection{Convergence curves.} We further report the convergence curves of TOP, NS, and LABOR in Figures \ref{fig:labor_convergence}, \ref{fig:labor_convergence_reddit}, and \ref{fig:labor_convergence_arxiv}. We have included the pre-processing time of TOP in the figures.
Although NS and LABOR do not require pre-processing, TOP finally outperforms NS and LABOR due to its powerful convergence.
The speedup of TOP against NS and LABOR is more than 2x on all datasets.

\vspace{-2mm}





\vspace{-2mm}

\section{Conclusion}

In this paper, we propose an accurate and fast subgraph sampling method, namely topological compensation (TOP), based on a novel concept of message invariance. Message invariance defines message-invariant transformations that convert expensive message passing acted on out-of-batch neighbors ({$\text{MP}_{\text{OB}}$}) into efficient message passing acted on in-batch nodes ({$\text{MP}_{\text{IB}}$}).
Based on the message invariance, the proposed TOP uses efficient {$\text{MP}_{\text{IB}}$} with limited performance degradation.
Experiments demonstrate that TOP is significantly faster than existing mini-batch methods by order of magnitude on vast graphs (millions of nodes and billions of edges) with limited performance degradation.
{While our experiments focus on message-invariant transformation for some common and simple GNNs, non-linear message-invariant transformation needs to be empirically evaluated for more GNNs with more complex aggregation.}
In the future, we plan to generalize our ideas to more GNNs or graph transformers with global communication.

\section*{Reproducibility Statement}

To ensure reproducibility, we provide key information from the main text and Appendix as follows.
\begin{enumerate}
    \item \textbf{Algorithm.}
    We provide the pseudocode of TOP in Algorithms \ref{alg:cap} and \ref{alg:top}. We also provide the detailed implementation of TOP in Appendix \ref{sec:implementation}.
    See Appendix \ref{sec:hyperparameters} for the hyperparameters of TOP.

    \item \textbf{Theoretical Proofs.}
    We provide all proofs in Appendix~\ref{sec:proof_convergence}.

    \item \textbf{Source Code.} The code of LMC is available on GitHub at \url{https://github.com/MIRALab-USTC/TOP}.

    \item \textbf{Experimental Details.}
    We provide the detailed experimental settings in Section~\ref{sec:exp} and Appendix~\ref{sec:implementation}.
\end{enumerate}


\subsubsection*{Acknowledgments}
The authors would like to thank all the anonymous reviewers for their valuable suggestions.
This work was supported by the National Key R\&D Program of China under contract 2022ZD0119801 and the National Nature Science Foundations of China grants U23A20388 and 62021001.

\bibliography{iclr2025_conference}
\bibliographystyle{iclr2025_conference}

\appendix



\newpage


\section{Background of Subgraph Sampling}  \label{sec:gcm}

Subgraph sampling is a general mini-batch framework for a wide range of GNN architectures.
For example, subgraph sampling  directly runs a GCN on the subgraph induced by a mini-batch \gongshi{$\mathcal{B}$}
\begin{align*}
    \mathbf{H}^{(L)}_{\mathcal{B}} \approx  \gcn(\mathbf{X}_{\mathcal{B}}, \norm(\mathbf{A}_{\mathcal{B},\mathcal{B}})),
\end{align*}
where \gongshi{$\norm (\cdot)$} normalizes the adjacency matrix of the subgraph \gongshi{$\mathbf{A}_{\mathcal{B},\mathcal{B}}$}.
For example, CLUSTER-GCN \citep{cluster_gcn} and GraphSAINT \citep{graphsaint} use \gongshi{$\norm(\mathbf{A}_{\mathcal{B},\mathcal{B}}) = \widetilde{\mathbf{A}_{\mathcal{B},\mathcal{B}}}$} and \gongshi{$\norm(\mathbf{A}_{\mathcal{B},\mathcal{B}}) = \mathbf{D}^{-1}_{\mathcal{B},\mathcal{B}}\mathbf{A}_{\mathcal{B},\mathcal{B}}$} respectively.



Compared with the whole message passing in Equation \eqref{eqn:transformation_conv}, subgraph sampling drops the edges \gongshi{$\mathcal{N}_{\mathcal{B}}^c \rightarrow \mathcal{B}$} from the original graph \gongshi{$\mathbf{A}_{\mathcal{B},\mathcal{N}_{\mathcal{B}}}$}, leading to significant approximation errors. Thus, the mini-batch selection of subgraph sampling aims to \textit{minimize the graph cut} from a topological similarity perspective, i.e.,
\begin{align}
    \min_{ \mathcal{B}} \| (\mathbf{A}_{\mathcal{B},\mathcal{B}}, \mathbf{O}) -  \mathbf{A}_{\mathcal{B},\mathcal{N}_{\mathcal{B}} } \|_0=\min_{ \mathcal{B}}|\mathcal{N}_{\mathcal{B}}^c|,\label{eqn:cut_minimization}
\end{align}
where \gongshi{$\mathbf{O} \in \mathbb{R}^{|\mathcal{B}| \times |\mathcal{N}_{\mathcal{B}}^c|}$} is a zero matrix.
Notably, as a connected graph cannot be divided into two disjointed subgraphs without dropping edges, the optimal value of \eqref{eqn:cut_minimization} is always positive in the connected graph.



To minimize the graph cut \gongshi{$|\mathcal{N}_{\mathcal{B}}^c|$}, the cluster-based samplers \citep{cluster_gcn, gas, lmc, graphfm} first adopt graph clustering (e.g., METIS \citep{metis1} and Graclus \citep{graclus}) to partition the large-scale graph into \gongshi{$\{\mathcal{B}_1, \mathcal{B}_2, \dots, \mathcal{B}_n\}$} with small \gongshi{$|\mathcal{N}_{{\mathcal{B}_i}}^c|$} and then sample a subgraph induced by \gongshi{$\mathcal{B}_i$}.
Besides, the random-walk based  sampler \citep{graphsaint} first uniformly samples root nodes and then generates random walks \gongshi{$\mathcal{B}$} starting from the root nodes, which decreases the graph cut \citep[Chap. 7]{dlg}. 

\subsection{METIS}\label{sec:metis}
METIS is a widely used graph clustering technique \citep{cluster_gcn,gas}. 
Graph clustering aims to construct partitions over the nodes in a graph such that intra-links within clusters occur much more frequently than inter-links between different clusters \citep{metis1}. 
Intuitively, this results that neighbors of a node are located in the same cluster with high probability. 
METIS minimizes the graph cut from a topological similarity perspective, i.e. Equation \eqref{eqn:cut_minimization}, to maintain enough information in the original graph, thus reducing the accesses of inaccurate compensation made by the subgraph sampling method, making the computation faster and more accurate.

However, METIS algorithm is too time-consuming \citep{graphsaint} on large datasets (e.g. Ogbn-products and Ogbn-papers). Thus, we uniformly sample nodes to construct subgraphs of large datasets.

\subsection{Historical Embeddings}

GAS \citep{gas} further compensates for the messages from the out-of-batch neighbors by historical embeddings, which are defined by
\begin{align} 
    \mathbf{Z}^{(l+1)}_{\mathcal{B}}  \approx \underbrace{\widetilde{\mathbf{A}}_{\mathcal{B},\mathcal{B}}\mathbf{H}^{(l)}_{\mathcal{B}}}_{\textrm{\footnotesize {$\text{MP}_{\text{IB}}$}}} + \underbrace{\widetilde{\mathbf{A}}_{\mathcal{B},\mathcal{N}_{\mathcal{B}}^c}
    \overline{\mathbf{H}}^{(l)}_{\mathcal{N}_{\mathcal{B}}^c}}_{\textrm{\footnotesize Bias}},
\end{align}
where \gongshi{$\overline{\mathbf{H}}^{(l)}$} are historical embeedings.
GAS pulls historical embeddings from RAM or hard drive storage, making it significantly faster and more memory-efficient than the methods computing real up-to-date embeddings.

However, the historical embeddings suffer from large approximation errors due to the staleness issue \citep{gas, graphfm, lmc}.
Specifically, GAS updates the historical embeddings in each mini-batch average once per epoch and keeps their values between two consecutive updates of the mini-batch historical embeddings.
Thus, if the size of the sampled subgraphs is significantly smaller than the whole graph, the update of historical embeddings is infrequent due to very low node sampling probability, leading to large approximation errors of GAS.
Moreover, as the number of the out-of-batch neighbors is more than that of the nodes in the mini-batch subgraph on large-scale graphs (see Table 6 in \citep{gas}), pulling a large number of historical embeddings is still expensive.

\subsection{Layer-wise Inference in Evaluation Phase}  \label{sec:gcm2}

To ensure the exact inference results on the large graphs, graph sampling usually adapts layer-wise inference, which iteratively updates all node embeddings at each layer without dropping edges. 
Specifically, the nodes are partitioned into \gongshi{$n$} mini-batches with batch size \gongshi{$|\mathcal{B}$|}, denoted as \gongshi{$\mathcal{B}_1,\mathcal{B}_2,...,\mathcal{B}_n$}.
At the \gongshi{$l$}-th layer, layer-wise inference traverses all mini-baches by
\begin{align*}
    \mathbf{H}^{(l+1)}_{\mathcal{B}_i} =  \gcn(\mathbf{H}^{(l)}_{\mathcal{N}_{\mathcal{B}_i}}, \widetilde{\mathbf{A}}_{\mathcal{B}_i,\mathcal{N}_{\mathcal{B}_i}}),\text{ for }i\in\{1,2,...,n\}.
\end{align*}

Then, layer-wise inference iteratively updates \gongshi{$\mathbf{H}^{(l+1)}$} on the entire graph based on the previous embeddings \gongshi{$\mathbf{H}^{(l)}$}.

For each computation within a batch, the input \gongshi{$ \mathbf{H}^{(l)}_{\mathcal{N}_{\mathcal{B}_i}}$} is exact, and the adjacency matrix \gongshi{$\widetilde{\mathbf{A}}_{\mathcal{B}_i,\mathcal{N}_{\mathcal{B}_i}}$} aggregates all the neighbor information. Therefore, \gongshi{$ \mathbf{H}^{(l+1)}_{\mathcal{N}_{\mathcal{B}_i}}$}  is also exact. Since the model is computed layer-wise, each layer's \gongshi{$ \mathbf{H}^{(l+1)}$} is exact. As a result, the final output of the model is exact inference results.

Due to the fact that this layer-wise inference requires the computation of a large amount of data beyond the evolution dataset, it can lead to potential computational redundancy, resulting in significant computational overhead.
TOP does not use this layer-wise inference in experiments, which significantly saves computational costs. 

\section{Implementation Details}\label{sec:implementation}

\subsection{TOP for Variant GNNs}\label{sec:TOP_for_GNNs}


We also extend TOP to the message passing framework for variant message passing-based GNNs. The \gongshi{$l$}-th layer of GNNs is defined as
\begin{align}\label{eqn:mp}
    \mathbf{h}_i^{(l+1)} = f^{(l+1)}\left(\mathbf{h}_i^{(l)}, \left\{\left\{\mathbf{h}_j^{(l)} \right\}\right\}_{j \in \mathcal{N}_i} \right),
\end{align}
where \gongshi{$\left\{\left\{ \dots \right\}\right\}$} denotes the multiset. We separate the neighborhood information in Equation \eqref{eqn:mp} of the multiset into two parts
\begin{align} \nonumber
    \mathbf{h}_i^{(l+1)} &= f^{(l+1)}\left(\mathbf{h}_i^{(l)}, \left\{\left\{\mathbf{h}_j^{(l)} \right\}\right\}_{j \in \mathcal{N}_i \cap \mathcal{B}}    \cup  \left\{\left\{\mathbf{h}_j^{(l)} \right\}\right\}_{j \in \mathcal{N}_i- \mathcal{B}} \right)\\
    &\approx f^{(l+1)}\left(\mathbf{h}_i^{(l)}, \left\{\left\{\mathbf{h}_j^{(l)} \right\}\right\}_{j \in \mathcal{N}_i \cap \mathcal{B}}    \cup  \left\{\left\{ \mathbf{r}_{j} \mathbf{H}_{\mathcal{B}} \right\}\right\}_{j \in \mathcal{N}_i- \mathcal{B}} \right), \label{eqn:mp_top}
\end{align}
where \gongshi{$\mathbf{r}_j$} is the \gongshi{$j$}-th row of the coefficient matrix \gongshi{$\mathbf{R}$}. Equation \eqref{eqn:mp_top} does not depend on the out-of-batch neighborhood information, achieving a linear computational complexity. We estimate the coefficient matrix \gongshi{$\mathbf{R}$} by
\begin{align}\label{eqn:min_mp}
    \min_{\mathbf{R}} \| \overline{\mathbf{H}}_{\mathcal{N}_{\mathcal{B}}^c} - \mathbf{R} \overline{\mathbf{H}}_{\mathcal{B}} \|_F,
\end{align}
We provide more details for the estimation of \gongshi{$\mathbf{R}$} in Appendix \ref{sec:fecm}.

\subsection{Implementation of GCNII}\label{sec:gcn_jk}

We follow the implementation\footnote{https://github.com/rusty1s/pyg\_autoscale} of GAS \citep{gas}, which introduces the jumping knowledge connection \citep{jknet} to accelerate the convergence \citep{acsc} for some GNN models.

We first run GCNII \citep{gcnii} to generate embeddings \gongshi{$\mathbf{H}^{(l)}_{\mathcal{B}}$} for each GNN layer \gongshi{$l$}.
Then, we compute the final embeddings by the jumping knowledge connection \citep{jknet}
\begin{align*}
   \mathbf{H}^{final}_{\mathcal{B}} = MLP^{output}( \frac{1}{L+1}\sum_{l=0}^{L} MLP^{(l)}(\mathbf{H}^{(l)}_{\mathcal{B}})),
\end{align*}
where MLP is a multi-layer perceptron.
We find the best hyperparameters \gongshi{$\alpha,\lambda$} of GCNII by grid search on the Ogbn-products and Ogbn-papers dataset.

\subsection{Hyperparameters} \label{sec:hyperparameters}

\udfsection{Comparison with subgraph sampling.} To ensure a fair comparison, we follow the GNN architectures, the data splits, training pipeline, and hyperparameters of GCN and PNA in \citep{gas}.
We search the best hyperparameters of GCNII, GAT, and SAGE for TOP, CLUSTER, and GAS in the same set.

\udfsection{Comparison with node/layer-wise sampling.} 
We run NS and LABOR by the official implementation\footnote{https://github.com/dmlc/dgl/tree/master/examples/pytorch/labor} of LABOR \citep{labor} and corresponding hyperparameters. For TOP, we uniformly sample nodes to construct subgraphs. To ensure a fair comparison, TOP follows the data splits, training pipeline, learning rate, and hyperparameters of LABOR \citep{labor}.
We adapt the batch size of TOP such that the memory consumption of TOP is similar to LABOR.

\begin{table*}[t]
    \centering
    \caption{
    \textbf{Time and space complexity per gradient update of full-batch gradient descent with whole message passing (Full-batch), CLUSTER \citep{cluster_gcn}, GAS \citep{gas}, LMC \citep{lmc}, and TOP.} 
    }
    \label{tab:complexity}
    \resizebox{1.0\linewidth}{!}{
    \begin{tabular}{lccc}
    \toprule
        \textbf{Method} & \textbf{Time complexity}  &\textbf{GPU Memory}  & \textbf{Neighborhood Compensation} \\
        \midrule
        Full-batch  & $\mathcal{O}(L(|\mathcal{E}|d+|\mathcal{V}| d^2))$ & $\mathcal{O}(L|\mathcal{V}| d)$ &  $\checkmark$\\
        CLUSTER  & $\mathcal{O}( L(deg_{\max}|\mathcal{B}|d+|\mathcal{B}| d^2) )$ & $\mathcal{O}(L|\mathcal{B}| d)$ &  $\times$\\
        GAS and LMC & $\mathcal{O}( L(deg_{\max}|\mathcal{B}|d+|\mathcal{B}| d^2) )$ & $\mathcal{O}( deg_{\max} L|\mathcal{B}| d)$ &  $\checkmark$\\
        \midrule
        TOP & $\mathcal{O}( L(deg_{\max}|\mathcal{B}|d+|\mathcal{B}| (d^2 +k^2)) )$ & $\mathcal{O}(L|\mathcal{B}| d)$ &  $\checkmark$\\
    \bottomrule
    \end{tabular}
    }
\end{table*}

\subsection{Fast Estimation of Coefficient Matrix}\label{sec:fecm}

We compute the coefficient matrix \gongshi{$\mathbf{R}$} by solving Equation \gongshi{$\mathbf{H}^{(l)}_{\mathcal{N}_{\mathcal{B}}^c} = \mathbf{R} \mathbf{H}^{(l)}_{\mathcal{B}}$}.
If the size of the subgraph \gongshi{$|\mathcal{B}|$} is large, then solving the linear equation \gongshi{$\mathbf{H}^{(l)}_{\mathcal{N}_{\mathcal{B}}^c} = \mathbf{R} \mathbf{H}^{(l)}_{\mathcal{B}}$} is expensive. As the rank of  \gongshi{$\overline{\mathbf{H}}_{\mathcal{B}}$} is less than the hidden dimension \gongshi{$d << |\mathcal{B}|$}, there exists a low-rank matrix decomposition such that
\begin{align*}
    \overline{\mathbf{H}}_{\mathcal{B}} = \mathbf{Q}\mathbf{Q}^{\top}\overline{\mathbf{H}}_{\mathcal{B}},
\end{align*}
where \gongshi{$\mathbf{Q} \in \mathbb{R}^{|\mathcal{B}| \times k}$} has orthogonal columns.
\gongshi{$k \geq d$} is a {hyperparameter}. We use \gongshi{$k=d$} in all experiments.
We use the proto-algorithm \citep{rsvd} to efficiently compute \gongshi{$\mathbf{Q}$}.
By letting \gongshi{$ \hat{\mathbf{R}} = \mathbf{R} \mathbf{Q} \in \mathbb{R}^{|\mathcal{N}_{\mathcal{B}}^c| \times d}$}, Equation \eqref{eqn:min_mp} becomes
\begin{align}\label{eqn:solver}
    \min_{\hat{\mathbf{R}}} \|\mathbf{Y}_{\mathcal{B}}(\overline{\mathbf{H}}) - \widetilde{\mathbf{A}}_{\mathcal{B}, \mathcal{N}_{\mathcal{B}}^c}\hat{\mathbf{R}} (\mathbf{Q}^{\top} \overline{\mathbf{H}}_{\mathcal{B}})  \|_F.
\end{align}
Further, we uniformly sample a small set \gongshi{$\mathcal{S}$} with \gongshi{$|\mathcal{S}| = k$} from \gongshi{$\mathcal{B}$} to reduce the costs by
\begin{align*}
     \min_{\hat{\mathbf{R}}} \| \mathbf{Y}_{\mathcal{B}}(\overline{\mathbf{H}}) - \widetilde{\mathbf{A}}_{\mathcal{B}, \mathcal{N}_{\mathcal{B}}^c} \hat{\mathbf{R}} (\mathbf{Q}_{\mathcal{S}}^{\top} \overline{\mathbf{H}}_{\mathcal{S}})  \|_F,
\end{align*}
which is equivalent to
\begin{align*}
    \min_{\hat{\mathbf{R}}} \| \widetilde{\mathbf{A}}_{\mathcal{B}, \mathcal{N}_{\mathcal{B}}^c} ( \overline{\mathbf{H}}_{\mathcal{N}_{\mathcal{B}}^{c}} -  \hat{\mathbf{R}} (\mathbf{Q}_{\mathcal{S}}^{\top} \overline{\mathbf{H}}_{\mathcal{S}}) )  \|_F.
\end{align*}

Since \gongshi{$\overline{\mathbf{H}}_{\mathcal{S}}$} is usually the full-column-rank matrix, we can compute \gongshi{$\hat{\mathbf{R}}$} by \gongshi{$\hat{\mathbf{R}} = \mathbf{H}_{\mathcal{N}_{\mathcal{B}}^{c}} (\mathbf{Q}_{\mathcal{S}}^{\top} \overline{\mathbf{H}}_{\mathcal{S}})^{\dagger}$} and then save \gongshi{$\partial \hat{\mathbf{A}}_{\mathcal{B}} = \widetilde{\mathbf{A}}_{\mathcal{B},\mathcal{N}_{\mathcal{B}}^c}\hat{\mathbf{R}}$} in the pre-processing phase, where \gongshi{$(\mathbf{Q}_{\mathcal{S}}^{\top} \overline{\mathbf{H}}_{\mathcal{S}})^{\dagger}$} is the Moore-Penrose inverse of \gongshi{$\mathbf{Q}_{\mathcal{S}}^{\top} \overline{\mathbf{H}}_{\mathcal{S}}$}.
At the training phase, Equation \eqref{eqn:mini_batch_mn} becomes
\begin{align}
    \mathbf{Z}^{(l+1)}_{\mathcal{B}}=\widetilde{\mathbf{A}}_{\mathcal{B},\mathcal{B}}\mathbf{H}^{(l)}_{\mathcal{B}} + \partial \hat{\mathbf{A}}_{\mathcal{B}}(\mathbf{Q}^{\top}_{\mathcal{S}} \mathbf{H}^{(l)}_{\mathcal{S}}),\label{eqn:mini_batch_top_efficient}
\end{align}
where \gongshi{$\partial \hat{\mathbf{A}}_{\mathcal{B}} \in \mathbb{R}^{|\mathcal{B}| \times k}$}, \gongshi{$\hat{\mathbf{Q}}_{{\mathcal{S}}} \in \mathbb{R}^{k \times k}$}, and \gongshi{$\mathbf{H}^{(l)}_{\mathcal{S}} \in \mathbb{R}^{k \times d}$}.
The time complexity of the second term in Equation \ref{eqn:mini_batch_top_efficient} is \gongshi{$\mathcal{O}(|\mathcal{B}|k^2+k^2d)$}, which is significantly lower than that in Equation \eqref{eqn:mini_batch_mn}, i.e., \gongshi{$\mathcal{O}(|\mathcal{B}|^2d)$}, as \gongshi{$|\mathcal{B}|>>d$}.

The analysis for message passing-based GNNs is similar.

\subsection{Decrease Approximation Errors by More Basic Embeddings}

By concatenating $N$ basic embeddings at different random initialization, we can estimate $\mathbf{R}$ by $(\overline{\mathbf{H}}(\mathcal{W}^{(rand)}_1),\overline{\mathbf{H}}(\mathcal{W}^{(rand)}_2),\dots,\overline{\mathbf{H}}(\mathcal{W}^{(rand)}_N)) \in \mathbb{R}^{n \times (T+1)dN}$, which decreases the approximation errors as $N$ increases.

\subsection{Complexity Analysis}\label{sec:complexity}

We summarize TOP in Algorithms \ref{alg:cap} and \ref{alg:top}.
TOP first pre-processes the topological compensation by Algorithm \ref{alg:cap} and then reuses the topological compensation during the training phase.

\begin{algorithm}
\caption{Pre-processing phase of TOP}\label{alg:cap}
\begin{algorithmic}[1]
    \State {\bfseries Input:} Mini-batches $\{\mathcal{B}_i\}_{i=1}^m$
    \State Compute $\mathbf{H}^{(l)}$ with a model at random initialization.
    \For{$i=1,...,m$}
    \State Compute $\mathbf{Q}_{\mathcal{S}_i}$ by the proto-algorithm.
    \State Compute $\hat{\mathbf{R}}$ by solving Equation \eqref{eqn:solver}.
    \State Compute $\partial \hat{\mathbf{A}}_{\mathcal{B}_i}=\widetilde{\mathbf{A}}_{\mathcal{B}_i,\mathcal{N}_{\mathcal{B}_i}^c}\hat{\mathbf{R}}$ 
    \EndFor
    \State Save $\{\mathbf{Q}_{\mathcal{S}_i}\}_{i=1}^m$ and $\{\partial \hat{\mathbf{A}}_{\mathcal{B}_i}\}_{i=1}^m$
    \State {\bfseries Output:} $\{\mathbf{Q}_{\mathcal{S}_i}\}_{i=1}^m$ and $\{\partial \hat{\mathbf{A}}_{\mathcal{B}_i}\}_{i=1}^m$
\end{algorithmic}
\end{algorithm}

\begin{algorithm}
\caption{Training phase of TOP}\label{alg:top}
\begin{algorithmic}[1]
    \State {\bfseries Input:} Mini-batches $\{\mathcal{B}_i\}_{i=1}^m$, $\{\mathbf{Q}_{\mathcal{S}_i}\}_{i=1}^m$, and $\{\partial \hat{\mathbf{A}}_{\mathcal{B}_i}\}_{i=1}^m$
    \For{\gongshi{$i = 1, \dots, N$}}
            \State Randomly sample \gongshi{$\mathcal{B}_{i}$} from \gongshi{$\{\mathcal{B}_i\}_{i=1}^m$}
            \State Initialize \gongshi{$\mathbf{H}^{(0)}_{\mathcal{B}_i}=\mathbf{X}_{\mathcal{B}_i}$}
            \For{\gongshi{$l=0,\dots,L-1$}}
                \State Compute $\mathbf{H}^{(l+1)}_{\mathcal{B}_i} = \sigma( \widetilde{\mathbf{A}}_{\mathcal{B},\mathcal{B}} \mathbf{H}^{(l)}_{\mathcal{B}_i} +(\partial \hat{\mathbf{A}}_{\mathcal{B}_i} (\mathbf{Q}_{\mathcal{S}_i}^T \mathbf{H}^{(l)}_{\mathcal{B}_i}) )  \mathbf{W}^{(l)}) $
            \EndFor
            \State Compute the mini-batch loss
            \State Update parameters by backward propagation
    \EndFor
\end{algorithmic}
\end{algorithm}

As the costs of pre-processing are marginal, we compare the computational complexity of the training phase in Table \ref{tab:complexity}. TOP compensates for the neighborhood messages with the least time and memory complexity among existing subgraph sampling methods.

\section{More Experiments}\label{sec:more_exp}

\subsection{Measuring Message Invariance in Real-world Datasets.}\label{sec:ams_exp}

We conduct extensive experiments on four real-world datasets with five GNN backbones to demonstrate that the message invariance holds in real-world datasets.
Table \ref{tab:ams} shows that the relative approximation errors of TOP are less than 5\% and the test accuracy of TOP is very close to the whole message passing.

\begin{table}
  \centering
  \caption{%
    \textbf{Message invariance in real-world datasets.} TOP approximates the whole message passing solely through {$\text{MP}_{\text{IB}}$} with marginal approximation errors.
  }\label{tab:ams}
  \setlength{\tabcolsep}{2pt}
  \resizebox{\linewidth}{!}{%
    \begin{tabular}{c|c|c|rrrrr|rrrrr}
    \toprule
 \mr{2}{\textbf{Dataset}}    &   \mr{2}{\textbf{GNN}}      & \mr{2}{\textbf{Methods}}      & 
 \multicolumn{5}{c|}{{\textbf{Relative approximation errors} $\downarrow$}}
  & \mc{5}{c}{\textbf{Test accuracy degradation $\downarrow$}} \\
     &                    &       & 10\%  & 20\%  & 30\%  & 40\%  & 50\%  & 10\%  & 20\%  & 30\%  & 40\%  & 50\% \\
    \midrule
\multirow{12}{*}{Ogbn-arxiv}   & \multirow{3}{*}{GCN}   & CLUSTER & 23.8\%  & 20.2\%  & 18.0\%  & 15.8\%  & 12.1\%  & 5.03\%  & 3.67\%  & 2.99\% & 2.66\% & 1.40\% \\
                          &                        & GAS     & 17.1\%  & 15.3\%  & 14.1\%  & 12.8\%  & 9.9\%   & 0.80\%  & 0.78\%  & 0.58\% & 0.61\% & 0.33\% \\
                          &                        & TOP     & \textbf{3.5}\%   & \textbf{2.8}\%   & \textbf{2.4}\%   & \textbf{2.2}\%   & \textbf{1.6}\%   & \textbf{0.15}\%  & \textbf{0.10 }\% & \textbf{0.15}\% & \textbf{0.12}\% & \textbf{0.13}\% \\ \cmidrule{2-13}
                          & \multirow{3}{*}{GCNII} & CLUSTER & 13.5\%  & 11.5\%  & 10.2\%  & 9.0\%   & 6.9\%   & 4.72\%  & 3.60\%  & 2.99\% & 2.38\% & 1.90\% \\ 
                          &                        & GAS     & 9.6\%   & 8.8\%   & 8.1\%   & 7.3\%   & 6.0\%   & 2.37\%  & 2.06\%  & 1.88\% & 1.67\% & 1.46\% \\
                          &                        & TOP     & \textbf{2.5}\%   & \textbf{2.3}\%   & \textbf{2.0}\%   & \textbf{1.8}\%   & \textbf{1.4 }\%  & \textbf{0.42}\%  & \textbf{0.28}\%  & \textbf{0.35}\% & \textbf{0.17}\% & \textbf{0.18}\% \\ \cmidrule{2-13}
                          & \multirow{3}{*}{SAGE}  & CLUSTER & 15.58\% & 13.08\% & 11.08\% & 9.91\%  & 7.06\%  & 4.37\%  & 3.69\%  & 3.04\% & 2.80\% & 1.91\% \\
                          &                        & GAS     & 24.77\% & 20.97\% & 16.80\% & 16.02\% & 10.47\% & 6.84\%  & 4.83\%  & 3.98\% & 3.75\% & 1.77\% \\
                          &                        & TOP     & \textbf{4.38}\%  & \textbf{3.69}\%  & \textbf{3.21}\%  & \textbf{2.89}\%  & \textbf{2.03}\%  & \textbf{0.08}\%  & \textbf{0.11}\%  & \textbf{0.01}\% & \textbf{0.10 }\%& \textbf{0.00}\% \\ \cmidrule{2-13}
                          & \multirow{3}{*}{GAT}   & CLUSTER & 23.37\% & 20.22\% & 18.00\% & 16.27\% & 12.67\% & 5.99\%  & 4.34\%  & 3.73\% & 3.09\% & 2.04\% \\
                          &                        & GAS     & 14.96\% & 13.47\% & 12.51\% & 11.53\% & 9.08\%  & 1.17\%  & 0.97\%  & 0.89\% & 0.80\% & 0.67\% \\
                          &                        & TOP     & \textbf{3.41}\%  & \textbf{2.99}\%  & \textbf{2.56}\%  & \textbf{2.30 }\% & \textbf{1.63}\%  & \textbf{0.15}\%  & \textbf{0.16}\%  & \textbf{0.10}\% & \textbf{0.08}\% & \textbf{0.08}\% \\ \midrule
\multirow{12}{*}{Reddit}  & \multirow{3}{*}{GCN}   & CLUSTER & 29.20\% & 22.10\% & 18.10\% & 16.02\% & 11.53\% & 3.85\%  & 3.05\%  & 2.34\% & 1.88\% & 1.14\% \\
                          &                        & GAS     & 27.13\% & 23.87\% & 21.07\% & 18.69\% & 15.00\% & 1.27\%  & 1.22\%  & 0.82\% & 0.68\% & 0.49\% \\
                          &                        & TOP     & \textbf{0.78}\%  & \textbf{0.65}\%  & \textbf{0.55}\%  & \textbf{0.53}\%  & \textbf{0.39}\%  & \textbf{0.09}\%  & \textbf{0.09}\%  & \textbf{0.08}\% & \textbf{0.06}\% & \textbf{0.03}\% \\ \cmidrule{2-13}
                          & \multirow{3}{*}{GCNII} & CLUSTER & 25.32\% & 19.62\% & 16.45\% & 14.94\% & 11.29\% & 4.79\%  & 3.59\%  & 2.45\% & 2.78\% & 1.46\% \\
                          &                        & GAS     & 32.87\% & 30.74\% & 27.52\% & 26.00\% & 23.05\% & 7.76\%  & 5.82\%  & 4.41\% & 4.51\% & 2.77\% \\
                          &                        & TOP     & \textbf{4.54}\%  & \textbf{4.75}\%  & \textbf{5.22}\%  & \textbf{3.62}\%  & \textbf{2.58}\%  & \textbf{0.33}\%  & \textbf{0.41}\%  & \textbf{0.54}\% & \textbf{0.31}\% & \textbf{0.30}\% \\ \cmidrule{2-13}
                          & \multirow{3}{*}{SAGE}  & CLUSTER & 10.74\% & 7.58\%  & 6.80\%  & 5.35\%  & 3.44\%  & 3.84\% & 2.85\% & 2.60\% & 2.13\% & 1.37\% \\
                          &                        & GAS     & 7.03\%  & 5.95\%  & 5.64\%  & 3.43\%  & 1.90\%  & 0.94\% & 0.60\% & 0.79\% & 0.41\% & 0.43\% \\
                          &                        & TOP     & \textbf{1.31}\%  & \textbf{1.10}\%  & \textbf{1.05}\%  & \textbf{0.85}\%  & \textbf{0.61}\%  & \textbf{0.31}\% & \textbf{0.27}\% & \textbf{0.21}\% & \textbf{0.19}\% & \textbf{0.11}\% \\ \cmidrule{2-13}
                          & \multirow{3}{*}{PNA}   & CLUSTER & 24.13\% & 21.40\% & 18.60\% & 16.70\% & 13.60\% & 4.16\%  & 3.20\%  & 1.76\% & 1.43\% & 1.13\% \\
                          &                        & GAS     & 22.39\% & 20.16\% & 18.77\% & 16.54\% & 13.38\% & 2.60\%  & 2.49\%  & 1.74\% & 1.64\% & 0.72\% \\
                          &                        & TOP     & \textbf{13.01}\% & \textbf{11.18}\% & \textbf{10.48}\% & \textbf{9.35}\%  & \textbf{7.42}\%  & \textbf{1.48}\%  & \textbf{1.28}\%  & \textbf{0.76}\% & \textbf{0.97}\% & \textbf{0.62}\% \\\midrule
\multirow{6}{*}{Yelp}     & \multirow{3}{*}{GCNII} & CLUSTER & 5.74\%  & 4.48\%  & 3.82\%  & 3.25\%  & 2.38\%  & 0.89\%  & 0.57\%  & 0.45\% & 0.34\% & 0.21\% \\
                          &                        & GAS     & 7.36\%  & 6.52\%  & 5.84\%  & 5.15\%  & 4.01\%  & 1.08\%  & 0.93\%  & 0.77\% & 0.64\% & 0.50\% \\
                          &                        & TOP     & \textbf{1.36}\%  & \textbf{1.24}\%  & \textbf{1.14}\%  & \textbf{1.05}\%  & \textbf{0.86}\%  & \textbf{0.13}\%  & \textbf{0.11}\%  & \textbf{0.09}\% & \textbf{0.08}\% & \textbf{0.05}\% \\ \cmidrule{2-13}
                          & \multirow{3}{*}{SAGE}  & CLUSTER & 15.55\% & 12.53\% & 10.85\% & 9.35\%  & 6.99\%  & 1.13\%  & 0.87\%  & 0.66\% & 0.56\% & 0.35\% \\
                          &                        & GAS     & 5.21\%  & 4.65\%  & 4.19\%  & 3.85\%  & 2.96\%  & 0.77\%  & 0.62\%  & 0.59\% & 0.53\% & 0.40\% \\
                          &                        & TOP     & \textbf{2.41}\%  & \textbf{2.18}\%  & \textbf{2.02}\%  & \textbf{1.84}\%  & \textbf{1.54}\%  & \textbf{0.07}\%  & \textbf{0.07}\%  & \textbf{0.05}\% & \textbf{0.04}\% & \textbf{0.04}\% \\\midrule
\multirow{3}{*}{Ogbn-products} & \multirow{3}{*}{SAGE}  & CLUSTER & 9.55\%  & 8.50\%  & 7.43\%  & 6.91\%  & 5.34\%  & 1.67\%  & 1.67\%  & 1.67\% & 1.67\% & 1.67\% \\
                          &                        & GAS     & 2.44\%  & 2.18\%  & 1.91\%  & 1.80\%  & 1.37\%  & 0.37\%  & 0.34\%  & 0.31\% & 0.27\% & 0.19\% \\
                          &                        & TOP     & \textbf{0.86}\%  & \textbf{0.73}\%  & \textbf{0.63}\%  & \textbf{0.58}\%  & \textbf{0.44}\%  & \textbf{0.17}\%  & \textbf{0.14}\%  & \textbf{0.12}\% & \textbf{0.11}\% & \textbf{0.11}\%\\\midrule
\multicolumn{2}{c|}{\multirow{3}{*}{\textbf{Average}}}       & CLUSTER & 17.86\% & 14.66\% & 12.67\% & 11.22\% & 8.48\%  & 3.68\% & 2.83\% & 2.24\% & 1.97\% & 1.33\% \\
\multicolumn{2}{l|}{}                               & GAS     & 15.54\% & 13.88\% & 12.40\% & 11.19\% & 8.83\%  & 2.36\% & 1.88\% & 1.52\% & 1.41\% & 0.88\% \\
\multicolumn{2}{l|}{}                               & TOP     & \textbf{3.46}\%  & \textbf{3.05}\%  & \textbf{2.85}\%  & \textbf{2.45}\%  & \textbf{1.86}\%  & \textbf{0.31}\% & \textbf{0.27}\% & \textbf{0.22}\% & \textbf{0.20}\% & \textbf{0.15}\%\\
    \bottomrule
  \end{tabular}
  }
\end{table}

\subsection{Convergence Curves (Test Accuracy vs. Epochs)}

We provide the convergence curves (test accuracy vs. epochs) in Figure \ref{fig:epoch}.
Notably, we report the test accuracy of the full-batch gradient descent (GD) every two steps rather than per epoch, as GD performs backward backpropagation once per epoch while other methods perform backward backpropagation twice per epoch on the Ogbn-arxiv and Reddit datasets.
The convergence curves of TOP are close to GD on the Ogbn-arxiv and Reddit datasets, while other subgraph sampling methods fail to resemble the full-batch performance on the Ogbn-arxiv dataset.
Moreover, TOP significantly accelerates the convergence on the medium and large datasets, e.g., Yelp, Ogbn-products, and Ogbn-papers.

\begin{figure*}[t!]
    \centering  
    \subfigure[GCN on small datasets]{
        \label{subfig:small_datasets_epoch}
        \includegraphics[width=0.31\textwidth]{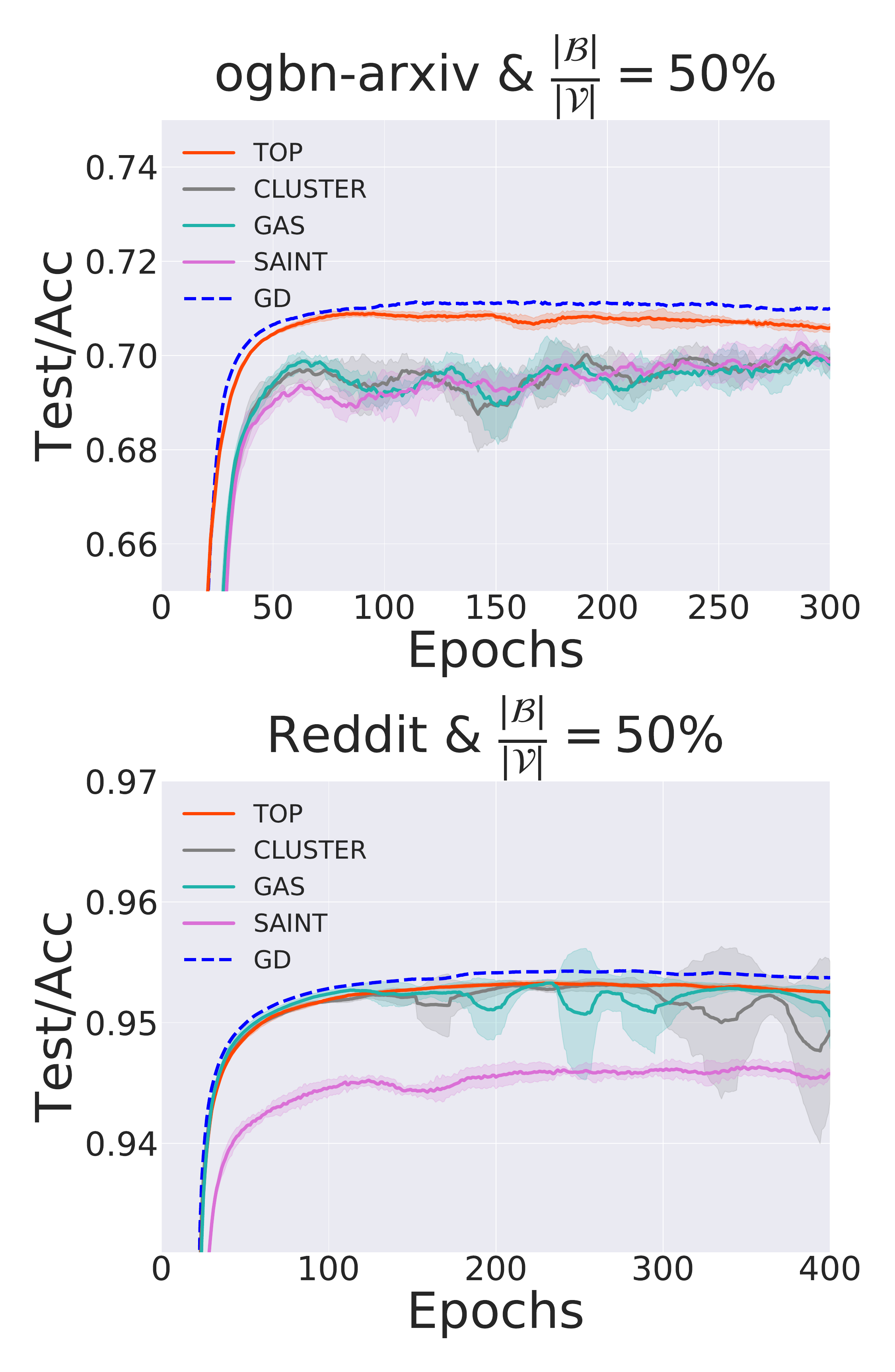}}
    \subfigure[GCNII on medium datasets]{
        \label{subfig:medium_datasets_epoch}
        \includegraphics[width=0.31\textwidth]{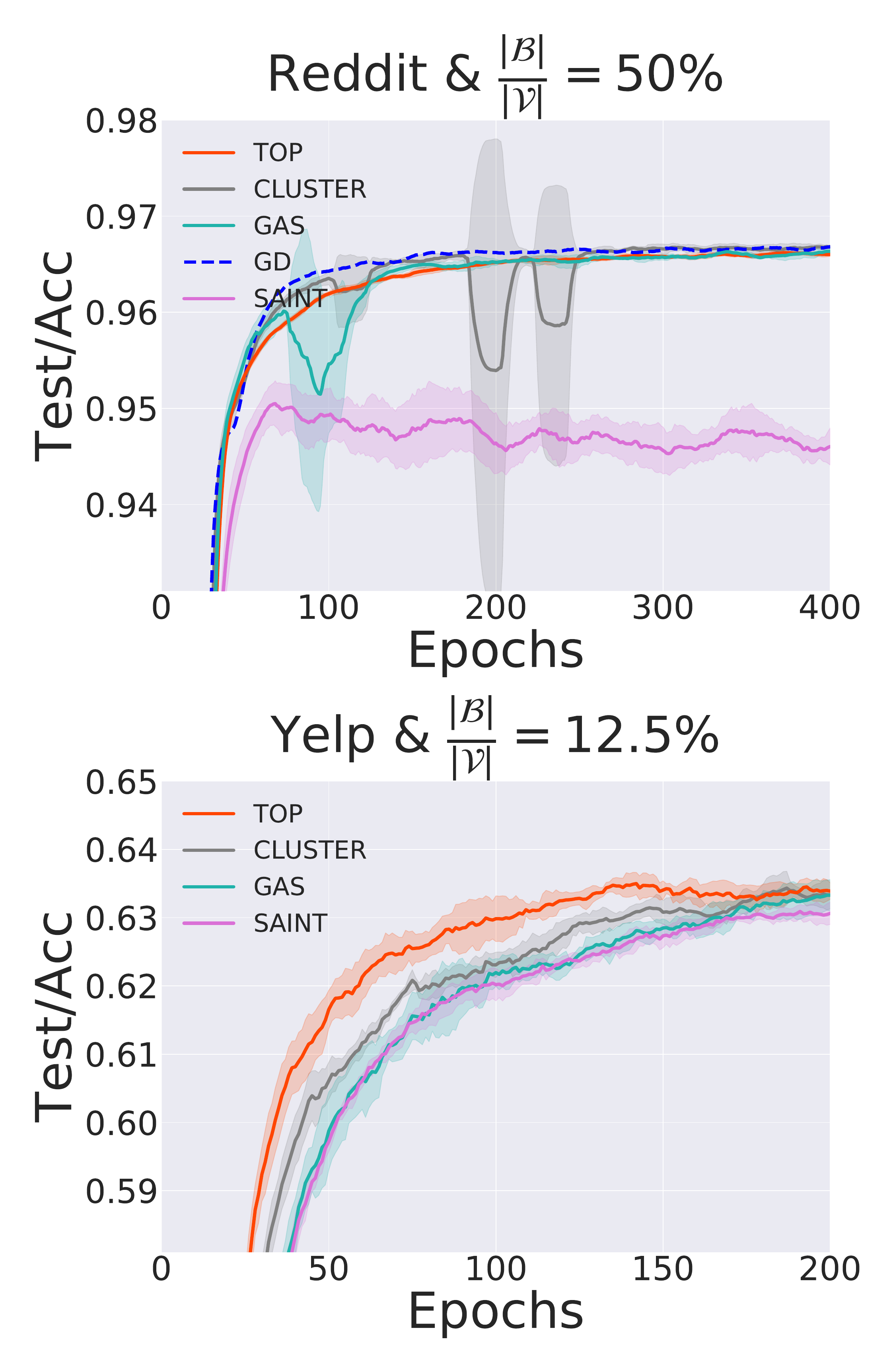}}
    \subfigure[GCNII+JK on large datasets]{
        \label{subfig:large_datasets_epoch}
        \includegraphics[width=0.31\textwidth]{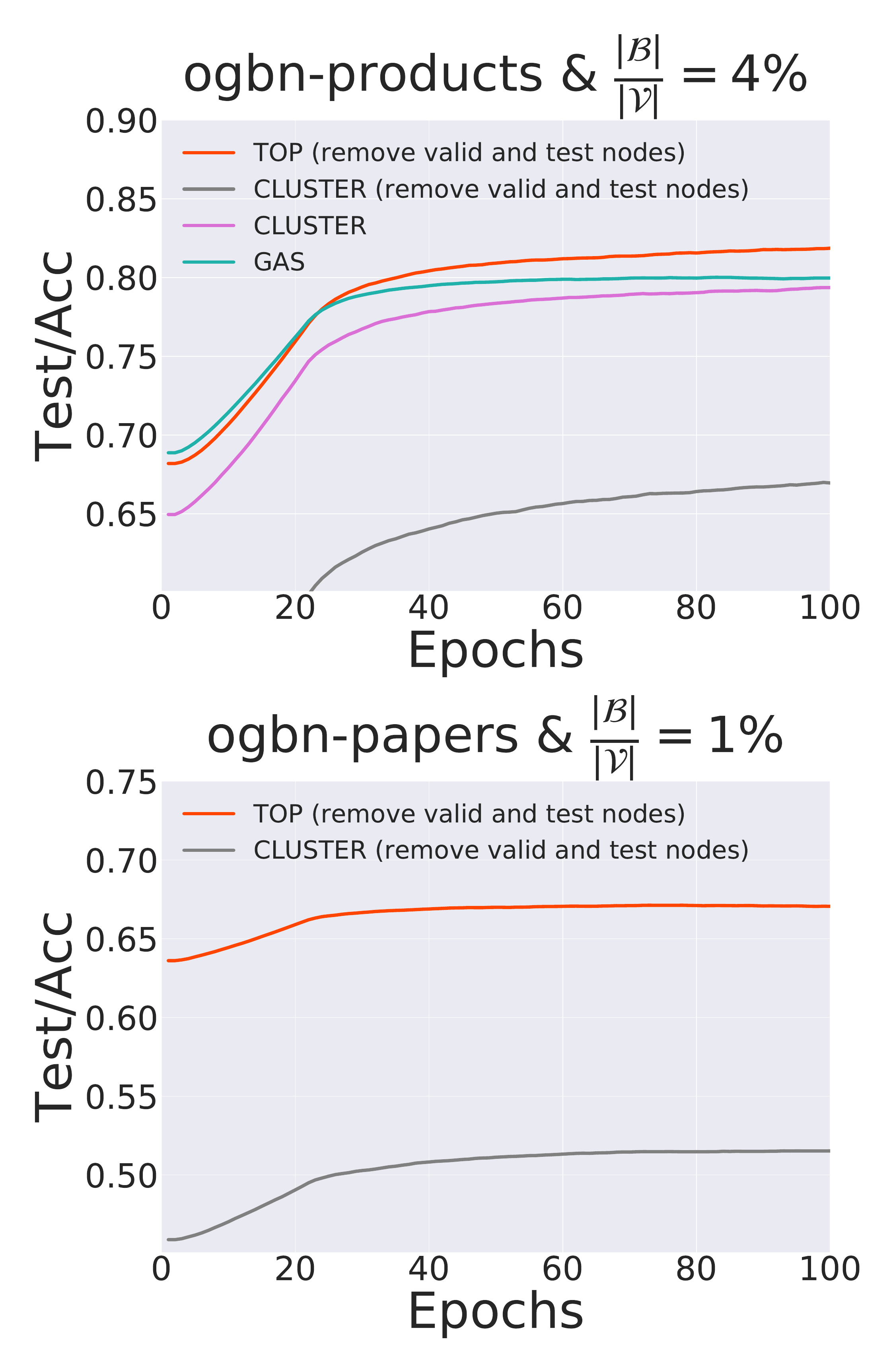}}
    \caption{
        \textbf{Convergence curves (test accuracy vs. epoch).} \gongshi{$|\mathcal{B}|$} and \gongshi{$|\mathcal{V}|$} denote the sizes of  subgraphs and the whole graph respectively.
    }\label{fig:epoch}
\end{figure*}

\subsection{TOP on Architecture Variants}\label{sec:top_av}

We compare subgraph sampling methods (including TOP, CLUSTER \citep{cluster_gcn}, GAS \citep{gas}, SAINT \citep{graphsaint}, and IBMB \citep{ibmb}) on more GNN architectures (i.e., GAT \citep{gat} and SAGE \citep{graphsage}) in Figure \ref{fig:gat_sage}.
TOP is faster than the existing subgraph sampling methods on GAT and SAGE architectures due to its powerful convergence and high efficiency.
The experiments demonstrate that TOP is a general framework for different GNN architectures.

\begin{figure*}[h]
    \centering
    \subfigure[TOP with GCN]{
		\includegraphics[width=0.31\textwidth]{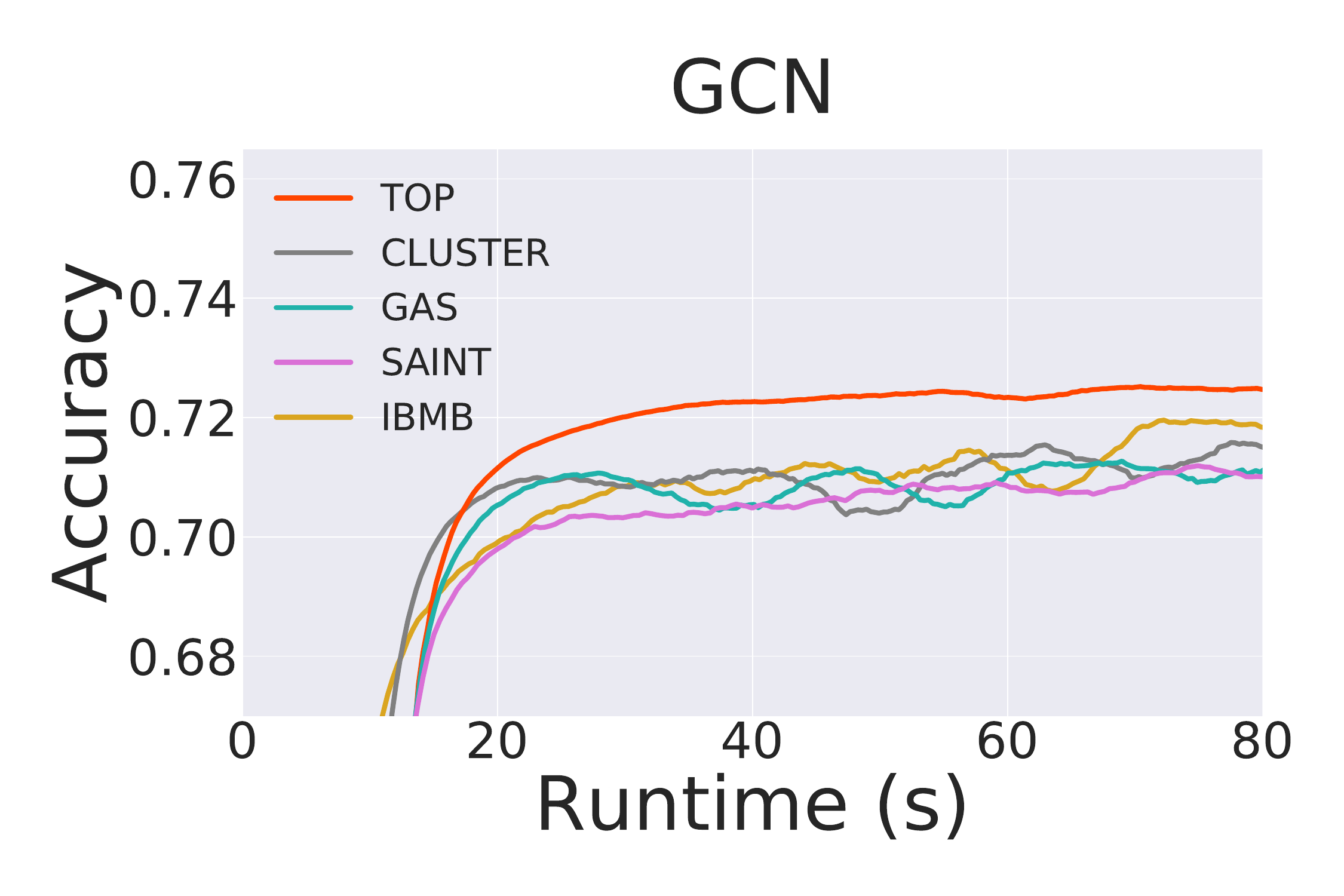}
    }
    \subfigure[TOP with GAT]{
        \includegraphics[width=0.31\textwidth]{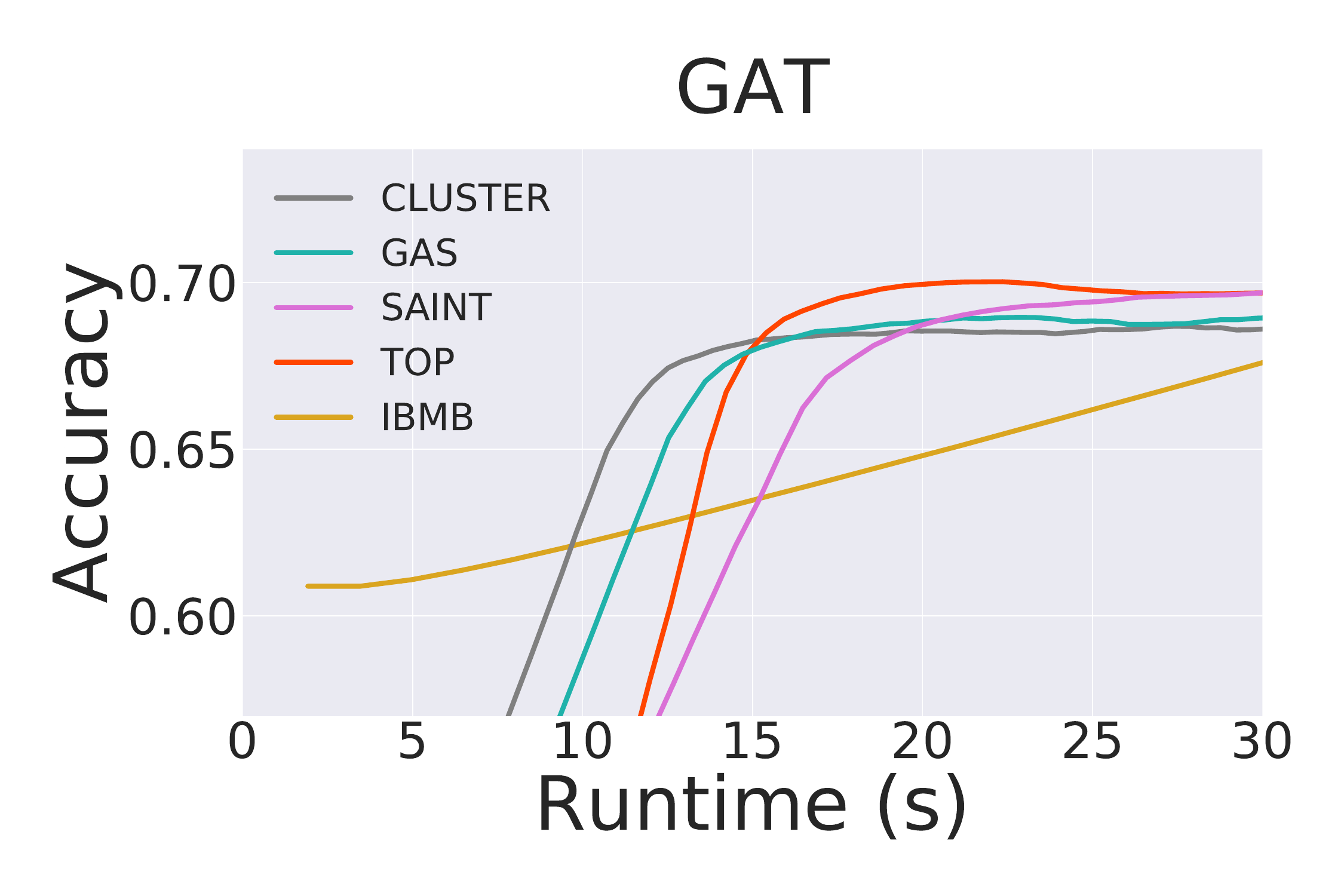}
    }
    \subfigure[TOP with SAGE]{
		\includegraphics[width=0.31\textwidth]{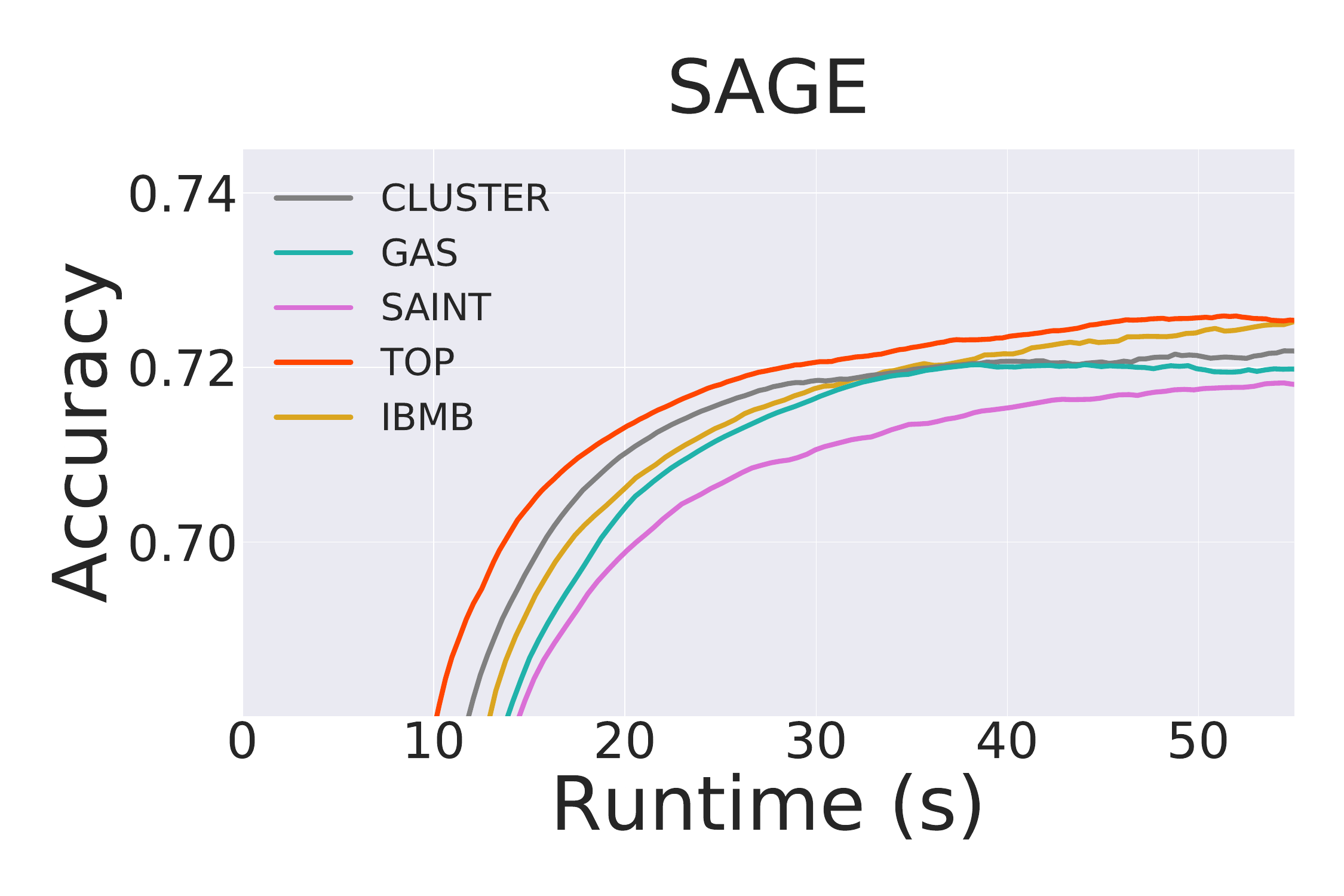}
    }
    \caption{
    \textbf{Convergence curves (test accuracy vs. runtime (s)) on more GNN architectures (i.e., GAT \citep{gat} and SAGE \citep{graphsage}).}
    }
    \label{fig:gat_sage}
\end{figure*}

\subsection{Relative Runtime per Epoch and Relative Memory Consumption}\label{sec:relative_mc}

We report the relative runtime per Epoch and relative Memory Consumption in Table \ref{tab:memory_consumption}.
As the graph size increases, the subgraph ratio \gongshi{$|\mathcal{B}|/|\mathcal{V}|$} decreases.
TOP enjoys the least runtime and memory consumption among the baselines.

\begin{table*}
  \centering
  \caption{
  \textbf{ Efficiency of the full-batch gradient descent (Full-batch), GAS, TOP}. $R_{*}$ and $M_{*}$ denote the runtime per epoch and memory consumption of the algorithm $*$ respectively.
  }\label{tab:memory_consumption}
  \setlength{\tabcolsep}{10pt}
  \resizebox{1.0\linewidth}{!}{%
  \begin{tabular}{cccc|cccc|cccc}
    \toprule
    \mr{2}{\textbf{Dataset}} & \mr{2}{\textbf{Model}} & \mr{2}{$\frac{|\mathcal{B}|}{|\mathcal{V}|}$} & \mr{2}{$|\mathcal{B}|$} & \mc{3}{c}{\textbf{Runtime} (s)$\downarrow$} & \mr{2}{{\Large$\frac{R_{GAS}}{R_{TOP}}$}} & \mc{3}{c}{\textbf{Memory} (MB)$\downarrow$} & \mr{2}{{\Large$\frac{M_{GAS}}{M_{TOP}}$}} \\
    && & & Full-batch & GAS & \textbf{TOP} &  & Full-batch & GAS & \textbf{TOP} & \\
    \midrule
Ogbn-arxiv    & GCN      & 50.0\%    & 84672  & 0.63 & 0.35  & \textbf{0.33} & 1.07  & 2463.03 & 1566.89 & \textbf{1071.27} & 1.46 \\
Reddit        & GCN      & 50.0\%    & 116483 & 2.05 & 1.33  & \textbf{0.91} & 1.46  & 2796.30 & 2444.25 & \textbf{1334.78} & 1.83 \\
Reddit        & GCNII    & 50.0\%    & 116483 & 3.21 & 1.94  & \textbf{1.49} & 1.30  & 8240.29 & 5509.85 & \textbf{3837.94} & 1.44 \\
Yelp          & GCNII    & 12.5\% & 89606  & OOM  & 4.29  & \textbf{3.98} & 1.08  & OOM     & 6752.89 & \textbf{2231.76} & 3.03 \\
Ogbn-products & GCNII+JK & 4.0\%     & 97961  & OOM  & 70.68 & \textbf{6.04} & 11.70 & OOM     & 6574.37 & \textbf{1406.60} & 4.67 \\
    \bottomrule
  \end{tabular}
  }
\end{table*}

\subsection{Cost of Pre-processing and Training}
\label{sec:pretime}

We report the cost of pre-processing and training in experiments in Table \ref{tab:cost_preprocessing}. On the small and medium datasets (i.e., Ogbn-arxiv, REDDIT, and YELP), the total time of different methods is similar and TOP achieves the least GPU consumption in most experiments.
On the large datasets (i.e. Ogbn-products), TOP is significantly faster and more memory-efficient than existing subgraph sampling methods, as it can remove the valid and test nodes from the sampled subgraph without significant performance degradation. Specifically,  Equation \eqref{eqn:mini_batch_mn} compensates the neighborhood information from the valid and test nodes based on the mini-batch training nodes.
However, the valid and test nodes in subgraphs are important for CLUSTER, as these valid and test nodes are likely to be the neighbors of the training nodes.
Directly removing these nodes without any compensation results in significant performance degradation (see Figures \ref{fig:runtime} and \ref{fig:epoch}).
Besides, GAS needs to update the historical embeddings of valid and test nodes many times, leading to expensive computational costs.

\begin{table}[h]\centering
      \caption{%
      \textbf{The cost of pre-processing and training.}
      }\label{tab:cost_preprocessing}
    \setlength{\tabcolsep}{1.9mm}
    \resizebox{1.0\linewidth}{!}{
\begin{tabular}{ll|cccc}
    \toprule
 \textbf{GNN}  \& \textbf{Dataset}         & \textbf{Methods}        & \textbf{Pre-processing time (s)} & \textbf{Training time (s)} & \textbf{Total Time (s)} & \textbf{Memory (MB)} \\
    \midrule
\multirow{4}{*}{GCN \& arxiv}         & GraphSAINT & 0.0  & 122.0  & 122.0  & \underline{1144.1} \\
                                      & CLUSTER    & 1.7  & 92.8   & \textbf{94.5}   & 1312.0 \\
                                      & GAS        & 3.0  & 105.0  & 108.0  & 1566.9 \\
                                      & TOP        & 5.0  & 99.0   & \underline{104.0}  & \textbf{1071.3} \\\midrule
\multirow{4}{*}{SAGE \& arxiv}        & GraphSAINT & 0.0  & 114.5  & 114.5  & 1716.2 \\
                                      & CLUSTER    & 1.6  & 93.3   & \textbf{94.9 }  & \underline{1450.1} \\
                                      & GAS        & 3.0  & 107.8  & 110.7  & 1616.1 \\
                                      & TOP        & 5.3  & 98.9   & \underline{104.1}  & \textbf{1110.2} \\\midrule
\multirow{4}{*}{GAT \& arxiv}         & GraphSAINT & 0.0  & 63.3   & 63.3   & \textbf{2060.9} \\
                                      & CLUSTER    & 1.7  & 43.7   & \textbf{45.4}   & \underline{2253.6} \\
                                      & GAS        & 3.0  & 52.8   & \underline{55.7}   & 3025.9 \\
                                      & TOP        & 4.4  & 58.7   & 63.1   & 3177.9 \\\midrule
\multirow{4}{*}{GCN \& REDDIT}        & GraphSAINT & 0.0  & 387.6  & 387.6  & \underline{1398.2}\\
                                      & CLUSTER    & 14.9 & 351.7  & \textbf{366.7}  & 1955.2 \\
                                      & GAS        & 16.6 & 532.0  & 548.6  & 2444.3 \\
                                      & TOP        & 20.1 & 364.0  & \underline{384.1}  & \textbf{1334.8} \\\midrule
\multirow{4}{*}{GCNII \& REDDIT}      & GraphSAINT & 0.0  & 672.0  & 672.0  & \underline{3935.2} \\
                                      & CLUSTER    & 14.7 & 595.0  & \textbf{609.7}  & 4242.2 \\
                                      & GAS        & 17.4 & 776.0  & 793.4  & 5509.9 \\
                                      & TOP        & 21.2 & 596.0  & \underline{617.2}  & \textbf{3837.9} \\\midrule
\multirow{4}{*}{GCNII \& YELP}        & GraphSAINT & 0.0  & 1648.6 & \textbf{1648.6} & 6011.7 \\
                                      & CLUSTER    & 12.6 & 1871.6 & \underline{1884.2} & \underline{5940.4} \\
                                      & GAS        & 17.3 & 2145.0 & 2162.3 & 6752.9 \\
                                      & TOP        & 25.3 & 1990.0 & 2015.3 & \textbf{2231.8} \\\midrule
\multirow{3}{*}{GCNII+JK \& products} & CLUSTER    & 35.5 & 3964.4 & \underline{3999.9} & \underline{2048.7} \\
                                      & GAS        & 45.5 & 7068.0 & 7113.5 & 6574.4 \\
                                      & TOP        & 35.8 & 604.0  & \textbf{639.8}  & \textbf{1406.6}  \\\midrule
\multirow{2}{*}{GCNII+JK \& papers} & CLUSTER    & 0.00 & 1007.42 & {\textbf{1007.42}} & {\textbf{1526.91}} \\
                                      & TOP        & 144.53 & 1094.01  & {\underline{1238.54}} & {\underline{1560.34}}  \\
    \bottomrule
\end{tabular}
    }
\end{table}

\subsection{Prediction Performance on Various Graphs} \label{sec:prediction}



\udfsection{Datasets.} 
We report the prediction performance of TOP on four datasets, i.e., Flickr \citep{graphsaint}, Ogbn-arxiv, Ogbn-products and Ogbn-papers \citep{ogb}, where the two challenging large datasets (i.e., Ogbn-products and Ogbn-papers \citep{ogb}) contain at least 100 thousand nodes and one million edges.
As shown by Table \ref{tab:memory_consumption}, as the batch size is significantly lower than the size of the whole graph, the convergence of mini-batch methods under small batch sizes becomes very important.


\udfsection{Baselines and implementation details.} Our baselines are from the OGB leaderboards \citep{ogb}, including node-wise sampling methods (GraphSAGE \citep{graphsage}, subgraph-wise sampling methods (CLUSTER-GCN in the original paper \citep{cluster_gcn}, GraphSAINT \citep{graphsaint}, SHADOW \citep{shadow_gnn} and GAS \citep{gas}), precomputing methods (SGC \citep{sgc} and SIGN \citep{sign}).
The GNN backbones of these baselines are different, as more scalable methods usually use more advanced but more memory-consuming GNN backbones.
Due to the differences in GNN backbones, frameworks, weight initialization, and optimizers in the baselines, we report CLUSTER-GCNII and GAS-GCNII for ablation studies. 
The hyperparameter settings are the same as Section \ref{sec:convergence_curve}.
The results of the baselines are taken from the referred papers and the OGB leaderboards.

\begin{table}
    \centering
      \caption{%
      \textbf{Prediction Performance.} Bold font indicates the best result and underline indicates the second best result. 
        }\label{tab:largegraph}
    \begin{tabular}{lcccc}
    \toprule
    {\textbf{\#\,nodes}} &\mc{1}{c}{{169K}} &\mc{1}{c}{{89K}} &\mc{1}{c}{{2.4M}} & \mc{1}{c}{{111M}}  \\[-0.1cm]
    {\textbf{\#\,edges}} & \mc{1}{c}{{1.2M}} & \mc{1}{c}{{450K}} & \mc{1}{c}{{61.9M}} & \mc{1}{c}{{1.6B}} \\[-0.05cm]
    \mr{2}{\textbf{Method}} & \mc{1}{c}{Ogbn-arxiv} & \mc{1}{c}{Flickr} & \mc{1}{c}{Ogbn-products}    & \mc{1}{c}{Ogbn-papers} \\
    & acc$\uparrow$ &  acc$\uparrow$   & acc$\uparrow$ &  acc$\uparrow$\\
    \midrule
    {NS-SAGE}    & 71.49    & 50.10     & 78.70           & 67.06          \\
    {CLUSTER-GCN}     & ---     & 48.10      & 78.97           & ---            \\
    {GraphSAINT}     & ---     & 51.10     & 79.08                & ---            \\
    {SHADOW-GAT}     & \underline{72.74}    & 53.52    & \underline{80.71}            & \underline{67.08}            \\
    {SGC}    & ---    & 48.20      & ---              & 63.29            \\
    {SIGN}   & ---    & 51.40      & 80.52          & 66.06            \\
    \midrule
     {GD-GCNII}    & \textbf{72.83}    & 55.28    & OOM    & OOM\\
     {CLUSTER-GCNII}      & 72.39    & \underline{55.33}         & 79.62 & 51.73\\
     {GAS-GCNII}     & 72.50     & \textbf{55.42}             & 79.99 & OOM \\
    \midrule
    {TOP-GCNII}   & 72.52 {\tiny $\pm$ 0.34}    & 55.21 {\tiny $\pm$ 0.46}    & \textbf{81.96} {\tiny $\pm$ 0.24} & \textbf{67.21} {\tiny $\pm$ 0.12}\\
    \bottomrule
  \end{tabular}
\end{table}

\udfsection{Prediction performance.} We report the prediction performance of TOP in Table \ref{tab:largegraph}. 
On the small datasets (i.e., the ogbn-arxiv and flickr datasets), which have fewer than 170k nodes, the prediction performance of several subgraph sampling methods (CLUSTER, GAS, and TOP) is comparable to gradient descent (GD), as they can use a large subgraph ratio \gongshi{$\mathcal{|B|}/\mathcal{|V|}$} on the small dataset (e.g. 50\% used in GAS), such that the sampled subgraphs are close to the whole graph.
On the large datasets (i.e., the ogbn-products and ogbn-papers datasets),  as the large subgraph ratio may suffer from the out-of-GPU memory issue, we use a small subgraph ratio (less than 4\%). However, the small subgraph ratio increases the ratio of missing messages in {$\text{MP}_{\text{OB}}$} for CLUSTER and results in the severe staleness issue for GAS (the update of historical embeddings in GAS is infrequent due to very low node sampling probability). The accuracy of TOP is larger than other baselines, as it compensates the messages in {$\text{MP}_{\text{OB}}$} by the message-invarant trasformation \gongshi{$g$} and relies solely on up-to-date embeddings, thus avoiding the staleness issue of the historical embeddings.


\subsection{Experiments on Heterophilous Graphs}\label{sec:exp_heterophilous}

The message invariance still holds on heterophilous graphs. To verify our claim, we conduct experiments on five heterophilous graphs (i.e., roman-empire, amazon-ratings, minesweeper, tolokers, and questions) provided by the recent heterophilous benchmark \citep{heterophily}, as shown in Table \ref{table_G2}. We set the subgraph ratio to be 50\%, as the heterophilous graphs (10k-50k nodes) are significantly smaller than the homophilic graphs (200k-112000k nodes) in Section \ref{sec:exp}.  On the heterophilous graphs, although a node may be very different from its neighbors, the neighbors may be similar to other nodes in the subgraph. Notably, the message-invariant transformation in Equation \ref{eqn:nonlinear_extrapolation} does not restrict that the embedding of an in-batch node should be similar to its neighborhood embeddings, and thus the message-invariant transformation is able to approximate the neighborhood embeddings by other nodes in the subgraph.

\begin{table}[h]
  \centering
  \caption{\textbf{Approximation errors of TOP, CLUSTER, and TOP on heterophilous graphs.}}
  \label{table_G2}
  \resizebox{0.7\textwidth}{!}{%
  \renewcommand{\arraystretch}{0.5}
    \begin{tabular}{c|rrr}
    \toprule
       \textbf{Heterophilous Graph} & CLUSTER & GAS & TOP  \\ 
    \midrule
        amazon-ratings & 4.14\% & 1.36\% & \textbf{1.02\%}  \\ 
        minesweeper & 54.53\% & 19.68\% & \textbf{3.12\%}  \\ 
        questions & 20.25\% & 9.54\% & \textbf{4.90\%}  \\ 
        roman-empire & 5.42\% & 1.65\% & \textbf{0.88\%}  \\ 
    \bottomrule
    \end{tabular}%
  }
\end{table}

Figure \ref{figure_G1} further reports the convergence curves of TOP, CLUSTER, and GAS on the homophily datasets. From Table \ref{table_G2}, the approximation errors of TOP are significantly lower than CLUSTER and GAS on minesweeper, tolokers, and questions. Accordingly, TOP is also significantly faster than CLUSTER and GAS on the three datasets.

\begin{figure*}[h]
    \centering
    \includegraphics[width=1.0\textwidth]{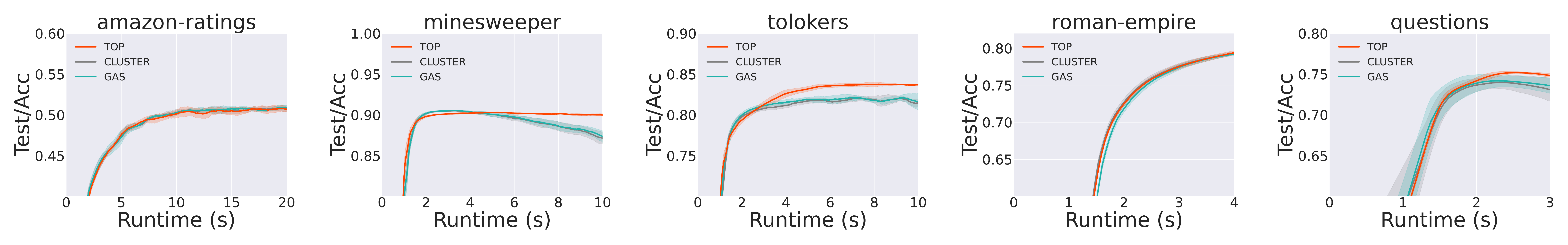}
    \caption{\textbf{Convergence curves of TOP, CLUSTER, and GAS on real-world heterophilous graphs.}}
    \label{figure_G1}
\end{figure*}

\subsection{Experiments under Various Subgraph Samplers}\label{sec:samplers}

We conduct experiments to demonstrate that TOP consistently brings performance improvement for various subgraph samplers. We first evaluate the relative approximation errors of TOP, CLUSTER, and GAS under METIS, Random, GraphSAINT \citep{graphsaint}, and SHADOW \citep{shadow_gnn} sampling in Table \ref{Table_G3}. The results demonstrate that TOP significantly alleviates approximation errors by integrating different subgraph sampling techniques. Specifically, different subgraph sampling techniques are designed to encourage the connections between the sampled nodes with a trade-off for efficiency. METIS aims to directly achieve this goal, while it may be more time-consuming than other sampling techniques. Random sampling is the fastest sampling baseline among them, while it does not consider the connections between the sampled nodes. Thus, Random sampling significantly amplifies the approximation errors of CLUSTER and GAS, while TOP is robust under Random, GraphSAINT, and SHADOW sampling.

\begin{table}[h]
  \centering
  \caption{%
    \textbf{Message invariance in real-world datasets with various subgraph samplers.}
  }
    \label{Table_G3}
  \setlength{\tabcolsep}{2pt}
  \resizebox{0.7\linewidth}{!}{%
    \begin{tabular}{c|c|c|cccc}
    \toprule
 \mr{2}{\textbf{Dataset}}    &   \mr{2}{\textbf{GNN}}      & \mr{2}{\textbf{Methods}}      & 
 \multicolumn{4}{c}{{\textbf{Relative Approximation Errors} $\downarrow$}}
  \\
     &                    &       & Random  & SAINT  & SHADOW  & METIS   \\
    \midrule
\multirow{3}{*}{Ogbn-arxiv}   & \multirow{3}{*}{GCN}   & CLUSTER & 30.02\%  & 15.79\% & 12.49\% & 12.10\%   \\
                          &                        & GAS     & 45.29\%  & --- & --- & 9.89\%    \\
                          &                        & TOP     & \textbf{7.61}\% & \textbf{5.94}\% & \textbf{3.41}\%   & \textbf{1.58}\%   \\ \midrule
\multirow{3}{*}{Reddit}  & \multirow{3}{*}{SAGE}   & CLUSTER  & 25.91\% & 22.22\% & 21.32\% & 3.44\%   \\
                          &                        & GAS      & 13.91\% & --- & --- & 1.90\%  \\
                          &                        & TOP      & \textbf{3.10}\%  & \textbf{1.45}\% & \textbf{1.27}\% & \textbf{0.61}\%   \\\midrule
\multirow{3}{*}{Yelp}     & \multirow{3}{*}{GCNII} & CLUSTER  & 4.87\% & 1.13\% & 0.91\%  & 2.38\%  \\
                          &                        & GAS      & 7.86\%  & --- & --- & 4.01\%  \\
                          &                        & TOP       & \textbf{2.68}\% & \textbf{0.74}\%  & \textbf{0.64}\%  & \textbf{0.86}\% \\
    \bottomrule
  \end{tabular}
  }
\end{table}

Figure \ref{figure_G4} further shows the convergence curves of TOP, CLUSTER, and GAS under Random sampling. Due to the accurate and fast message passing of TOP, TOP significantly outperforms CLUSTER and GAS in terms of accuracy and converge speeds. By integrating the results with Figures \ref{subfig:small_datasets} and \ref{subfig:medium_datasets}, the performance improvement of TOP is consistent for various subgraph sampling methods.

\begin{figure}[h]
    \centering
    \includegraphics[width=0.8\textwidth]{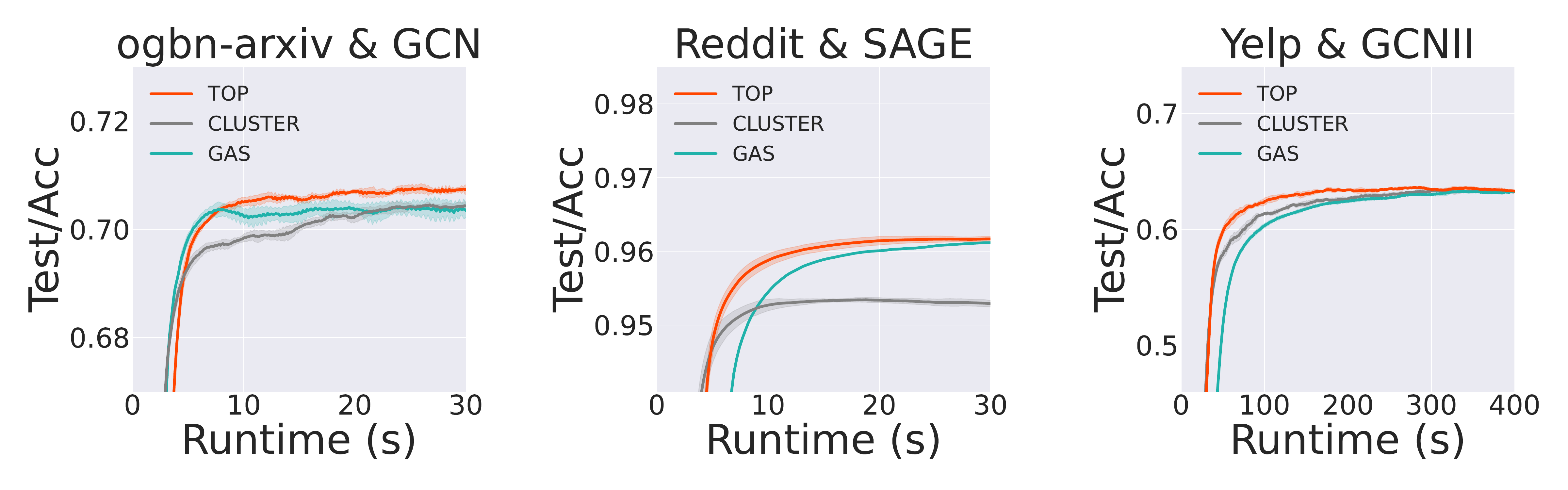}
    \caption{\textbf{Convergence curves of TOP, CLUSTER, and GAS under the Random sampler.}}
    \label{figure_G4}
\end{figure}

\section{Proof for Convergence} \label{sec:proof_convergence}

We first show that TOP based on Equation \eqref{eqn:top} provides unbiased gradients. We assume that subgraph \gongshi{$\mathcal{B}$} is uniformly sampled from \gongshi{$\mathcal{V}$}.  When the sampling is not uniform, we use the normalization technique \citep{graphsaint} to enforce the assumption. 
\begin{theorem}\label{thm:unbiased}
        Suppose that the message invariant transformations \eqref{eqn:nonlinear_extrapolation} exist and the subgraph \gongshi{ $\mathcal{B}$} is uniformly sampled from \gongshi{ $\mathcal{V}$}. The iterative message passing of Equations \eqref{eqn:top} and \gongshi{$\mathbf{H}_{\mathcal{B}}^{(l+1)} = \sigma(\mathbf{Z}^{(l+1)}_{\mathcal{B}}\mathbf{W}^{(l)}) $} leads to unbiased mini-batch gradients \gongshi{$\mathbf{d}_{\mathcal{W}}$} such that
        \begin{align*}
            \mathbb{E}[\mathbf{d}_{\mathcal{W}}]
            &=
            \nabla_{\mathcal{W}} \mathcal{L}.
        \end{align*}
\end{theorem}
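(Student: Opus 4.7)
The plan is to reduce the unbiasedness claim to two steps: first, showing that the TOP forward pass produces exactly the same in-batch embeddings as the whole message passing, and second, a standard uniform-sampling argument to match the expectation with the full-graph gradient.

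First I would establish a key lemma by induction on $l$: under the message invariance assumption, the TOP embeddings satisfy $\mathbf{H}^{(l,TOP)}_{\mathcal{B}} = \mathbf{H}^{(l)}_{\mathcal{B}}$ for every $l \in \{0,1,\dots,L\}$, where $\mathbf{H}^{(l)}$ denotes the whole-message-passing embeddings. The base case $l=0$ is immediate from $\mathbf{H}^{(0,TOP)}_{\mathcal{B}} = \mathbf{X}_{\mathcal{B}} = \mathbf{H}^{(0)}_{\mathcal{B}}$. For the inductive step, I would plug the inductive hypothesis into the TOP propagation \eqref{eqn:top} and compare it to the split \eqref{eqn:mini_batch} of the whole propagation:
\begin{align*}
\mathbf{Z}^{(l+1,TOP)}_{\mathcal{B}}
&= \widetilde{\mathbf{A}}_{\mathcal{B},\mathcal{B}} \mathbf{H}^{(l,TOP)}_{\mathcal{B}} + \widetilde{\mathbf{A}}_{\mathcal{B},\mathcal{N}_{\mathcal{B}}^c} g(\mathbf{H}^{(l,TOP)}_{\mathcal{B}}) \\
&= \widetilde{\mathbf{A}}_{\mathcal{B},\mathcal{B}} \mathbf{H}^{(l)}_{\mathcal{B}} + \widetilde{\mathbf{A}}_{\mathcal{B},\mathcal{N}_{\mathcal{B}}^c} \mathbf{H}^{(l)}_{\mathcal{N}_{\mathcal{B}}^c} = \mathbf{Z}^{(l+1)}_{\mathcal{B}},
\end{align*}
where the second equality uses the inductive hypothesis on the first term and the message invariance identity \eqref{eqn:nonlinear_extrapolation} on the second. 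Applying $\sigma(\cdot\, \mathbf{W}^{(l)})$ to both sides carries the equality to the $(l+1)$-th layer embeddings, completing the induction.

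With the lemma in hand, I would note that $\mathbf{h}^{(L,TOP)}_i = \mathbf{h}^{(L)}_i$ for every $i \in \mathcal{B}$, so the per-node loss gradients coincide on the batch. Since each $i \in \mathcal{V}$ is included in the uniformly sampled batch $\mathcal{B}$ with probability $|\mathcal{B}|/|\mathcal{V}|$, a standard expectation computation gives
\begin{align*}
\mathbb{E}[\mathbf{d}_{\mathcal{W}}]
= \mathbb{E}\!\left[\frac{1}{|\mathcal{B}|}\sum_{i\in\mathcal{B}} \nabla_{\mathcal{W}} \ell(\mathbf{h}^{(L)}_i, y_i)\right]
= \frac{1}{|\mathcal{B}|}\sum_{j\in\mathcal{V}} \Pr(j\in\mathcal{B})\, \nabla_{\mathcal{W}} \ell(\mathbf{h}^{(L)}_j, y_j),
\end{align*}
which, after substituting the inclusion probability and comparing with the definition of $\mathcal{L}$, yields $\nabla_{\mathcal{W}} \mathcal{L}$ up to the normalization convention used in the theorem statement.

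The main obstacle is the inductive step: it is the only place where message invariance is genuinely used, and it requires the \emph{global} form of \eqref{eqn:nonlinear_extrapolation} (valid for arbitrary $\mathcal{W}$) rather than a one-shot approximation, because the inductive hypothesis supplies in-batch embeddings produced by the TOP dynamics themselves, not by the full-graph dynamics at the current parameters. Everything else is routine: the outer nonlinearity $\sigma$ preserves the equality pointwise, and the expectation step is a textbook uniform-sampling calculation with no dependence on the graph structure or the particular GNN architecture.
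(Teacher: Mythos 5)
Your proof is correct, and its first half---the induction showing $\mathbf{H}^{(l,TOP)}_{\mathcal{B}} = \mathbf{H}^{(l)}_{\mathcal{B}}$ for every layer $l$ and every parameter setting $\mathcal{W}$---is precisely the fact the paper's proof asserts in a single sentence (that the TOP embeddings coincide with those of the whole message passing because $\mathbf{H}^{(l)}_{\mathcal{N}_{\mathcal{B}}^c} = g(\mathbf{H}^{(l)}_{\mathcal{B}})$); you simply make the induction explicit, and you correctly identify the one place where the global, all-$\mathcal{W}$ form of message invariance is indispensable. Where you genuinely diverge is the second half. The paper argues by contradiction: having established that the TOP objective and the full objective agree as functions of $\mathcal{W}$, it claims that unequal gradients would force the two functions to disagree at a perturbed parameter point $\mathcal{W}_0 - \epsilon\nabla_{\mathcal{W}_0}\mathcal{L}$ via differing directional derivatives, contradicting their global equality. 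You instead stay at the level of per-node gradients---equality of the embedding maps for all $\mathcal{W}$ gives equality of the per-node loss gradients---and then push the expectation through by linearity using the inclusion probability $|\mathcal{B}|/|\mathcal{V}|$. Your route is more elementary and more careful: it avoids the paper's implicit interchange of $\nabla_{\mathcal{W}}$ and $\mathbb{E}$, and it surfaces the normalization mismatch between the theorem's definition of $\mathcal{L}$ (divided by $|\mathcal{B}|$ rather than $|\mathcal{V}|$) and the unbiasedness claim, which the paper's proof silently glosses over. Both arguments rest on the same essential facts, so neither buys extra generality, but yours is the tighter write-up.
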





\begin{proof}
    Given any mini-batch \gongshi{$\mathcal{B}$}, the embeddings \gongshi{$\mathbf{H}^{(l)}_{\mathcal{B}}$} of TOP are the same as that of SGD due to \gongshi{$\mathbf{H}^{(l)}_{\mathcal{N}_{\mathcal{B}}^{c}} = g( \mathbf{H}^{(l)}_{\mathcal{B}} )$} for any GNN parameters \gongshi{$\mathbf{W}^{(l)}$}.
    Thus, their total objective functions of \gongshi{$\mathcal{L} = \mathcal{L}_{TOP}$} are the same.

    If TOP is biased, then the expected gradient \gongshi{$\nabla_{\mathcal{W}} \mathcal{L}_{TOP} \neq \nabla_{\mathcal{W}} \mathcal{L}$}.
    Thus, there exists $\epsilon>0$ and \gongshi{$\mathcal{W}_0$} such that \gongshi{$\mathcal{L}_{TOP}(\mathcal{W}_0) = \mathcal{L}(\mathcal{W}_0)$} while \gongshi{$\mathcal{L}_{TOP}(\mathcal{W}_0 - \epsilon \nabla_{\mathcal{W}_{0}} \mathcal{L}) \neq \mathcal{L}(\mathcal{W}_{0} - \epsilon \nabla_{\mathcal{W}_{0}} \mathcal{L})$} due to different directional derivatives \gongshi{$\langle \nabla_{\mathcal{W}_{0}} \mathcal{L}, \nabla_{\mathcal{W}_{0}} \mathcal{L}_{TOP} \rangle \neq \| \nabla_{\mathcal{W}_{0}} \mathcal{L} \|^2_F$}, which contradicts to \gongshi{$\mathcal{L} = \mathcal{L}_{TOP}$}. The unbiasedness holds immediately.
\end{proof}

\subsection{Proof of Theorem \ref{thm:convergence_conv}: Convergence Guarantees of TOP}
In this subsection, we give the convergence guarantees of TOP.
The proof follows the proof of Theorem 2 in Appendix C.4 of \citep{vrgcn}.


\begin{proof}
     As \gongshi{$\nabla \mathcal{L}$} is \gongshi{$\gamma$}-Lipschitz, we have
    \begin{align*}
        &\ \ \ \ \mathcal{L}(\mathcal{W}^{(k+1)})\\
        &=
        \mathcal{L}(\mathcal{W}^{(k)}) + \int_{0}^{1} \langle \nabla \mathcal{L}(\mathcal{W}^{(k)}+t(\mathcal{W}^{(k+1)}-\mathcal{W}^{(k)})) , \mathcal{W}^{(k+1)}-\mathcal{W}^{(k)} \rangle \ dt\\
        &=
        \mathcal{L}(\mathcal{W}^{(k)}) + \langle \nabla \mathcal{L}(\mathcal{W}^{(k)}) , \mathcal{W}^{(k+1)}-\mathcal{W}^{(k)} \rangle\\ 
        & 
        \ \ \ + \int_{0}^{1} \langle \nabla \mathcal{L}(\mathcal{W}^{(k)}+t(\mathcal{W}^{(k+1)}-\mathcal{W}^{(k)})) - \nabla \mathcal{L}(\mathcal{W}^{(k)}) , \mathcal{W}^{(k+1)}-\mathcal{W}^{(k)} \rangle \ dt\\
        & \leq
        \mathcal{L}(\mathcal{W}^{(k)}) + \langle \nabla \mathcal{L}(\mathcal{W}^{(k)}) , \mathcal{W}^{(k+1)}-\mathcal{W}^{(k)} \rangle\\
        &
        \ \ \ + \int_{0}^{1} \| \nabla \mathcal{L}(\mathcal{W}^{(k)}+t(\mathcal{W}^{(k+1)}-\mathcal{W}^{(k)})) - \nabla \mathcal{L}(\mathcal{W}^{(k)}) \|_{2}\| \mathcal{W}^{(k+1)}-\mathcal{W}^{(k)} \|_{2} \ dt\\
        & \leq
        \mathcal{L}(\mathcal{W}^{(k)}) + \langle \nabla \mathcal{L}(\mathcal{W}^{(k)}) , \mathcal{W}^{(k+1)}-\mathcal{W}^{(k)} \rangle + \int_{0}^{1} \gamma t \| \mathcal{W}^{(k+1)}-\mathcal{W}^{(k)} \|^{2}_{2} \ dt\\
        &=
        \mathcal{L}(\mathcal{W}^{(k)}) + \langle \nabla \mathcal{L}(\mathcal{W}^{(k)}) , \mathcal{W}^{(k+1)}-\mathcal{W}^{(k)} \rangle +  \frac{\gamma}{2} \| \mathcal{W}^{(k+1)}-\mathcal{W}^{(k)} \|^{2}_{2}.
    \end{align*}
    Notice that the update formula of \gongshi{$\mathcal{W}^{(k)}$} is
    \begin{align*}
        \mathcal{W}^{(k+1)} = \mathcal{W}^{(k)} - \eta \mathbf{d}_{\mathcal{W}}^{(k)},
    \end{align*}
    {where \gongshi{$\mathbf{d}_{\mathcal{W}}^{(k)}$} is the gradient of TOP at the \gongshi{$k$}-th iteration and we select \gongshi{$\eta < \frac{2}{\gamma}$}}.
    Let \gongshi{$\Delta^{(k)} \triangleq \mathbf{d}_{\mathcal{W}}^{(k)}-\nabla\mathcal{L}(\mathcal{W}^{(k)})$}, then
    \begin{align*}
        &\ \ \ \ \mathcal{L}(\mathcal{W}^{(k+1)})\\
        & \leq
        \mathcal{L}(\mathcal{W}^{(k)}) + \langle \nabla \mathcal{L}(\mathcal{W}^{(k)}) , \mathcal{W}^{(k+1)}-\mathcal{W}^{(k)} \rangle +  \frac{\gamma}{2} \| \mathcal{W}^{(k+1)}-\mathcal{W}^{(k)} \|^{2}_{2}\\
        &=
        \mathcal{L}(\mathcal{W}^{(k)}) - \eta\langle \nabla \mathcal{L}(\mathcal{W}^{(k)}) , \mathbf{d}_{\mathcal{W}}^{(k)} \rangle + \frac{\eta^{2}\gamma}{2} \| \mathbf{d}^{(k)}_{\mathcal{W}} \|^{2}_{2}\\
        &=
        \mathcal{L}(\mathcal{W}^{(k)}) - \eta(1-\eta\gamma) \langle \nabla \mathcal{L}(\mathcal{W}^{(k)}) , \Delta^{(k)} \rangle - \eta(1-\frac{\eta\gamma}{2}) \| \nabla\mathcal{L}(\mathcal{W}^{(k)}) \|_{2}^{2} + \frac{\eta^{2}\gamma}{2} \| \Delta^{(k)} \|_{2}^{2}.
    \end{align*}
    By taking the expectations of both sides, we have
    \begin{align*}
        & \mathbb{E}[\mathcal{L}(\mathcal{W}^{(k+1)})]\\
        & \leq
        \mathbb{E}[\mathcal{L}(\mathcal{W}^{(k)})] - \eta(1-\eta \gamma)\mathbb{E}[\langle \nabla \mathcal{L}(\mathcal{W}^{(k)}) , \Delta^{(k)} \rangle] - \eta(1-\frac{\eta\gamma}{2}) \mathbb{E}[\| \nabla\mathcal{L}(\mathcal{W}^{(k)}) \|_{2}^{2}] + \frac{\eta^{2}\gamma}{2} \mathbb{E}[\| \Delta^{(k)} \|_{2}^{2}].
    \end{align*}
    {By the properties of the expectations and Theorem \ref{thm:unbiased}, we have
    \begin{align*}
        \mathbb{E}[\langle \nabla \mathcal{L}(\mathcal{W}^{(k)}) , \Delta^{(k)} \rangle]
        & = \mathbb{E}[\mathbb{E}[\langle \nabla \mathcal{L}(\mathcal{W}^{(k)}) , \Delta^{(k)} \rangle | \nabla \mathcal{L}(\mathcal{W}^{(k)})]]
        \\
        & = \mathbb{E}[\langle \nabla \mathcal{L}(\mathcal{W}^{(k)}) , \mathbb{E}[\Delta^{(k)} | \nabla \mathcal{L}(\mathcal{W}^{(k)})] \rangle ] 
        \\
        & = \mathbb{E}[\langle \nabla \mathcal{L}(\mathcal{W}^{(k)}) , \mathbb{E}[\mathbf{d}_{\mathcal{W}}^{(k)}-\nabla\mathcal{L}(\mathcal{W}^{(k)}) | \nabla \mathcal{L}(\mathcal{W}^{(k)})] \rangle ]
        \\
        & = \mathbb{E}[\langle \nabla \mathcal{L}(\mathcal{W}^{(k)}) , \mathbb{E}[\mathbf{d}_{\mathcal{W}}^{(k)} | \nabla \mathcal{L}(\mathcal{W}^{(k)})]-\nabla\mathcal{L}(\mathcal{W}^{(k)})  \rangle ]
        \\
        & = \mathbb{E}[\langle \nabla \mathcal{L}(\mathcal{W}^{(k)}) , \nabla \mathcal{L}(\mathcal{W}^{(k)}) -\nabla\mathcal{L}(\mathcal{W}^{(k)})  \rangle ]
        \\
        & = 0,
    \end{align*}
    }
    which leads to
    \begin{align*}
        &\mathbb{E}[\mathcal{L}(\mathcal{W}^{(k+1)})] \leq
        \mathbb{E}[\mathcal{L}(\mathcal{W}^{(k)})] - \eta(1-\frac{\eta\gamma}{2}) \mathbb{E}[\| \nabla\mathcal{L}(\mathcal{W}^{(k)}) \|_{2}^{2}] + \frac{\eta^{2}\gamma}{2} \mathbb{E}[\| \Delta^{(k)} \|_{2}^{2}]\\
        \Rightarrow &{\eta(1-\frac{\eta\gamma}{2}) \mathbb{E}[\| \nabla\mathcal{L}(\mathcal{W}^{(k)}) \|_{2}^{2}] \leq \mathbb{E}[\mathcal{L}(\mathcal{W}^{(k)})] - \mathbb{E}[\mathcal{L}(\mathcal{W}^{(k+1)})] + \frac{\eta^{2}\gamma}{2} \mathbb{E}[\| \Delta^{(k)} \|_{2}^{2}]}.
    \end{align*}
    
    By summing up the above inequalities for \gongshi{$k \in \llbracket N \rrbracket$} and dividing both sides by \gongshi{$N\eta(1-\frac{\eta\gamma}{2})$}, we have
    {\begin{align*}
        \frac{\sum_{k=1}^{N} \mathbb{E}[\| \nabla\mathcal{L}(\mathcal{W}^{(k)}) \|_{2}^{2}]}{N}
        & \leq
        \frac{\mathcal{L}(\mathcal{W}^{(1)})-\mathbb{E}[\mathcal{L}(\mathcal{W}^{(N+1)})]}{N\eta(1-\frac{\eta\gamma}{2})} + \frac{\eta\gamma}{2-\eta\gamma}\frac{\sum_{k=1}^{N} \mathbb{E}[\| \Delta^{(k)} \|_{2}^{2}]}{N}\\
        & \leq
        \frac{\mathcal{L}(\mathcal{W}^{(1)})-\mathcal{L}^{*}}{N\eta(1-\frac{\eta\gamma}{2})} + \frac{\eta\gamma}{2-\eta\gamma}\frac{\sum_{k=1}^{N} \mathbb{E}[\| \Delta^{(k)} \|_{2}^{2}]}{N}.
    \end{align*}}
    By noticing that
    \begin{align*}
        \mathbb{E}[\|\nabla \mathcal{L}(\mathcal{W}^{(R)}) \|_2^{2}] = \mathbb{E}[\mathbb{E}[\|\nabla_{\mathcal{W}} \mathcal{L}(\mathcal{W}^{(R)}) \|_2^{2}\ |\ R]] = \frac{\sum_{k=1}^{N} \mathbb{E}[\| \nabla\mathcal{L}(\mathcal{W}^{(k)}) \|_{2}^{2}]}{N}
    \end{align*}
    and
    \begin{align*}
        \mathbb{E}[\| \Delta^{(k)} \|_{2}^{2}] 
        &= 
        \mathbb{E}[\| \mathbf{d}_{\mathcal{W}}^{(k)}-\nabla\mathcal{L}(\mathcal{W}^{(k)}) \|_{2}^{2}]\\
        & \leq
        2(\mathbb{E}[\| \mathbf{d}_{\mathcal{W}}^{(k)} \|_{2}^{2}] + \mathbb{E}[\| \nabla\mathcal{L}(\mathcal{W}^{(k)}) \|_{2}^{2}])\\
        & \leq
        4G^{2},
    \end{align*}
    we have
    \begin{align*}
        \mathbb{E}[\|\nabla \mathcal{L}(\mathcal{W}^{(R)}) \|_2^{2}] \leq \frac{\mathcal{L}(\mathcal{W}^{(1)})-\mathcal{L}^{*}}{N\eta(1-\frac{\eta\gamma}{2})} + \frac{4G^{2}\eta\gamma}{2-\eta\gamma}.
    \end{align*}
    If \gongshi{$\eta < \frac{1}{\gamma},\ \eta=\mathcal{O}(N^{-\frac{1}{2}})$}, we have
    \begin{align*}
        \mathbb{E}[\|\nabla \mathcal{L}(\mathcal{W}^{(R)}) \|_2^{2}] \leq \frac{2(\mathcal{L}(\mathcal{W}^{(1)})-\mathcal{L}^{*})}{N\eta} + 8G^{2}\eta\gamma = \mathcal{O}(N^{-\frac{1}{2}}).
    \end{align*}
    Therefore, by letting \gongshi{$\varepsilon = (\frac{2(\mathcal{L}(\mathcal{W}^{(1)})-\mathcal{L}^{*})}{N\eta} + 8G^{2}\eta\gamma)^{\frac{1}{2}} = \mathcal{O}(N^{-\frac{1}{4}})$}, Theorem \ref{thm:convergence_conv} follows immediately.
\end{proof}

\section{Encoding Symmetry in Graphs via TOP}\label{subsubsec:selection_and_isomorphic}

In this section, we show that the embeddings of GNNs at random initialization can encode the symmetry in the original graph, which is a specific node similarity.

\udfsection{Notations} For brevity, \gongshi{$\overline{\mathcal{N}}_{i} = \mathcal{N}_{i} \cup \{ i \}$} denotes the neighborhood of node \gongshi{$i$} with itself.
We recursively define the set of neighborhoods within  \gongshi{$k$}-hops as \gongshi{$\overline{\mathcal{N}}_{i}^{k} = \overline{\mathcal{N}}_{\overline{\mathcal{N}}_{i}^{k-1}}$} with \gongshi{$\overline{\mathcal{N}}_{i}^{1} = \overline{\mathcal{N}}_{i}$}.
For Theorem \ref{thm:high_approx_of_linear_extra}, we denote all the possible embeddings at the \gongshi{$l$}-th layer by \gongshi{$E^{(l)}=\{ \mathbf{h}_{1}^{(l)},\ \mathbf{h}_{2}^{(l)},\dots,\mathbf{h}_{t^{(l)}}^{(l)} \}$},  where \gongshi{$t^{(l)} \leq t$} is the number of different embeddings at the \gongshi{$l$}-th layer, \gongshi{$l \in \llbracket L \rrbracket$}.

\udfsection{Motivation for the symmetry.} We first motivate the basic embeddings from the graph isomorphism perspective. 
The 1-dimensional Weisfeiler-Lehman test (i.e., 1-WL test) \citep{wl_test} is widely used to distinguish whether two nodes or graphs are isomorphic.

Given initial node feature/representation \gongshi{$h_{u}^{(0)}$}, at the $l$-th iterations, 1-WL test for GCNs updates the node representation \gongshi{$h_{i}^{(l-1)}$} based on the local neighborhood by
\begin{align}\label{eqn:hash}
    h_{i}^{(l)} = Hash(\{\{ h_{u}^{(l-1)}, u\in \overline{\mathcal{N}}_{i} \}\}).
\end{align}

Following \citep{gin}, we show the connections between GNNs and 1-WL test in Lemma \ref{lemma:pro_of_isomor_nodes}.

Without loss of generality, we present the theories for the GCN version.
Extending them to other GNNs is easy.

\begin{lemma}\label{lemma:pro_of_isomor_nodes}
    Given a graph \gongshi{$\mathcal{G}=(\mathcal{V},\mathcal{E})$} and a GNN, if nodes \gongshi{$i,j \in \mathcal{V}$} are indistinguishable under \gongshi{$l$} iterations of the 1-WL test, then there holds
    \begin{align*}
        \mathbf{H}_{i}^{(l)} = \mathbf{H}_{j}^{(l)},
    \end{align*}
    for all GNN parameters.
\end{lemma}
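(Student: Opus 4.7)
The plan is to proceed by induction on the layer index $l$, mirroring the step-by-step refinement of node colors in the 1-WL test. Throughout, I will use the fact that a GCN (or more generally any message passing layer of the form in Equation \eqref{eqn:mp}) computes $\mathbf{H}_i^{(l)}$ as a function of $\mathbf{H}_i^{(l-1)}$ together with the \emph{multiset} $\{\{\mathbf{H}_u^{(l-1)} : u \in \mathcal{N}_i\}\}$, and is therefore permutation-invariant in the neighborhood.

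First I would spell out precisely what it means for two nodes to be indistinguishable after $l$ rounds of 1-WL: their colors $h_i^{(l)}$ and $h_j^{(l)}$ in Equation \eqref{eqn:hash} are equal. By the injectivity of the hash, this is equivalent to $h_i^{(l-1)} = h_j^{(l-1)}$ together with the multiset identity $\{\{h_u^{(l-1)} : u \in \overline{\mathcal{N}}_i\}\} = \{\{h_v^{(l-1)} : v \in \overline{\mathcal{N}}_j\}\}$. The base case $l=0$ is immediate from the convention that the initial 1-WL colors are determined by (in fact equal to) the raw node features $\mathbf{X}_i$, and $\mathbf{H}_i^{(0)} = \mathbf{X}_i$ in GCNs.

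For the inductive step, assume that for every pair of nodes that are 1-WL indistinguishable after $l-1$ rounds, the GNN embeddings at layer $l-1$ coincide for all parameter choices. If $i$ and $j$ are indistinguishable after $l$ rounds, the unpacking above yields (i) $\mathbf{H}_i^{(l-1)} = \mathbf{H}_j^{(l-1)}$ by the inductive hypothesis applied to the center nodes, and (ii) a bijection $\pi : \overline{\mathcal{N}}_i \to \overline{\mathcal{N}}_j$ pairing up neighbors with identical 1-WL color, so that $\mathbf{H}_u^{(l-1)} = \mathbf{H}_{\pi(u)}^{(l-1)}$ by applying the hypothesis pairwise. Substituting this multiset equality into the layer-$l$ update for GCNs,
\begin{align*}
\mathbf{H}_i^{(l)} = \sigma\!\left(\textstyle\sum_{u \in \overline{\mathcal{N}}_i} \widetilde{\mathbf{A}}_{i,u} \mathbf{H}_u^{(l-1)} \mathbf{W}^{(l-1)}\right),
\end{align*}
and noting that $\widetilde{\mathbf{A}}_{i,u}$ depends only on the degrees of $i$ and $u$ (which agree with those of $j$ and $\pi(u)$ since 1-WL refinement after one round separates nodes of different degrees), one obtains $\mathbf{H}_i^{(l)} = \mathbf{H}_j^{(l)}$ for every choice of $\mathbf{W}^{(l-1)}$.

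The main technical subtlety I expect is handling the edge weights $\widetilde{\mathbf{A}}_{i,u}$: the aggregator is not a plain sum but a degree-normalized one, so I need to argue that 1-WL indistinguishability transports the relevant degree information as well. This follows because 1-WL indistinguishability after $l \geq 1$ rounds implies equality of degrees at $i$ and $j$ and, via the neighbor bijection $\pi$, equality of degrees at paired neighbors $u$ and $\pi(u)$; hence the normalization factors match termwise and the permutation invariance of the summation carries the argument through. For general message passing GNNs as in Equation \eqref{eqn:mp}, the same reasoning applies because $f^{(l)}$ is by construction invariant under permutations of its multiset argument.
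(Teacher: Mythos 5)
Your proof is correct and follows essentially the same route as the paper's: both use the injectivity of the 1-WL hash to transport multiset equalities of colors (hence of degrees and, via the matching, of layer-$(l-1)$ embeddings) across the two neighborhoods, and then invoke the permutation invariance of the degree-normalized aggregation; you merely repackage the paper's explicit top-down unfolding of the $l$-hop neighborhood as an induction on the layer index. Your explicit handling of the normalization weights $\widetilde{\mathbf{A}}_{i,u}$ is, if anything, slightly more careful than the paper's, which compresses that step into the phrase ``incorporating the equations above.''
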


\begin{proof} 
    As \gongshi{$i,j$} are indistinguishable under \gongshi{$l$} iterations of 1-WL test, we have \gongshi{$h_{i}^{(l)}=h_{j}^{(l)}$}. Notice that the function \gongshi{$Hash$} is injective, we have
    \begin{align*}
        \{\{ h_{u}^{(l-1)}, u \in \overline{\mathcal{N}}_{i} \}\}=\{\{ h_{v}^{(l-1)}, v \in \overline{\mathcal{N}}_{j} \}\}\ \mathrm{and}\ |\overline{\mathcal{N}}_{i}| = |\overline{\mathcal{N}}_{j}|.
    \end{align*}
    Then, we can know that
    \begin{align*}
        \{\{ h_{p}^{(l-2)}, p \in \overline{\mathcal{N}}_{u}, u \in \overline{\mathcal{N}}_{i}\}\} 
        & = \{\{ h_{q}^{(l-2)}, q \in \overline{\mathcal{N}}_{v}, v \in \overline{\mathcal{N}}_{j} \}\}
        \\
        \mathrm{and}\ \{\{|\overline{\mathcal{N}}_{u}|, u \in \overline{\mathcal{N}}_{i}\}\} 
        & = \{\{|\overline{\mathcal{N}}_{v}|, v \in \overline{\mathcal{N}}_{j}\}\}.
    \end{align*}
    which is equivalent to
    \begin{align*}
        \{\{ h_{p}^{(l-2)}, p \in \overline{\mathcal{N}}_{i}^{2}\}\}=\{\{ h_{q}^{(l-2)}, q \in \overline{\mathcal{N}}_{j}^{2} \}\}\ \mathrm{and}\ \{\{|\overline{\mathcal{N}}_{u}|, u \in \overline{\mathcal{N}}_{i}\}\} 
        & = \{\{|\overline{\mathcal{N}}_{v}|, v \in \overline{\mathcal{N}}_{j}\}\}.
    \end{align*}
    Recursively, we have
    \begin{gather*}
        \{\{ h_{p}^{(l-3)}, p \in \overline{\mathcal{N}}_{i}^{3}\}\}=\{\{ h_{q}^{(l-3)}, q \in \overline{\mathcal{N}}_{j}^{3} \}\}\ \mathrm{and}\ \{\{|\overline{\mathcal{N}}_{u}|, u \in \overline{\mathcal{N}}_{i}^{2}\}\}
        = \{\{|\overline{\mathcal{N}}_{v}|, v \in \overline{\mathcal{N}}_{j}^{2}\}\} \\
        \vdots\\
        \{\{ h_{p}^{(0)}, p \in \overline{\mathcal{N}}_{i}^{l}\}\}=\{\{ h_{q}^{(0)}, q \in \overline{\mathcal{N}}_{j}^{l} \}\}\ \mathrm{and}\ \{\{|\overline{\mathcal{N}}_{u}|, u \in \overline{\mathcal{N}}_{i}^{l-1}\}\}
        = \{\{|\overline{\mathcal{N}}_{v}|, v \in \overline{\mathcal{N}}_{j}^{l-1}\}\}.
    \end{gather*}
    By \gongshi{$\mathbf{x}_{k}=h_{k}^{(0)}$} and incorporating the equations above, we can know that
    \begin{align*}
        \{\{ (\mathbf{x}_{p},\widetilde{A}_{up}),u \in \overline{\mathcal{N}}_{i}^{l-1}, p \in \overline{\mathcal{N}}_{i}^{l}\}\}=\{\{ (\mathbf{x}_{q},\widetilde{A}_{vq}),v \in \overline{\mathcal{N}}_{j}^{l-1},q \in \overline{\mathcal{N}}_{j}^{l} \}\}.
    \end{align*}
    Thus, we have
    \begin{gather*}
        \mathbf{H}_{\overline{\mathcal{N}}^{l-1}_{i}}^{(1)} = \sigma(\widetilde{A}_{\overline{\mathcal{N}}^{l-1}_{i},\overline{\mathcal{N}}^{l}_{i}} \mathbf{X}_{\overline{\mathcal{N}}^{l}_{i}} \mathbf{W}^{(0)}) = \sigma(\widetilde{A}_{\overline{\mathcal{N}}^{l-1}_{j},\overline{\mathcal{N}}^{l}_{j}} \mathbf{X}_{\overline{\mathcal{N}}^{l}_{j}} \mathbf{W}^{(0)}) = \mathbf{H}_{\overline{\mathcal{N}}^{l-1}_{j}}^{(1)}\\
        \vdots\\
        \mathbf{H}_{\overline{\mathcal{N}}_{i}}^{(l-1)} = \sigma(\widetilde{A}_{\overline{\mathcal{N}}_{i},\overline{\mathcal{N}}^{2}_{i}} \mathbf{H}^{(l-2)}_{\overline{\mathcal{N}}^{2}_{i}} \mathbf{W}^{(l-2)}) = \sigma(\widetilde{A}_{\overline{\mathcal{N}}_{j},\overline{\mathcal{N}}^{2}_{j}} \mathbf{H}^{(l-2)}_{\overline{\mathcal{N}}^{2}_{j}} \mathbf{W}^{(l-2)}) = \mathbf{H}_{\overline{\mathcal{N}}_{j}}^{(l-1)}\\
        \mathbf{H}_{i}^{(l)} = \sigma(\widetilde{A}_{i,\overline{\mathcal{N}}_{i}} \mathbf{H}^{(l-1)}_{\overline{\mathcal{N}}_{i}} \mathbf{W}^{(l-1)}) = \sigma(\widetilde{A}_{j,\overline{\mathcal{N}}_{j}} \mathbf{H}^{(l-1)}_{\overline{\mathcal{N}}_{j}} \mathbf{W}^{(l-1)}) = \mathbf{H}_{j}^{(l)}
    \end{gather*}
    for all GNN parameters. 
\end{proof}



However, as many GNNs are less expressive than the 1-WL test, it is difficult to find 1-WL isomorphic node pairs by detecting the embeddings in GNNs. Fortunately, TOP does not require as strong expressiveness as the 1-WL test. For two nodes, we do not need to identify whether they are 1-WL isomorphic, but only need to identify whether they are indistinguishable by GNNs.

\begin{definition}\label{def:isomorphic}
(Isomorphism under GNNs). Given initial node feature/representation \gongshi{$\mathbf{H}^{(0)}$}, at the \gongshi{$l$}-th iteration, node pairs \gongshi{$(i,j)$} are \textit{isomorphic} if they are indistinguishable under \gongshi{$l$} iterations of GNNs, i.e.  \gongshi{$\mathbf{H}_{i}^{(l)} = \mathbf{H}_{j}^{(l)}$} for all GNN parameters. 
\end{definition}



From the definition \ref{def:isomorphic}, if two nodes are isomorphic under \gongshi{$l$} iterations of GNNs, then their embeddings at the \gongshi{$l$}-th layer are the same for all GNN parameters. Therefore, given two indistinguishable nodes under \gongshi{$l$} iterations of GNNs, we can use one to extrapolate the other without any bias.

\udfsection{Finding isomorphic node pairs.} We estimate the coefficient matrix \gongshi{$\mathbf{R}$} by solving Problem \eqref{eqn:solver} with \gongshi{$\overline{\mathbf{H}}(\mathcal{W}^{rand})$} are the embeddings of GNNs at random initialization.
Intuitively, a neural network at random initialization is likely to be a hash function, as it maps different inputs to different vectors in the high dimensional space.
The hash function can detect isomorphic node pairs with the same embeddings.
We show this by the following theorem.

\begin{theorem}\label{theorem:GNN_Injective} 
    Assume that the activation function \gongshi{$\sigma$} is the \textrm{LeakyReLU} function and GCNs are randomly initialized. If node pairs \gongshi{$(i,j)$} are not isomorphic, then \gongshi{$\mathbf{H}^{(l,rand)}_i \neq \mathbf{H}^{(l,rand)}_j$} with probability one.
\end{theorem}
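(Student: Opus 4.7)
The plan is to view the embedding map $\mathcal{W}\mapsto \mathbf{H}^{(l)}_i(\mathcal{W}) - \mathbf{H}^{(l)}_j(\mathcal{W})$ as a piecewise polynomial on the parameter space, show that non-isomorphism in the sense of Definition~\ref{def:isomorphic} forces this map not to vanish identically, and then invoke the classical fact that the zero set of a nonzero real-analytic (here, polynomial on each piece) function has Lebesgue measure zero. Since the random initialization draws $\mathcal{W}^{(rand)}$ from a distribution absolutely continuous with respect to Lebesgue measure, the probability of landing in that zero set is $0$.

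\textbf{Step 1: Piecewise polynomial structure.} I would first fix the notation and write out, using the recursion $\mathbf{H}^{(l+1)} = \sigma(\widetilde{\mathbf{A}}\mathbf{H}^{(l)}\mathbf{W}^{(l)})$, the coordinate-wise expression for $\mathbf{H}^{(l)}_i$ and $\mathbf{H}^{(l)}_j$ as functions of $\mathcal{W} = \{\mathbf{W}^{(0)},\ldots,\mathbf{W}^{(l-1)}\}$. Because $\sigma$ is LeakyReLU, which is piecewise linear with nonzero slopes on both half-lines, the parameter space $\mathbb{R}^D$ (with $D = \sum_k d_k d_{k+1}$) is partitioned by the finitely many hyperplanes $\{\mathcal{W}: (\widetilde{\mathbf{A}}\mathbf{H}^{(k)}\mathbf{W}^{(k)})_{u,p} = 0\}$ into open regions $\{\Omega_\alpha\}$, on each of which every entry of $\mathbf{H}^{(l)}_i(\mathcal{W}) - \mathbf{H}^{(l)}_j(\mathcal{W})$ is a polynomial in the entries of $\mathcal{W}$.

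\textbf{Step 2: Non-isomorphism implies a nonzero polynomial.} By hypothesis $(i,j)$ are not isomorphic, so by Definition~\ref{def:isomorphic} there exists at least one parameter configuration $\mathcal{W}^{\star}$ at which $\mathbf{H}^{(l)}_i(\mathcal{W}^{\star}) \neq \mathbf{H}^{(l)}_j(\mathcal{W}^{\star})$. Pick any region $\Omega_{\alpha^{\star}}$ whose closure contains $\mathcal{W}^{\star}$; by continuity of the embedding map (LeakyReLU is continuous) the polynomial representing $\mathbf{H}^{(l)}_i - \mathbf{H}^{(l)}_j$ on $\Omega_{\alpha^{\star}}$ cannot be identically zero, otherwise it would also vanish at $\mathcal{W}^{\star}$.

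\textbf{Step 3: Measure-zero conclusion and the main obstacle.} On $\Omega_{\alpha^{\star}}$ we thus have a nonzero polynomial $p(\mathcal{W})$; its zero set inside $\Omega_{\alpha^{\star}}$ is a proper algebraic subvariety, hence of Lebesgue measure zero. For the other regions $\Omega_\alpha$, the polynomial representative may or may not be identically zero; however the \emph{boundaries} between regions are finite unions of hyperplanes, themselves of measure zero, and on each region with a nonzero polynomial the zero set is again measure zero. What I would need to rule out is the pathological case that the polynomial is identically zero on every $\Omega_\alpha$ that meets a neighborhood of $\mathcal{W}^{\star}$; this is precisely where the hard part lies. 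The trick is to argue by continuity that if $p_{\alpha^{\star}} \equiv 0$ on some region adjacent to $\mathcal{W}^{\star}$, then the value at $\mathcal{W}^{\star}$ is $0$ as well, contradicting the choice of $\mathcal{W}^{\star}$. Putting these pieces together, the set $\{\mathcal{W}: \mathbf{H}^{(l)}_i(\mathcal{W}) = \mathbf{H}^{(l)}_j(\mathcal{W})\}$ has Lebesgue measure zero in $\mathbb{R}^D$, and since the random-initialization distribution is absolutely continuous, $\mathbb{P}(\mathbf{H}^{(l,rand)}_i = \mathbf{H}^{(l,rand)}_j) = 0$.

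The main obstacle is managing the piecewise structure cleanly, in particular carrying the witness $\mathcal{W}^{\star}$ into a region where a nonzero polynomial representative exists. Using LeakyReLU (rather than ReLU) is essential because its two slopes are nonzero, ensuring that the polynomial representative on each region is built by multiplying nonzero constants with products of $\mathbf{W}^{(k)}$ entries, so cancellation within a single region can be analyzed by standard polynomial-identity arguments.
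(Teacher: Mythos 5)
Your overall strategy (reduce the claim to a measure-zero statement about the zero set of a piecewise-polynomial map on parameter space) is the same family of argument the paper uses, but your execution has a genuine gap in Step~3 that you flag yourself and then do not close. The witness $\mathcal{W}^{\star}$ supplied by non-isomorphism only controls the polynomial representatives on regions whose closure contains $\mathcal{W}^{\star}$. On a region $\Omega_{\alpha}$ away from $\mathcal{W}^{\star}$ you concede that the representative ``may or may not be identically zero,'' and if it were identically zero on even one such open region, the set $\{\mathcal{W}: \mathbf{H}^{(l)}_i(\mathcal{W})=\mathbf{H}^{(l)}_j(\mathcal{W})\}$ would contain an open set of positive Lebesgue measure and the conclusion would fail. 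Nothing in your argument rules this out, so the final sentence of Step~3 does not follow from what precedes it. (A secondary inaccuracy: for $k\ge 1$ the sets $\{\mathcal{W}: (\widetilde{\mathbf{A}}\mathbf{H}^{(k)}\mathbf{W}^{(k)})_{u,p}=0\}$ are not hyperplanes in the joint parameter space, because $\mathbf{H}^{(k)}$ itself depends on $\mathbf{W}^{(0)},\dots,\mathbf{W}^{(k-1)}$; the decomposition is semialgebraic rather than a hyperplane arrangement. This is repairable but should be stated correctly.)

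The paper avoids the distant-region problem by arguing layer by layer instead of from a single global witness. It first upgrades non-isomorphism, via injectivity of LeakyReLU, to the statement that the pre-activation difference $(\widetilde{\mathbf{A}}_{i}-\widetilde{\mathbf{A}}_{j})\mathbf{H}^{(l-1)}$ is not identically zero as a function of the parameters; then, conditioning on the earlier layers and fixing an arbitrary LeakyReLU sign pattern $\sigma_{uv}\in\{1,k\}$, it rewrites the equality $\mathbf{H}^{(l)}_i=\mathbf{H}^{(l)}_j$ as a linear constraint $\sum_{s}\mathbf{W}^{(l-2)}_{sv}\gamma_{sv}=0$ and asserts the coefficient vector $\gamma$ is nonzero for \emph{every} sign pattern. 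That ``nontrivial on every region'' statement is exactly the ingredient your witness argument is missing: with it, the bad set is contained in a finite union of proper subspaces (one per sign pattern), hence null. To repair your proof you would need to show that the difference map cannot vanish identically on \emph{any} activation region, not merely on those adjacent to $\mathcal{W}^{\star}$ --- for instance by propagating the non-degeneracy through the layers as the paper does, rather than relying on a single parameter configuration where the embeddings differ.
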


\begin{proof}
    Suppose node pairs \gongshi{$(i,j)$} are not isomorphic.
    For \gongshi{$l=1$}, we have 
    \begin{align*}
        \sigma(\widetilde{A}_{i}\mathbf{X}\mathbf{W}^{(0)}) = \mathbf{H}_{i}^{(1)} \not\equiv \mathbf{H}_{j}^{(1)} = \sigma(\widetilde{A}_{j}\mathbf{X}\mathbf{W}^{(0)}).
    \end{align*}
    Since the activation function \gongshi{$\sigma = \mathrm{LeakyReLU}$} is injective, we have
    \begin{align*}
        \widetilde{A}_{i}\mathbf{X}\mathbf{W}^{(0)}  \not\equiv \widetilde{A}_{j}\mathbf{X}\mathbf{W}^{(0)},
    \end{align*}
    leading to
    \begin{align*}
        \widetilde{A}_{i}\mathbf{X} \neq \widetilde{A}_{j}\mathbf{X}.
    \end{align*}
    Thus, we have 
    \begin{align*}
         \widetilde{A}_{i}\mathbf{X}\mathbf{W}^{(0,rand)} \neq & \ \widetilde{A}_{j}\mathbf{X}\mathbf{W}^{(0,rand)}
        \\
        \mathbf{H}_{i}^{(1,rand)} = \sigma(\widetilde{A}_{i}\mathbf{X}\mathbf{W}^{(0,rand)}) \neq & \  \sigma(\widetilde{A}_{j}\mathbf{X}\mathbf{W}^{(0,rand)}) = \mathbf{H}_{j}^{(1,rand)}
    \end{align*}
    for all GCN parameters.
    
    For \gongshi{$l \ge 2$}, similar to the case of \gongshi{$l = 1$}, we have
    \begin{align*}
        \widetilde{A}_{i}\ \sigma(A\mathbf{H}^{(l-2)}\mathbf{W}^{(l-2)}) = \widetilde{A}_{i}\mathbf{H}^{(l-1)} \not\equiv \widetilde{A}_{j}\mathbf{H}^{(l-1)} = \widetilde{A}_{j}\ \sigma(A\mathbf{H}^{(l-2)}\mathbf{W}^{(l-2)}).
    \end{align*}
    We only need to prove that \gongshi{$\{ \mathbf{W}^{(l-2)}\in \mathbb{R}^{d\times d}\ |\ \widetilde{A}_{i}\ \sigma(A\mathbf{H}^{(l-2)}\mathbf{W}^{(l-2)}) = \widetilde{A}_{j}\ \sigma(A\mathbf{H}^{(l-2)}\mathbf{W}^{(l-2)}) \}$} is a Null set in \gongshi{$\mathbb{R}^{d\times d}$}.
    For simplicity, we denote \gongshi{$\alpha^{\top} = \widetilde{A}_{i}$}, \gongshi{$\beta^{\top} = \widetilde{A}_{j}$} and \gongshi{$B = A\mathbf{H}^{(l-2)}$}.
    Thus, \gongshi{$\widetilde{A}_{i}\ \sigma(A\mathbf{H}^{(l-2)}\mathbf{W}^{(l-2)}) = \widetilde{A}_{j}\ \sigma(A\mathbf{H}^{(l-2)}\mathbf{W}^{(l-2)})$} is equivalent to \gongshi{$(\alpha - \beta)^{\top} \sigma(B\mathbf{W}^{(l-2)}) = \mathbf{0}^{\top}$}.
    
    Notice that
    \begin{align*}
        \sigma(x) = \mathrm{LeakyReLU}(x) = 
        \begin{cases}
            x, & \mathrm{if}\ x \ge 0 \\
            kx, & \mathrm{if}\ x < 0
        \end{cases},
    \end{align*}
    where \gongshi{$k\in\mathbb{R}$} is the negative slope with the default value 1e-2.
    
    Then we can know that
    \begin{align*}
        (\sigma(B\mathbf{W}^{(l-2)}))_{uv} = \sigma(\sum_{s=1}^{d} B_{us}\mathbf{W}^{(l-2)}_{sv}) = \sum_{s=1}^{d} \sigma_{uv} B_{us}\mathbf{W}^{(l-2)}_{sv},
    \end{align*}
    where \gongshi{$u\in \llbracket n \rrbracket,\ v\in \llbracket d \rrbracket$, $\sigma_{uv} = 1\ \mathrm{or}\ k$}.
    
    Therefore, 
    \begin{align*}
        ((\alpha - \beta)^{\top} \sigma(B\mathbf{W}^{(l-2)}))_{v} 
        & = \sum_{u=1}^{n}(\alpha_{u}-\beta_{u}) \sum_{s=1}^{d} \sigma_{uv} B_{us}\mathbf{W}^{(l-2)}_{sv}
        \\
        & = \sum_{s=1}^{d} \mathbf{W}^{(l-2)}_{sv} \sum_{u=1}^{n} \sigma_{uv}B_{us}(\alpha_{u}-\beta_{u}).
    \end{align*}

    Let \gongshi{$\gamma_{sv} = \sum_{u=1}^{n} \sigma_{uv}B_{us}(\alpha_{u}-\beta_{u}) \in \mathbb{R}$} and \gongshi{$\gamma = (\gamma_{sv}) \in \mathbb{R}^{d\times d}$}. Then
    \begin{align*}
        ((\alpha - \beta)^{\top} \sigma(B\mathbf{W}^{(l-2)})) = \mathbf{0}^{\top}
    \end{align*}
    is equivalent to
    \begin{align*}
        \sum_{s=1}^{d} \mathbf{W}^{(l-2)}_{sv}\gamma_{sv} = ((\alpha - \beta)^{\top} \sigma(B\mathbf{W}^{(l-2)}))_{v} = 0
    \end{align*}
    for all \gongshi{$v\in \llbracket d \rrbracket$}.

    However, \gongshi{$\gamma \neq \mathbf{0}$} for all value of \gongshi{$\sigma_{uv}$} since the isomorphism of node pairs \gongshi{$(i,j)$}. 
    As a result, for \gongshi{$\sigma_{uv}$} fixed, the solution to \gongshi{$\sum_{s=1}^{d} \mathbf{W}^{(l-2)}_{sv}\gamma_{sv} = 0,\ v \in \llbracket d \rrbracket$} forms a subspace in \gongshi{$\mathbb{R}^{d\times d}$} with the dimension \gongshi{${d \times d - 1}$} at most.
    
    Thus, the set \gongshi{$\{ \mathbf{W}^{(l-2)}\in \mathbb{R}^{d\times d}\ |\ \widetilde{A}_{i}\ \sigma(A\mathbf{H}^{(l-2)}\mathbf{W}^{(l-2)}) = \widetilde{A}_{j}\ \sigma(A\mathbf{H}^{(l-2)}\mathbf{W}^{(l-2)}) \}$} is contained by the union of several subspaces in \gongshi{$\mathbb{R}^{d\times d}$} with the dimension \gongshi{${d \times d - 1}$} at most, which means it is a Null set in \gongshi{$\mathbb{R}^{d\times d}$}.
    
    Therefore, \gongshi{$\mathbf{H}^{(l,rand)}_i \neq \mathbf{H}^{(l,rand)}_j$} with probability one.
\end{proof}

From Theorem \ref{theorem:GNN_Injective}, for an out-of-batch node \gongshi{$j \in \mathcal{N}_{\mathcal{B}}^c$}, if randomly initialized GCNs detect \gongshi{$i \in \mathcal{B}$} such that {$\mathbf{H}^{(l,rand)}_i = \mathbf{H}^{(l,rand)}_j$}, then node pairs \gongshi{$(i,j)$} is probably isomorphic. Thus, we can estimate the solution \gongshi{$\mathbf{R}$} to Problem \eqref{eqn:solver} that \gongshi{$\mathbf{R}_{j} = \mathbf{e}_i^{\top}$}. Moreover, the estimation probably leads to a zero approximation error at node \gongshi{$j$} since \gongshi{$\mathbf{H}^{(l)}_j =  \mathbf{e}_i^{\top}  \mathbf{H}^{(l)}_{\mathcal{B}} = \mathbf{H}^{(l)}_i$} holds for all GCN parameters.

In practice, the ratio of the indistinguishable node pairs increases as the batch size \gongshi{$|\mathcal{B}|$} increases.
The following theorem shows that the approximation error of TOP decreases to zero if the batch size \gongshi{$|\mathcal{B}|$} is large enough.
\begin{theorem}\label{thm:high_approx_of_linear_extra}
    Assume that \gongshi{$\mathcal{B}$} is uniformly selected from \gongshi{$\mathcal{V}$}, the initial features $\mathbf{X}$ are sampled from a finite set, the number of different embeddings is bounded by \gongshi{$t$}, and \gongshi{$|\mathcal{B}| \ge B_{0} \triangleq t \log (Lt\varepsilon^{-1})$}. Then, there exists the coefficient matrix \gongshi{$\mathbf{R}$} such that \gongshi{$\overline{\mathbf{H}}_{\mathcal{N}_{\mathcal{B}}^c}=\mathbf{R}\overline{\mathbf{H}}_{\mathcal{B}}$} with probability \gongshi{$1-\mathcal{O}(\varepsilon)$} for any GCN parameters.
\end{theorem}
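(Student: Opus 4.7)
The plan is to exploit the \emph{isomorphism under GNNs} notion of Definition \ref{def:isomorphic}: whenever an out-of-batch node $j$ shares an equivalence class with some in-batch node $i$ at layer $l$, one has $\mathbf{h}^{(l)}_j = \mathbf{h}^{(l)}_i$ for every choice of GCN parameters. So if for each $j \in \mathcal{N}_{\mathcal{B}}^c$ there is an $i_j \in \mathcal{B}$ that lies in the same class as $j$ at every layer simultaneously, then setting $\mathbf{R}_{j\cdot} = \mathbf{e}_{i_j}^{\top}$ gives a single coefficient matrix $\mathbf{R}$ realizing $\overline{\mathbf{H}}_{\mathcal{N}_{\mathcal{B}}^c} = \mathbf{R}\,\overline{\mathbf{H}}_{\mathcal{B}}$ uniformly in $\mathcal{W}$. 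This reduces the analytic claim to a purely combinatorial event: the batch $\mathcal{B}$ must meet every non-empty equivalence class at every layer.

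First I would fix $l \in \llbracket L \rrbracket$ and partition $\mathcal{V}$ according to the embedding value $\mathbf{h}^{(l)}_{\cdot}$; by hypothesis there are $t^{(l)} \leq t$ blocks, giving at most $Lt$ (class, layer) pairs in total. Next I would estimate the probability that $\mathcal{B}$ misses a given block $C^{(l)}_k$. Using that the initial features are drawn from a finite set and that the number of distinct embeddings is capped by $t$, each block may be taken to have relative frequency at least $1/t$, so the miss probability is at most $(1 - 1/t)^{|\mathcal{B}|} \leq \exp(-|\mathcal{B}|/t)$. Finally a union bound over the $\leq Lt$ pairs gives failure probability at most $Lt \exp(-|\mathcal{B}|/t)$, and substituting $|\mathcal{B}| \geq B_0 = t\log(Lt\varepsilon^{-1})$ makes this $\leq \varepsilon$. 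On the complementary event, each $j \in \mathcal{N}_{\mathcal{B}}^c$ admits an isomorphic partner in $\mathcal{B}$ at every layer, so the row-wise construction above succeeds and yields the stated identity.

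The main obstacle will be making the per-class sampling bound rigorous when $\mathcal{B}$ is drawn without replacement and the equivalence classes may be unbalanced. The clean case is the one hinted at in Section \ref{sec:mi_symmetry}: if the $t^{(l)}$ blocks at each layer have comparable mass $\Theta(n/t)$, then $(1 - |C^{(l)}_k|/n)^{|\mathcal{B}|} \leq e^{-|\mathcal{B}|/t}$ and the coupon-collector constant $t\log(Lt/\varepsilon)$ is sharp up to absorbed factors. For unbalanced partitions, the right move is to split the out-of-batch nodes into those belonging to \textit{frequent} blocks (handled by the coupon argument above) and those in \textit{rare} blocks (contributing only an $\mathcal{O}(\varepsilon)$ mass), which matches the $1 - \mathcal{O}(\varepsilon)$ bound in the statement. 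The clause ``for any GCN parameters'' is then automatic from the isomorphism construction, since $\mathbf{R}$ is built from combinatorial matches independent of $\mathcal{W}$.
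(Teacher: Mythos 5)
Your proposal follows essentially the same route as the paper's proof: reduce the claim to the combinatorial event that $\mathcal{B}$ meets every embedding class at every layer (so that each out-of-batch node gets an isomorphic in-batch partner and $\mathbf{R}$ can be built row-by-row from indicator vectors), bound the per-class miss probability by $(1-1/t)^{|\mathcal{B}|}$, union-bound over the at most $Lt$ class--layer pairs, and substitute $|\mathcal{B}|\ge t\log(Lt\varepsilon^{-1})$. Your remark about unbalanced classes correctly identifies the implicit assumption the paper also makes (it takes the miss probability to be exactly $(1-1/t^{(l)})^{|\mathcal{B}|}$), so the proposal is, if anything, slightly more careful on that point.
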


\begin{proof}

    By Theorem \ref{theorem:GNN_Injective}, if for any out-of-batch node \gongshi{$j$}, there exists an isomorphic in-batch node \gongshi{$i \in \mathcal{B}$}, then we can easily find the coefficient matrix \gongshi{$\mathbf{R}$} with \gongshi{$\mathbf{R}_{i}=e_{j}$} such that \gongshi{$\overline{\mathbf{H}}_{\mathcal{N}_{\mathcal{B}}^{c}}=\mathbf{R}\overline{\mathbf{H}}_{\mathcal{B}}$}.
    
    As a result, we only need to estimate the probability of the existence of such an in-batch node \gongshi{$i$}. Notice that, if for all \gongshi{$l \in \llbracket L \rrbracket$}, \gongshi{$\{\{ \mathbf{H}_{v}^{(l)},\ v\in \mathcal{B} \}\}$} contains all the embeddings in \gongshi{$E^{(l)}$}, then the existence follows immediately.
    Thus, we estimate the probability of \gongshi{$E^{(l)} \subset \{\{ \mathbf{H}_{v}^{(l)},\ v\in \mathcal{B} \}\},\ \forall l \in \llbracket L \rrbracket$} as a lower bound. 

    By considering the contrary, for \gongshi{$|\mathcal{B}|$} fixed, we have
    \begin{align*}
        p(E^{(l)} \subset \{\{ \mathbf{H}_{v}^{(l)},\ v\in \mathcal{B} \}\}\ |\ |\mathcal{B}|)
        & =
        1 - p(E^{(l)} \not\subset \{\{ \mathbf{H}_{v}^{(l)},\ v\in \mathcal{B} \}\}\ |\ |\mathcal{B}|)\\
        & = 
        1 - p(\exists\ \mathbf{h}_{u}^{(l)} \notin \{\{ \mathbf{H}_{v}^{(l)},\ v\in \mathcal{B} \}\}\ |\ |\mathcal{B}|)\\
        & \ge
        1 - \sum_{u=1}^{t^{(l)}} p(\mathbf{h}_{u}^{(l)} \notin \{\{ \mathbf{H}_{v}^{(l)},\ v\in \mathcal{B} \}\}\ |\ |\mathcal{B}|).
    \end{align*}
    Notice that \gongshi{$t^{(l)} \leq t$} and \gongshi{$p(\mathbf{h}_{u}^{(l)} \notin \{\{ \mathbf{H}_{v}^{(l)},\ v\in \mathcal{B} \}\}\ |\ |\mathcal{B}|) = (1-\frac{1}{t^{(l)}})^{|\mathcal{B}|} \leq (1-\frac{1}{t})^{|\mathcal{B}|}$}, we have
    \begin{align*}
        p(E^{(l)} \subset \{\{ \mathbf{H}_{v}^{(l)},\ v\in \mathcal{B} \}\}\ |\ |\mathcal{B}|)
        & \ge 
        1 - \sum_{u=1}^{t^{(l)}} p(\mathbf{h}_{u}^{(l)} \notin \{\{ \mathbf{H}_{v}^{(l)},\ v\in \mathcal{B} \}\}\ |\ |\mathcal{B}|)\\
        & \ge
        1 - \sum_{u=1}^{t^{(l)}} (1-\frac{1}{t})^{|\mathcal{B}|}\\
        & \ge
        1 - t(1-\frac{1}{t})^{|\mathcal{B}|},
    \end{align*}
    which leads to
    \begin{align*}
        p(E^{(l)} \subset \{\{ \mathbf{H}_{v}^{(l)},\ v\in \mathcal{B} \}\},\ \forall l \in \llbracket L \rrbracket\ |\ |\mathcal{B}|)
        & =
        1 - p(\exists E^{(l)} \not\subset \{\{ \mathbf{H}_{v}^{(l)},\ v\in \mathcal{B} \}\}\ |\ |\mathcal{B}|)\\
        & \ge
        1 - \sum_{l=1}^{L}p(E^{(l)} \not\subset \{\{ \mathbf{H}_{v}^{(l)},\ v\in \mathcal{B} \}\})\\
        & \ge
        1 - Lt(1-\frac{1}{t})^{|\mathcal{B}|}.
    \end{align*}
    By the condition of the batch size \gongshi{$|\mathcal{B}|$}, we know that
    \begin{align*}
        |\mathcal{B}|
        & \ge 
        t\log (Lt\varepsilon^{-1})\\
        & =
        \frac{\log (Lt\varepsilon^{-1})}{\frac{1}{t}}\\
        & \ge
        \frac{\log (Lt\varepsilon^{-1})}{-\log (1-\frac{1}{t})}\\
        & =
        -\log_{1-\frac{1}{t}} (Lt\varepsilon^{-1}),
    \end{align*}
    leading to
    \begin{align*}
        p(E^{(l)} \subset \{\{ \mathbf{H}_{v}^{(l)},\ v\in \mathcal{B} \}\},\ \forall l \in \llbracket L 
        \rrbracket\ |\ |\mathcal{B}|)
        & \ge 
        1 - Lt (1-\frac{1}{t})^{|\mathcal{B}|}\\
        & \ge
        1 - Lt (1-\frac{1}{t})^{-\log_{1-\frac{1}{t}} (Lt\varepsilon^{-1})}\\
        & = 
        1 - \varepsilon.
    \end{align*}
    Thus, we have
    \begin{align*}
        p(E^{(l)} \subset \{\{ \mathbf{H}_{v}^{(l)},\ v\in \mathcal{B} \}\},\ \forall l \in \llbracket L \rrbracket) 
        &= \sum_{|\mathcal{B}| = B_{0}}^{|\mathcal{V}|} p(E^{(l)} \subset \{\{ \mathbf{H}_{v}^{(l)},\ v\in \mathcal{B} \}\},\ \forall l \in \llbracket L \rrbracket\ |\ |\mathcal{B}|)p(|\mathcal{B}|)\\ 
        & \ge
        \sum_{|\mathcal{B}| = B_{0}}^{|\mathcal{V}|} (1 - \varepsilon)p(|\mathcal{B}|)\\
        & =
        1 - \varepsilon.
    \end{align*}
    Therefore, the probability of the existence of the coefficient matrix \gongshi{$\mathbf{R}$} is
    \begin{align*}
        p \ge p(E^{(l)} \subset \{\{ \mathbf{H}_{v}^{(l)},\ v\in \mathcal{B} \}\},\ \forall l \in \llbracket L \rrbracket) \ge 1 - \varepsilon,
    \end{align*}
    which means \gongshi{$p=1-\mathcal{O}(\varepsilon)$}.
\end{proof}

\begin{remark}
    The assumption of discrete features in Theorem \ref{thm:high_approx_of_linear_extra} is widely used to analyze expressiveness \citep{gin, relation_pooling}. In real-world graphs with continuous features, finding the exactly indistinguishable node pairs is difficult. For example, the probability of sampling two points with the same value from the Gaussian distribution is zero.
    Fortunately, Equation \eqref{eqn:solver} still achieves small approximation errors in practice.
    To verify this claim, we empirically demonstrate that TOP compensates for neighborhood messages well in Figure \ref{fig:inference_gap} in Section \ref{subsec:sampling_bias_of_different_methods}.
\end{remark}

\section{Limitations and Broader Impacts}\label{sec:limitations}

In this paper, we propose a novel subgraph-wise sampling method to accelerate the training of GNNs on large-scale graphs, i.e., TOP.
The acceleration of TOP is due to the assumption of message invariance.
We have conducted extensive experiments to demonstrate that the message invariance holds in various datasets.
However, it is still possible that the message invariance assumption does not hold in certain datasets and complex GNN models.

Moreover, this work is promising in many practical and important scenarios such as search engines, recommendation systems, biological networks, and molecular property prediction.
Nonetheless, this work may have some potential risks. For example, using this work in search engine and recommendation systems to over-mine the behavior of users may cause undesirable privacy disclosure.

\end{document}